\documentclass[10pt]{article} 
\usepackage[accepted]{tmlr}

\usepackage{amsmath,amsfonts,bm}

\def\eqref#1{equation~\ref{#1}}

\def\1{\bm{1}}

\def\vzero{{\bm{0}}}
\def\vone{{\bm{1}}}

\def\va{{\bm{a}}}
\def\vb{{\bm{b}}}
\def\vc{{\bm{c}}}

\def\vg{{\bm{g}}}

\def\vm{{\bm{m}}}

\def\vr{{\bm{r}}}

\def\vu{{\bm{u}}}

\def\vx{{\bm{x}}}
\def\vy{{\bm{y}}}
\def\vz{{\bm{z}}}

\DeclareMathAlphabet{\mathsfit}{\encodingdefault}{\sfdefault}{m}{sl}
\SetMathAlphabet{\mathsfit}{bold}{\encodingdefault}{\sfdefault}{bx}{n}

\usepackage{colortbl}

\usepackage{hyperref}
\usepackage{url}
\usepackage[ruled,vlined,linesnumbered]{algorithm2e}
\usepackage{algpseudocode}
\usepackage{epsfig,epsf,fancybox}
\usepackage{amsmath}
\usepackage{mathrsfs}
\usepackage{amssymb}
\usepackage{pifont}
\usepackage{amsfonts}
\usepackage{graphicx}
\usepackage{color}
\usepackage{boxedminipage}
\usepackage{stmaryrd}
\usepackage{multirow}
\usepackage{booktabs}
\usepackage{relsize}
\usepackage{accents}
\usepackage{natbib}
\usepackage{float} 
\usepackage{breqn}
\usepackage{lineno}

\usepackage{subcaption}
\usepackage{listings}
\newcommand{\vG}{{\mathbf{G}}}

\newcommand{\vI}{{\mathbf{I}}}
\newcommand{\vJ}{{\mathbf{J}}}

\newcommand{\vM}{{\mathbf{M}}}

\newcommand{\vU}{{\mathbf{U}}}

\newcommand{\vW}{{\mathbf{W}}}
\newcommand{\vX}{{\mathbf{X}}}
\newcommand{\vY}{{\mathbf{Y}}}

\newcommand{\cG}{{\mathcal{G}}}

\newcommand{\cQ}{{\mathcal{Q}}}

\newcommand{\EE}{\mathbb{E}} 
\newcommand{\RR}{\mathbb{R}} 
\newcommand{\zz}{^{\top}} 

\newcommand{\cmark}{\ding{51}}
\newcommand{\xmark}{\ding{55}}

\newcommand{\bc}{\begin{center}}
	\newcommand{\ec}{\end{center}}

\newcommand{\bdm}{\begin{displaymath}}
	\newcommand{\edm}{\end{displaymath}}

\newcommand{\beq}{\begin{equation}}
	\newcommand{\eeq}{\end{equation}}

\newcommand{\bfl}{\begin{flushleft}}
	\newcommand{\efl}{\end{flushleft}}

\newcommand{\bt}{\begin{tabbing}}
	\newcommand{\et}{\end{tabbing}}

\newcommand{\beqn}{\begin{eqnarray}}
	\newcommand{\eeqn}{\end{eqnarray}}

\newcommand{\beqs}{\begin{align*}} 
	\newcommand{\eeqs}{\end{align*}}  

\newtheorem{theorem}{Theorem}[section]
\newtheorem{lemma}{Lemma}[section]

\newtheorem{definition}{Definition}[section]
\newtheorem{example}{Example}[section]

\newtheorem{assumption}{Assumption}

\renewcommand{\eqref}[1]{(\ref{#1})}

\newcommand{\revise}[1]{{\color{black}#1}}

\title{LoDAdaC: a unified local training-based decentralized framework with \revise{adaptive gradients} and compressed communication}
\author{\name Wei Liu \email lwdsdqqb@gmail.com \\
    \addr Department of Mathematical Sciences\\
	Rensselaer Polytechnic Institute
	\AND
	\name Anweshit Panda \email pandaa2@rpi.edu \\
	\addr Department of Computer Science\\
	Rensselaer Polytechnic Institute
	\AND
	\name Ujwal Pandey \email pandeu@rpi.edu\\
	\addr Department of Computer Science\\
	Rensselaer Polytechnic Institute
	\AND
	\name Haven Cook \email cookh2@rpi.edu\\
	\addr Department of Computer Science\\
	Rensselaer Polytechnic Institute
	\AND
	\name {George M. Slota} \email slotag@rpi.edu\\
	\addr Department of Computer Science\\
	Rensselaer Polytechnic Institute
	\AND
	\name {Naigang Wang} \email nwang@us.ibm.com \\
	\addr IBM T. J. Watson Research Center
	\AND
	\name Jie Chen \email chen.future.jie@gmail.com \\
	\addr MIT-IBM Watson AI Lab, IBM Research
	\AND
	\name Yangyang Xu\thanks{Corresponding author} \email xuy21@rpi.edu\\
	\addr Department of Mathematical Sciences\\
	Rensselaer Polytechnic Institute
}

\def\month{04}  
\def\year{2026} 
\def\openreview{\url{https://openreview.net/forum?id=0qoy9usvnm}} 

\begin{document}

\maketitle

\begin{abstract}
In the decentralized distributed learning, achieving fast convergence and low communication cost is essential for scalability and high efficiency.  
\revise{Adaptive gradient methods, such as Adam, have demonstrated strong practical performance in deep learning and centralized distributed settings. However, their convergence properties remain largely unexplored in decentralized settings involving multiple local training steps, such as federated learning.}
To address this limitation,  
we propose LoDAdaC, a unified multiple \textbf{Lo}cal Training (MLT) \textbf{D}ecentralized framework with \textbf{Ada}m-type  updates and \textbf{C}ompressed communication (CC). LoDAdaC accommodates a broad class of optimizers for its local adaptive updates, including AMSGrad, Adam, and AdaGrad; it is compatible with standard (possibly biased) compressors such as low-bit quantization and sparsification.  
MLT and CC enable LoDAdaC to achieve multiplied reduction of communication cost, while the technique of adaptive updates enables fast convergence.  
We rigorously prove the combined advantage through complexity analysis. In addition, experiments on image classification and \revise{GPT-style} language model training validate our theoretical findings and show that LoDAdaC significantly outperforms existing decentralized algorithms in terms of convergence speed and communication efficiency. 
\end{abstract}

\section{Introduction}
In decentralized learning, multiple agents collaboratively train a model 
without a central server, by exchanging information exclusively with immediate (a.k.a. one-hop) neighbors. Compared to centralized distributed learning, decentralized learning has better robustness and scalability. However,  
communication cost can become a 
bottleneck in decentralized learning, especially when low-bandwidth or wireless communication is performed. 
This motivates the design of communication-efficient decentralized algorithms.

Two widely adopted strategies to reduce the communication burden in distributed learning are \textit{Compressed Communication (CC)} and \textit{Multiple Local Training (MLT)}. By CC, 
the agents transmit compressed information rather than full-precision one, significantly reducing per-round communication cost. 
Examples include low-bit quantization~\citep{sun2020ultra,wang2018training,bernstein2018signsgd,alistarh2017qsgd}  and sparsification~\citep{koloskova2019decentralized2,stich2018sparsified}. 
CC also provides implicit privacy protection: by transmitting the compressed message, 
agents inherently obscure precise local data information, thus mitigating potential privacy risks~\citep{kairouz2021advances}.
On the other hand,  MLT, which involves performing several local updates 
per communication round, has gained popularity in various distributed learning settings. Prominent examples include local SGD~\citep{haddadpour2019local}, SGD averaging~\citep{zhang2015deep}, and, notably, Federated Averaging (FedAvg)~\citep{mcmahan2017communication}, a widely employed method in federated learning (FL). Empirically, MLT significantly reduces the number of communication rounds required to achieve a target convergence threshold. Moreover, from a privacy perspective, MLT further enhances security by reducing the frequency and amount of sensitive information exchanged among agents, thus limiting potential data leakage~\citep{li2020review,kairouz2021advances}.

Both CC and MLT have been explored in vanilla and momentum SGD \citep{singh2021squarm,sun2022decentralized}, and it is shown in \citep{singh2021squarm} that multiplied reduction of communication can be achieved. However, adaptive (i.e., Adam) stochastic methods \citep{kingma2014adam} exhibit significantly faster convergence than vanilla or momentum SGD on training deep learning models and are now the workhorse for training  language models.  
Hence, it is natural to ask the following question:
\begin{equation}\tag{Q}\label{question}
	\begin{array}{c}
		\text{\emph{Can \emph{CC} and \emph{MLT} be applied in \emph{decentralized adaptive stochastic} methods to simultaneously}} \\
		\text{\emph{achieve multiplied reduction of communication and fast convergence?}} 
	\end{array}
\end{equation}

\subsection{Contributions}


This work provides an affirmative answer to the question \eqref{question}. 
We propose LoDAdaC, a \textbf{Lo}cal training-based \textbf{D}ecentralized framework with \textbf{Ada}ptive gradient updates and \textbf{C}ompressed communication.
Our local update scheme includes 
both the vector and matrix variants of AdaGrad \citep{duchi2011adaptive}, Adam \citep{kingma2014adam}, AMSGrad \citep{reddi2016stochastic}, and the recently proposed Adam-Mini \citep{zhang2024adam}. The integration of CC and MLT enables a multiplied reduction of communication cost while adaptive gradient updates further yield fast convergence. 


A central technical contribution of our work lies in resolving the core analytical challenge introduced by \emph{the interaction of MLT, CC, adaptive gradient updates, and decentralized communication}: their coupling makes it difficult to derive a unified upper bound on the consensus error and 
{stationarity} violation. 
    In particular, local adaptive gradient updates introduce nonlinearity 
	and dynamically varying gradient scaling, which complicate the analysis even in centralized settings~\citep{wang2022communication2}. When combined with decentralized model aggregation, these properties pose significant technical obstacles to convergence analysis.
	Our analysis carefully disentangles the coupling, leading to tight convergence guarantees under mild conditions and offering the first such results for this challenging setting. 
By performing $K$ local updates per communication round and utilizing a compression operator that compresses one unit of message to $1-\eta$ unit with $\eta \in (0,1)$, our algorithm needs a total communication cost of $O(\frac{1-\eta}{K\epsilon^4})$ to produce an $\epsilon$-stationary solution, thus yielding multiplied reduction of communication cost. Notably, the result applies uniformly across different choices of compressors and adaptive updates. 

In addition, we conduct 
numerical experiments on two representative tasks, i.e., image classification and language model training, to validate the effectiveness of LoDAdaC. 
{Though our complexity result has the same order dependence on $\epsilon$ as achieved by existing non-adaptive stochastic methods such as SQuARM-SGD,} our numerical results demonstrate that 
LoDAdaC achieves significant speed up by adaptive gradient updates and significant communication reduction from combining  MLT  and CC. Specifically, our experiments illustrate that:
(i) LoDAdaC equipped with adaptive gradient updates 
significantly \revise{outperforms} the baseline decentralized algorithm SQuARM-SGD (that employs momentum gradient update) in terms of convergence speed; 
(ii) The joint use of MLT and CC reduces the total communication cost 
dramatically, achieving reductions of over 
99\% in some scenarios (e.g., 
see the results yielded by LoDAdaC with $K=50$ and Top-$k$=30\% in Figures~\ref{fig:vol_comp1} and \ref{fig:vol_comp2}), with nearly no sacrifice of 
accuracy.
These empirical findings align closely with our theoretical results. 

\subsection{Problem formulation and technical assumptions}
We consider 
decentralized nonconvex stochastic optimization in the form of
\begin{equation}
	\label{eq:model1}
	\begin{aligned}
		&\min _{\vx \in \mathbb{R}^d}   f(\vx):= \frac{1}{n} \sum_{i=1}^n f_i(\vx) ,\,\,\mathrm{with } \,\,f_i(\vx)=\mathbb{E}_{\xi_i \sim \mathcal{D}_i}\left[F_i\left(\vx, \xi_i\right)\right] .
	\end{aligned}
\end{equation}
Here, $n$ agents, connected via a communication graph $\mathcal{G}$, collectively minimize the objective function $f$ as the average 
of local functions $\{f_i\}$, each of which is defined as an expectation over a data distribution $\mathcal{D}_i$. Each agent $i \in \{1,2,\ldots,n\}$ exclusively accesses its local function $f_i$ and stochastic gradients $\nabla F_i(\vx, \xi_i)$, and collaboration occurs through communication with immediate neighbors.

To perform decentralized computation, each agent $i\in \{1,2,\ldots,n\}$ maintains a local copy  $\vx_i$ of the decision variable $\vx$. 
Let \(\mathbf{X} = [\vx_1, \vx_2, \dots, \vx_n] \in \mathbb{R}^{d \times n}\). 
The problem \eqref{eq:model1} can be equivalently reformulated as
\begin{equation}
	\label{eq:model}
	\begin{aligned}
		&\min _{\vX\in\RR^{d\times n}}   \frac{1}{n} \sum_{i=1}^n f_i(\vx_i),\,\,\text{s.t.}\, \,\vX = \vX\vW,
	\end{aligned}
\end{equation}
where $\mathbf{W}$ is a mixing (a.k.a. gossip) matrix that governs how agents aggregate local information. \revise{Under Assumption \ref{ass:W}(iii) given below, imposing the constraint $\vX=\vX \vW$ is equivalent to requiring $\vx_1=\cdots=\vx_n$, i.e., $\vX$ lies in the consensus subspace.}

Throughout the paper, we make the following standard assumptions.
\begin{assumption}
	\label{ass:problemsetup}
	For each $i\in \{1,2,\ldots,n\}$,
	the function $f_i$ is $L$-smooth, i.e., $\| \nabla f_i(\vx)-$ $\nabla f_i(\vy)\|\leq L\| \vx-\vy \|$, for any $\vx, \vy \in \RR^d$, and
	$f$ is lower bounded, i.e., $f^* := \min _{\vx} f(\vx)>-\infty$.
\end{assumption}

\begin{assumption}
	\label{ass:W}
	For the mixing matrix $\mathbf{W}$, it holds
	(i) $\mathbf{W}$ is doubly stochastic, i.e., $\vW\ge \vzero$, $\mathbf{W} \mathbf{1}=\mathbf{1}$ and $\mathbf{1}^{\top} \mathbf{W}=\mathbf{1}^{\top}$;
	(ii) ${W}_{i j}=0$ if $i$ and $j$ are not neighbors to each other;
	(iii) $\operatorname{Null}(\mathbf{W}-\mathbf{I})=\operatorname{span}\{\mathbf{1}\}$ and $\rho :=\|\mathbf{W}-\mathbf{J}\|_2<1$, where $\mathbf{1}$ is an all-one vector, $\vI$ is the identity matrix, and \(\mathbf{J} = \frac{\mathbf{1 1}^{\top}}{n}\).
\end{assumption} 
\revise{Assumption~\ref{ass:W} encodes the standard structural conditions required for decentralized averaging. 
	First, the doubly stochastic property ensures that the mixing operation preserves the network average. 
	Second, the condition $W_{ij}=0$ for any non-neighboring agents $i$ and $j$ enforces that communication occurs only between neighboring nodes in the underlying graph $\cG$, thereby respecting the locality of the decentralized architecture. 
	Most importantly, the spectral condition $\rho=\|\mathbf W-\vJ\|_2<1$ guarantees contraction toward consensus and serves as the key quantity controlling the consensus-error recursion in our analysis. 
	Intuitively, smaller values of $\rho$ correspond to faster information propagation across the network and hence more rapid agreement among agents. 
	The specific choice of the mixing matrix $\mathbf W$ depends on the communication topology, and several standard constructions have been proposed in the literature \citep{koloskova2019decentralized2,mancino2023decentralized,nedic2018network}. 
	In particular, \citet{xiao2004fast} showed that one can design an optimal mixing matrix that minimizes $\rho$ while satisfying the constraints in Assumption~\ref{ass:W}.}

\subsection{Notations and definitions}\label{sec:notations}
We define \([T] = \{0,1,\ldots,T-1\}\) and use \(\|\cdot\|\) to denote the Euclidean norm for vectors and the Frobenius norm for matrices. The spectral norm of a matrix \(\mathbf{A}\) is denoted by \(\|\mathbf{A}\|_2\). For two vectors \(\va\) and \(\vb\) of the same dimension, 
\(\frac{\va}{\vb}\) and \(\va \circ \vb\) denote componentwise division and multiplication, respectively, while \(\sqrt{\vc}\) applies the square-root operation elementwise to a nonnegative vector \(\vc\).  
$\mathbf{X}_{\perp} = \mathbf{X}(\mathbf{I}-\mathbf{J})
$ denotes the consensus error matrix and $\overline{\vx}=\frac{1}{n}\vX\vone$ for the average of all local decision variables. 
\(\mathbb{E}_t\) takes the expectation over the random samples 
\(\{\xi_i^t\}_{i\in \{1,2,\ldots,n\}}\) conditional on the 
$t$-th iterate, while \(\mathbb{E}\) takes the full expectation.

\begin{definition}\label{def:com}
	We call \(\cQ\) an $\eta$-compression operator, if it holds 
	$
	\mathbb{E}_\cQ\left[\|\vx-\mathcal{Q}[\vx]\|^2\right] \leq \eta^2\|\vx\|^2
	$
	for some 	 $\eta \in[0,1)$ and all $\vx \in \mathbb{R}^d$.
\end{definition}
\revise{The expectation in the above definition is taken with respect to the internal randomness of the compressor, conditioned on the input $\vx$. For deterministic compressors, such as Top-$k$ sparsification, the inequality holds deterministically. Our analysis relies only on the contractive-error property in Definition 1.1 and does not require additional assumptions such as unbiasedness.}
Examples of $\eta$-compression operators 
include Random-$k$~\citep{stich2018sparsified}, Top-$k$~\citep{aji2017sparse}, and the rescaled quantizations \citep{chen2022efficient}; see more examples in
\citep{chen2022efficient,koloskova2019decentralized2}.  When $\eta=0$, \(\cQ\) simplifies to the identity operator. 

\begin{definition}
	\label{def:sta}
	We say that ${\vX}$ is an $\epsilon$-stationary point, in expectation,  of the decentralized problem~\eqref{eq:model} if $\EE\left[\|\nabla f(\overline{\vx}) \|^2\right] + \EE\left[\frac{1}{n}\|\vX_\perp\|^2 \right] \le \epsilon^2$.
\end{definition}

\revise{This notion jointly controls stationarity of the averaged model and network disagreement; both must be small for decentralized learning to be practically meaningful.}

\section{Related work}
In this section, we review existing works on distributed stochastic gradient methods (SGMs) in either a centralized or a decentralized setting for solving nonconvex problems. Additionally, we review methods developed for distributed learning 
with MLT and CC. 

\subsection{Centralized or decentralized (stochastic) adaptive  gradient methods}\label{sec:rela2}
Adaptive SGMs are among the most popular stochastic algorithms for training nonconvex deep learning models. 
In practice, adaptive SGMs such as AdaGrad \citep{duchi2011adaptive}, Adam \citep{kingma2014adam}, and AMSGrad \citep{reddi2016stochastic} are more effective compared to a nonadaptive SGM.   

Efforts have been made to integrate adaptive gradient updates into distributed optimization.~\citet{hou2018analysis} propose a distributed Adam for convex problems, while \citep{chen2020toward, zhao2022paddlebox} introduce locally adaptive algorithms for centralized distributed training. The compressed centralized distributed Adam variants are explored in \citep{chen2021quantized, chen2022efficient}. A centralized distributed AMSGrad is studied in \citep{li2022distributed}, and a compressed version is presented in \citep{wang2022communication}. The decentralized Adam variant, DADAM, was introduced in~\citep{nazari2022dadam}, providing 
convergence results for both convex and nonconvex problems. However, subsequent analysis by~\citep{chen2023convergence} reveals that DADAM may not converge to a stationary point in nonconvex settings.  To address this limitation, \citep{chen2023convergence,wang2025from,liu2025compressed} propose some other decentralized adaptive gradient methods. 

Despite these advancements, distributed learning with multiple local adaptive gradient updates has been explored only in a centralized setting.~\citet{xie2019local} propose AdaAlter, which employs local adaptive updates on the client side. Similarly,~\citet{reddi2020adaptive} extend FedAvg by incorporating three types of local adaptive gradient updates to improve optimization performance. 
More recently,  FedLADA~\citep{sun2023efficient} introduces momentum-corrected adaptive updates, and  FedAMS, along with its corrected variant FedCAMS~\citep{wang2022communication2}, stabilizes local AMSGrad updates to ensure convergence. 
However, 
decentralized distributed learning with multiple local \emph{adaptive} gradient updates remains unexplored. {Though \citet{gao2020adaptive} attempt to study a decentralized distributed method with multiple local Adam updates, they conduct analysis only to the case without first-order momentum. In addition, their convergence rate results in Corollary~1 and Corollary 2 are obtained by implicitly assuming $\beta_2=0$, i.e., no second-order momentum either.}

\subsection{MLT in distributed learning}
MLT is a simple yet remarkably effective communication-saving strategy in distributed learning, where clients perform several local updates—rather than a single one—between successive communication rounds. A foundational method that employs MLT in centralized distributed learning is FedAvg, with numerous extensions including FedAvg with local momentum~\citep{hsu2019measuring}, server momentum~\citep{sun2024role}, and adaptive FedAvg~\citep{reddi2020adaptive}. 
Recent theoretical advancements have clarified why MLT effectively reduces communication complexity in centralized distributed learning~\citep{kairouz2021advances, li2020review}. These results have been rigorously established across a wide range of local update strategies, including standard SGD~\citep{haddadpour2019local, spiridonoff2021communication, stich2018local, yu2019parallel}, momentum-based methods~\citep{karimireddy2020scaffold, sun2024role}, and adaptive gradient methods~\citep{reddi2020adaptive, xie2019local}.

In decentralized distributed learning, early work has primarily focused on algorithms using simple local SGD updates. For example,~\citet{xing2020decentralized} propose a decentralized federated learning framework for medical applications, operating without a central server in a dynamic peer-to-peer network. Similarly,~\citet{lalitha2019peer} explore decentralized learning using a Bayesian-inspired belief update mechanism over connected networks. Further analyses in~\citep{koloskova2020unified, sun2022decentralized, wu2025effectiveness, li2019communication2} have demonstrated that incorporating MLT with multiple local SGD updates can also reduce communication complexity in a decentralized distributed setting.

\subsection{MLT+CC in distributed learning}
Combining  MLT and CC, while simultaneously retaining their respective benefits, is notably challenging. In a centralized distributed setting, several recent algorithms successfully integrate these strategies, including CompressedScaffnew~\citep{condat2022provably}, {LoCoDL}~\citep{condat2025locodl},  {FedCOM}~\citep{haddadpour2021federated},  {FedPAQ}~\citep{reisizadeh2020fedpaq}, and  {Qsparse-Local-SGD}~\citep{basu2019qsparse}. They leverage the advantage of both MLT and CC, achieving a multiplied reduction of communication complexity. 

Despite these developments,  few algorithms leveraging MLT+CC have been proposed in the context of decentralized distributed learning. Extending theoretical guarantees to the decentralized setting introduces substantial challenges due to the absence of a central coordinator. Complications arise from network topology constraints, the need for peer-to-peer communication,  and heterogeneity in local data and model states. These factors make the convergence analysis significantly more intricate and have historically limited the rigorous understanding of MLT+CC in decentralized settings.

\revise{Among decentralized MLT+CC methods, 
each individual one covers only part of the design space.}
DFedAvgM from~\citep{sun2022decentralized}, {SQuARM-SGD} from~\citep{singh2021squarm},  and LM-DFL from \citep{chen2024communication} are closely related to our method. {DFedAvgM} extends decentralized FedAvg by incorporating momentum-based local updates and CC. However, DFedAvgM {is unable to reduce the order of total communication rounds through MLT and can only mitigate the effect from local variance of stochastic gradients when no momentum is applied. In addition, using CC will hurt the complexity result of DFedAvgM to obtain an $\epsilon$-stationary point unless  
	the compression error is controlled in $O(\epsilon^4)$, which is a too-restrictive assumption.} 
Without relying on such restrictive assumptions, a general convergence result is  established to LM-DFL that incorporates both MLT and CC. However, LM-DFL is also unable to reduce the order of total communication rounds through MLT.
\revise{SQuARM-SGD achieves multiplied communication reduction, but its analysis is tailored to momentum SGD and additionally assumes a symmetric mixing matrix.} 

\revise{Compared to existing decentralized methods combining MLT and CC, our contribution transcends a specific optimizer instantiation by providing a unified algorithmic framework and a generalizable convergence analysis template. Notably, we establish the first rigorous convergence guarantees for federated learning scenarios employing adaptive gradient methods such as Adam. Moreover, our convergence analysis relies exclusively on the contractive-error property of the compression operator, thus naturally extending to accommodate potentially biased compressors, including Top-$k$ sparsification. Lastly, our proof technique explicitly decouples the intricate interactions among adaptive gradient updates, MLT steps, and decentralized communication, effectively addressing the core analytical challenge of simultaneously ensuring stationarity and consensus in decentralized distributed training.} Detailed comparisons are summarized in Table \ref{tab:1}. {We notice that the complexity result of our method is in the same order as that of SQuARM-SGD. However, with adaptive gradient updates, our method is able to achieve significantly faster empirical convergence, in particular for training \revise{GPT-style} language models; see Section~\ref{sec:numerical}.}


{\setlength{\tabcolsep}{4.1pt}
	\renewcommand{\arraystretch}{1.4}
	\begin{table*}[t]
		\small
		\begin{tabular*}{\linewidth}{@{}cccccccccc@{}}
			\hline \textbf{MLT Methods}    & \textbf{CC}   &
			\textbf{AG} 
			&  \textbf{\#Iter} & \textbf{CommCost}   & \textbf{MLTsave}  & \textbf{CCsave} 
			\\
			\hline 
			PD-SGD~\citep{ge2023gradient} 
			& {\xmark}   & {\xmark}   & 
			$\frac{1}{n\epsilon^{4}}$& $\frac{1 }{nK\epsilon^{4}}$ & {\cmark}  & $-$\\
			\hline 
			LSGT~\citep{li2019communication2} 
			& {\xmark}   & {\xmark}  &  $\frac{1}{n\epsilon^{4}}$& $\frac{1 }{nK\epsilon^{4}}$ & {\cmark}  & $-$\\
			\hline 
			DFedAvgM~\citep{sun2022decentralized}  
			& {\cmark}   & {\xmark}    
			& $\max\{\frac{K}{\epsilon^{4}},\frac{K\epsilon^{4}}{s^2}\}^*$ & $\max\{\frac{1-s}{\epsilon^{4}},\frac{(1-s)\epsilon^{4}}{s^2}\}^*$  & {\xmark}  & {\cmark}\\
			\hline 
			SQuARM-SGD~\citep{singh2021squarm}  
			& {\cmark}   & {\xmark}   
			&  $\frac{1}{n\epsilon^{4}}$ & $\frac{1-\eta}{nK\epsilon^{4}}$  & {\cmark} & {\cmark}\\
			\hline 
			LM-DFL~\citep{chen2024communication}
			& {\cmark}   & {\xmark}   
			& $\max\{\frac{1}{K^3\epsilon^{4}},\frac{K^3s^4}{\epsilon^{4}}\}^*$ & $\max\{\frac{1-s}{K^4\epsilon^{4}},\frac{(1-s)K^2s^4}{\epsilon^{4}}\}^*$   & {\xmark} & {\xmark}\\
			\hline 
			\rowcolor{red!30}  LoDAdaC (this paper) 
			& {\cmark}   & {\cmark}    
			& $\frac{1}{n\epsilon^{4}}$ & $\frac{1-\eta}{nK\epsilon^{4}}$ & {\cmark}  & {\cmark}\\
			\hline 
		\end{tabular*} 
		\caption{Comparison between the proposed method and selected approaches that use MLT for nonconvex decentralized distributed learning. ``CC'' indicates whether compressed communication is employed; ``AG'' denotes the use of adaptive gradient updates; 
			``\#Iter'' specifies the number of total iterations (per agent) to obtain an $\epsilon$-stationary point of problem \eqref{eq:model}, see Definition \ref{def:sta}; 
			``CommCost'' refers to the total communication cost, where each communication round incurs a unit cost in the absence of compression; ``MLTsave'' indicates whether the number of communication rounds can be theoretically reduced 
			by employing MLT; and
			``CCsave'' reflects whether the total communication cost can be effectively reduced by compression. 
			Here, the 
			$O(\cdot)$ notation is omitted in the table, 
			$\epsilon$ is assumed to be sufficiently small, and the number of local steps $K$ is at most $O(\epsilon^{-1})$.
			$^*s$ refers to the compression error given in \citep{sun2022decentralized}, satisfying $
			\mathbb{E}_\cQ\left[\|\vx-\mathcal{Q}[\vx]\|^2\right] \leq s^2 d$.}
		\label{tab:1} 
	\end{table*}
}

\section{Decentralized adaptive methods 
	with MLT and CC}

In this section, we introduce a unified decentralized framework that integrates multiple local adaptive gradient updates with compressed communication. It is named LoDAdaC. Also, we provide convergence guarantees for the proposed algorithm under general nonconvex settings.

\subsection{A unified algorithmic framework}
We present the pseudocode of our framework in Algorithm~\ref{alg:dad2-2}. 
For simplicity, we take a single randomly sampled data point $\xi_i^t$
at each iteration. 
All our theoretical results 
remain valid 
by taking a mini-batch of samples.

In addition to Assumptions~\ref{ass:problemsetup}--\ref{ass:W}, we make the following assumption, which is standard in the analysis of both distributed and non-distributed adaptive  SGMs~\citep{chen2019convergence, chen2023convergence, kingma2014adam,Reddiadam2018, xu2023parallel}.
\begin{assumption}
	\label{ass:gradient}
	The random samples $\{\xi_i^t\}_{i, t\ge 0}$ are independent. For each $t$ and $i\in\{1,2,\ldots,n\}$,  it holds $\mathbb{E}_t[\vg_i^t]=\nabla f_i(\vx^t_{i})$. In addition, there are constants $B$ and $B_\infty$ such that $\left\|\vg_i^t\right\| \leq B$, \mbox{$\left\|\vg_i^t\right\|_{\infty} \leq B_{\infty}$} for any $i\in \{1,2,\ldots,n\}$ and any $t$,  and $\left\|\nabla f_i(\vx)\right\| \leq B$, $\left\|\nabla f_i(\vx)\right\|_{\infty} \leq B_{\infty}$ for all $\vx$.
\end{assumption}

\revise{The unbiasedness condition $\mathbb{E}_t\left[g_i^t\right]=\nabla f_i\left(x_i^t\right)$ is standard in the literature of stochastic methods \citep{lan2020first}.  The bounded gradient condition in Assumption~\ref{ass:gradient} is stronger than the bounded-variance assumptions often used for non-adaptive methods; 
it can be restrictive for modern deep networks with heavy-tailed gradients. Nevertheless, similar assumptions are made in prior work such as \citep{chen2023convergence} for the convergence analysis of adaptive methods. 
Extending the present adaptive+MLT+CC analysis under weaker conditions, such as generalized smoothness or bounded-variance assumptions, is an important direction for future work.}

\begin{algorithm}[htbp]
	\caption{A \textbf{Lo}cal training-based \textbf{D}ecentralized framework with \textbf{Ada}ptive gradient updates and \textbf{C}ompressed communication (LoDAdaC)}\label{alg:dad2-2}
	\DontPrintSemicolon
	\textbf{Input:} 
	$\alpha>0$, $0\le \beta_1<1$,  $\delta>0$,  $0\le \gamma \le 1$, a maximum number $T$ of communication rounds, a number $K$ of local training steps per communication round, a $\eta$-compression operator $\cQ$, $d$-dimension vector-value functions $\{r_t\}$, and
	a mixing matrix $\vW$;
	
	Let $\vx^0_{1}=\vx_2^0=\cdots=\vx^0_n=\underline\vx^0_{1}=\underline\vx_2^0=\cdots=\underline\vx^0_n = \vx^0$, and set 
	$\vm_{i}^{-1}$,  and ${\vu}_{i}^{-1}$ to $\vzero$ for each $i$.
	
	\For{$t=0,1, \cdots, TK-1$}{
		\For{all agents $i \in \{1,2,\ldots,n\}$ in parallel 
		}{	
			Obtain one random sample $\xi_i^t$ and compute a stochastic gradient 
			$\vg_{i}^t \leftarrow \nabla F_i\left(\vx_i^t, \xi_i^t\right)$;

			Let  $\vm_{i}^t=\beta_1 \vm^{t-1}_{ i}+\left(1-\beta_1\right) {\vg}^{t}_{i}$;
			
			Let $\vu_{i}^t=r_t(\vg_i^0,\vg_i^1,\ldots,\vg_i^t)$;
			
			Update  $\vx^{t+\frac{1}{2}}_{i}=\vx^{t}_{i}-\alpha \frac{\vm_{i}^t}{\sqrt{\vu_{i}^{t-1}+\delta}}$;

			\If{$\mathrm{mod}(t+1,K)=0$,}{ Set 
				$\underline\vx^{t+1}_{i} = \underline\vx^{t}_{i} + \mathcal{Q} [ \vx^{t+\frac{1}{2}}_{i}  - \underline\vx^{t}_{i}]$ and 
				{\small$\vx^{t+1}_{i} = \vx^{t+\frac{1}{2}}_{i}+\gamma(\sum_{j=1}^n \vW_{ji} \underline\vx^{t+1}_{j} -\underline\vx^{t+1}_{i} )$.	}	
			}	
			\Else{
				Update $\vx^{t+1}_{i}= \vx^{t+\frac{1}{2}}_{i}$, and $\underline\vx^{t+1}_{i} = \underline\vx^{t}_{i} $.
			}
			
		}
	}
\end{algorithm}

The condition in line~9 of Algorithm~\ref{alg:dad2-2} indicates that neighbor communication happens every $K$ iterations, namely, $K$ local updates are performed per communication round. 
In addition, we only need to communicate the compressed vectors to obtain $\sum_{j=1}^n \vW_{ji} \underline\vx^{t+1}_{j}$, as explained below. 
For each \( i = 1,2,\dots,n \), let agent \( i \) maintain a vector $\vy_i$ and initialize it 
as {$\vy_i^0=\underline{\vx}_i^0$}. 
Then, for all $t\ge0$, let $\vy_i^{t+1} = \vy_i^{t}$, if $\mathrm{mod}(t+1,K)\neq 0$, and \mbox{$\vy_i^{t+1} = \vy_i^{t} + \sum_{j=1}^n \mathbf{W}_{ji} \cQ\left[\vx_j^{t+\frac{1}{2}} - \underline{\vx}_j^{t}\right]$} otherwise. 
This way, we have \mbox{$\vx_i^{t+1} = \vx_i^{t+\frac{1}{2}} + \gamma \left(\vy_i^{t+1} - \underline{\vx}_i^{t+1}\right)$} and thus enable the reduction of communication cost by only communicating compressed message. 


With appropriate 
parameter  $\beta_1$ and vector function $r_t$, 
	the local update of LoDAdaC in line~8 of Algorithm~\ref{alg:dad2-2} encompasses several well known optimizers as special cases. 
As we demonstrate in Section~\ref{sec:analysis}, our theoretical results apply to all optimizers listed in Table~\ref{tab:optimizers}. 
\renewcommand{\arraystretch}{1.5}
\begin{table}[ht]
	\centering 
		\begin{tabular}{|l|p{9cm}|}
			\hline
			\textbf{Optimizer} & \textbf{Description} \\
			\hline
			Vanilla SGD & $\beta_1 = 0$ and $r_t \equiv\vzero$, i.e., $\vu_i^t=\vzero$ for all $i$ and $t$ \\
			\hline
			Momentum (Heavy-ball) SGD & $\beta_1\in (0,1)$ and $r_t \equiv\vzero$, i.e., $\vu_i^t=\vzero$ for all $i$ and $t$ \\
			\hline
			AMSGrad & $\widehat{\vu}_{i}^t =\beta_2  \widehat{\vu}_{i}^{t-1}+ (1-\beta_2) \vg_i^t \circ \vg_i^t$, \,\,
			$\vu_i^t = \max\{\vu_i^{t-1}, \widehat{\vu}_{i}^t\}$, \newline $\widehat{\vu}_{i}^{-1}=\vzero$, and $\beta_2\in (0,1)$ \\
			\hline
			Adam & ${\vu}_{i}^t =\beta_2  {\vu}_{i}^{t-1}+ (1-\beta_2) \vg_i^t \circ \vg_i^t$, with $ \beta_2\in \left[\frac{\sqrt{TK}}{\sqrt{TK}+1}, 1\right)$ \\
			\hline
			Adam-mini & ${\vu}_{i}^t =\beta_2  {\vu}_{i}^{t-1}+ (1-\beta_2)\mathrm{mean} (\vg_i^t \circ \vg_i^t)$, with $ \beta_2\in \left[\frac{\sqrt{TK}}{\sqrt{TK}+1}, 1\right)$ \\
			\hline
			\revise{Averaged} AdaGrad & ${\vu}_{i}^t = \frac{1}{t+1}\sum_{s=0}^{t} \vg_i^s \circ \vg_i^s$ \\
			\hline
		\end{tabular}
	\vspace{1mm}
	\caption{Representative optimizers of 
		Algorithm~\ref{alg:dad2-2} with specific selections of $\beta_1$ and $r_t$}\label{tab:optimizers}
\end{table}

\vspace{-4mm}

\subsection{Convergence analysis}\label{sec:analysis}
In this subsection, we establish the convergence rate results of Algorithm~\ref{alg:dad2-2}. We first derive a consensus error bound in Lemma~\ref{lem:xprepmain2-main}. This bound is essential because it explicitly characterizes the relationship between the consensus error and key algorithmic parameters, including the step size~$\alpha$, MLT steps $K$, compression error $\eta$ of $\cQ$, and the spectral gap  $\rho$ of the communication graph. Such a characterization allows us to rigorously analyze how MLT and compression impact the convergence speed. 
Then we establish a bound in Theorem~\ref{thm:allgradientbound-main} on the objective gradient at averaged points. This bound enables us to show the final convergence rate results of our algorithm with several specific choices of popular adaptive updates. All proofs are given in the appendix. 

\begin{lemma}
	\label{lem:xprepmain2-main}
	Under Assumptions~\ref{ass:problemsetup}--\ref{ass:gradient}, 
	let {$0<\gamma\leq \frac{(1-\rho)(1-\eta^2)}{100}$.}  Then the sequence $\{\vx^t\}_{t=0}^{TK-1}$ generated by Algorithm~\ref{alg:dad2-2} satisfies
	\begin{align}
		\label{eq:allxprepcom22-main}
		&\frac{1}{ TK} \sum_{t=0}^{TK-1} \mathbb{E}\left[\left\|\mathbf{X}_{\perp}^t\right\|^2\right] \leq \alpha^2 nK^2\overline{C}, \text{ where } \overline{C}:= \frac{56}{\gamma (1-\rho)}\left( \frac{80}{\gamma(1-\rho)} +\frac{15}{1-\eta^2} \right) B^2 \delta^{-1}.
	\end{align}
\end{lemma}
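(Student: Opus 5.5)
We work in matrix form with the stacked quantities $\vX^t$, $\underline{\vX}^t$ and $\vM^t=[\vm_1^t,\dots,\vm_n^t]$, and the per-step local displacement $\vG^t$ whose $i$-th column is $\alpha\,\vm_i^t/\sqrt{\vu_i^{t-1}+\delta}$. Since $\vm_i^t$ is a convex combination of stochastic gradients, Assumption~\ref{ass:gradient} gives $\|\vm_i^t\|\le B$. Because $\vu_i^{t-1}\ge\vzero$, each column of $\vG^t$ has norm at most $\alpha B\delta^{-1/2}$, so $\|\vG^t\|^2\le \alpha^2 nB^2\delta^{-1}$ deterministically. This is the only property of the adaptive update we will use, which is why $\overline{C}$ depends only on $B^2\delta^{-1}$.

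\textbf{Step 1: reduce to communication times.} Let $\tau=sK$ be a communication index. For $t\in[\tau,\tau+K)$ there is no mixing, so $\vX^t_\perp=\vX^{\tau}_\perp-\sum_{r=\tau}^{t-1}\vG^r(\vI-\vJ)$. Hence
\[
\|\vX^t_\perp\|^2\le 2\|\vX^\tau_\perp\|^2+2K^2\alpha^2nB^2\delta^{-1}.
\]
It therefore suffices to bound the sequence $e_s:=\EE\|\vX^{sK}_\perp\|^2$ at communication rounds by $O(\alpha^2nK^2B^2\delta^{-1})$ times the constant in $\overline{C}$. The block-level displacement $\vD^s:=\vX^{(s+1)K-\frac12}-\vX^{sK}$ satisfies $\|\vD^s\|^2\le \alpha^2nK^2B^2\delta^{-1}$.

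\textbf{Step 2: CHOCO-type Lyapunov recursion at the block level.} Writing $\vX^{s+}:=\vX^{(s+1)K-\frac12}$, the communication step is
\[
\underline\vX^{s+1}=\underline\vX^{s}+\cQ[\vX^{s+}-\underline\vX^s],\qquad \vX^{s+1}=\vX^{s+}+\gamma\,\underline\vX^{s+1}(\vW-\vI).
\]
This is exactly compressed gossip (CHOCO-Gossip) with a perturbation $\vD^s$ injected before each round. We will use the potential
\[
\Phi_s=\EE\|\vX^{sK}_\perp\|^2+\EE\|\vX^{sK}-\underline\vX^{sK}\|^2.
\]
Following the standard CHOCO analysis (Koloskova et al.), we expand $\vX^{s+1}_\perp=\vX^{s+}(\vI-\vJ)\big(\vI+\gamma(\vW-\vI)\big)+\gamma(\underline\vX^{s+1}-\vX^{s+})(\vW-\vI)$. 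Three facts then apply: $\|\vI+\gamma(\vW-\vI)-\vJ\|_2\le 1-\gamma(1-\rho)$; $\|\vW-\vI\|_2\le 2$; and the compression error satisfies $\EE\|\underline\vX^{s+1}-\vX^{s+}\|^2\le\eta^2\|\vX^{s+}-\underline\vX^s\|^2$. Using Young's inequality with weights tied to $\gamma(1-\rho)$ and $1-\eta^2$, together with the step-size condition $\gamma\le(1-\rho)(1-\eta^2)/100$, we aim for
\[
\Phi_{s+1}\le\Big(1-\tfrac{\gamma(1-\rho)}{c}\Big)\Phi_s+c'\Big(\tfrac{1}{\gamma(1-\rho)}+\tfrac{1}{1-\eta^2}\Big)\EE\|\vD^s\|^2 .
\]
The second component of $\Phi$ needs similar care. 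We have $\vX^{s+1}-\underline\vX^{s+1}=(\vX^{s+}-\underline\vX^{s+1})+\gamma\underline\vX^{s+1}(\vW-\vI)$. We write $\underline\vX^{s+1}(\vW-\vI)=\vX^{s+1}_\perp$-type terms plus compression error, and absorb the $\gamma^2$ cross terms into the contraction because $\gamma$ is small relative to $1-\eta^2$.

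\textbf{Step 3: unroll and average.} Since $\Phi_0=0$ (all initial copies coincide), unrolling the geometric recursion gives
\[
\Phi_s\le \frac{c}{\gamma(1-\rho)}\,c'\Big(\tfrac{1}{\gamma(1-\rho)}+\tfrac{1}{1-\eta^2}\Big)\alpha^2nK^2B^2\delta^{-1}.
\]
This has the same shape as $\overline{C}$. Combining with Step 1 and averaging over $t$ yields the claim after tracking the constants (56, 80, 15).

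The main obstacle is Step 2: obtaining a contraction factor of order $\gamma(1-\rho)$ for the coupled pair (consensus error, compression residual) while $\vW$ is not assumed symmetric. The difficulty is that the cross term between the compression error and the mixing must be controlled using only $\|\vW-\vJ\|_2=\rho$ and the bound $\|\vW-\vI\|_2\le2$. My first attempt would be to pick Young's parameters $\frac{\gamma(1-\rho)}{2}$ for the consensus component and $\frac{1-\eta^2}{2}$ for the residual component, and then verify that $\gamma\le(1-\rho)(1-\eta^2)/100$ makes every leftover $\gamma^2$ coefficient at most a small fraction of the contraction.
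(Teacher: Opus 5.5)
Your proposal is correct and follows essentially the paper's proof. The paper also reduces to communication times and runs a coupled block-level recursion for the consensus error and a compression residual, using the contraction factor $1-\gamma(1-\rho)$ of $(1-\gamma)\vI+\gamma\vW$ on the disagreement subspace, $\|\vW-\vI\|_2\le 2$, and Young parameters $\frac{\gamma(1-\rho)}{16}$, $\frac{\gamma(1-\rho)}{2}$, $\frac{1-\eta^2}{4}$ to get an overall contraction $1-\frac{\gamma(1-\rho)}{16}$; it then unrolls and averages within blocks. The only difference is the residual: the paper uses $\EE\|\vX^{rK}-\underline{\vX}^{(r+1)K}\|^2$, which is nonzero at $r=0$ and needs a separate bound, whereas your $\EE\|\vX^{sK}-\underline{\vX}^{sK}\|^2$ vanishes initially and closes just as well.

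One caution for the constant-tracking: your tentative choice of $\frac{1-\eta^2}{2}$ would fail if used in both residual splits. It leaves only $(1+\frac{1-\eta^2}{2})^2\eta^2=1-\Theta((1-\eta^2)^2)$ on the residual. That cannot absorb the $(1+2\gamma)^2$ factor and the $\Theta(\gamma/(1-\rho))$ cross term when the only assumption is $\gamma\le(1-\rho)(1-\eta^2)/100$. Take the residual-side Young parameters of size $\frac{1-\eta^2}{4}$, as the paper does.
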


\begin{theorem} 
	\label{thm:allgradientbound-main}
	Suppose that Assumptions~\ref{ass:problemsetup}--\ref{ass:gradient} hold 
	and $\|\vu_i^t\|_{\infty} \leq  B_u$	for all $t \ge 0$ and $i\in \{1,2,\ldots,n\}$, for some $B_u>0$. Let $\overline{C}$ denote the constant defined in~\eqref{eq:allxprepcom22-main} and $\alpha, \gamma>0$ satisfy 
	\begin{equation}\label{eq:cond-alpha-gamma-amsgrad2-main}
		\alpha \le \frac{\delta}{48 L \sqrt{B_u + \delta}},\quad \gamma \leq \frac{(1-\rho)(1-\eta^2)}{100}.
	\end{equation}
	Then the sequence $\{\vx^t\}_{t=0}^{TK-1}$ generated by Algorithm \ref{alg:dad2-2} satisfies
	\begin{equation}\label{eq:allgradient-bound-main}
		\begin{aligned}
			&\frac{\alpha}{4\sqrt{B_u+\delta}} \sum_{t=0}^{TK-1} \mathbb{E}\left[\|\nabla  f\left(\overline{\vx}^{t}\right)\|^2\right] \leq \mathbb{E}\left[f\left(\vx^0\right)- f^*\right]  +  \frac{\alpha TK}{8 \sqrt{B_u+\delta}} \frac{\alpha^2 L^2\beta_1^2 B^2}{\delta (1-\beta_1)^2}  \\
			& \quad + \frac{\alpha\beta_1^2B_{\infty}^2}{(1-\beta_1)^2} \left(4\sqrt{B_u+\delta} + {\alpha L}\right)\mathbb{E} \left[\sum_{t=0}^{TK-1} \frac{1}{n}\sum_{i=1}^n \left\|\frac{1}{\sqrt{\vu^{t-2}_{ i}+\delta}}-\frac{1}{\sqrt{\vu^{t-1}_{ i}+\delta}} \right\|^2\right]
			\\ &\quad+ \alpha^2 L\left(\frac{24 }{n\delta} TK B^2   +  \frac{6 L^2}{n\delta}  \alpha^2 T n K^3 \overline{C} \right)  \\
			& \quad +    
			\frac{\alpha L^2 }{n }\left(\frac{\sqrt{B_u+\delta}}{ \delta} + \frac{1}{2\sqrt{\delta}}\right) \alpha^2 T n K^3 \overline{C} +\sum_{t=0}^{TK-1} \frac{\alpha^3 \beta_1^2  L^2 B^2}{\delta(1-\beta_1)^2} \left(\frac{ \sqrt{B_u+\delta} }{ \delta } + \frac{  1}{2\sqrt{\delta}} \right).
		\end{aligned}
	\end{equation}	
\end{theorem}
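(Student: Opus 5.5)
The plan is a descent-lemma argument on the averaged iterate, using an auxiliary ``momentum-corrected'' sequence, with Lemma~\ref{lem:xprepmain2-main} absorbing all consensus-error terms.

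\textbf{Averaged dynamics.} Because $\vW$ is doubly stochastic, the gossip correction $\gamma(\sum_j \vW_{ji}\underline\vx_j^{t+1}-\underline\vx_i^{t+1})$ averages to zero over $i$. Hence, for every $t$, whether or not a communication round occurs,
\[
\overline{\vx}^{t+1}=\overline{\vx}^t-\frac{\alpha}{n}\sum_{i}\frac{\vm_i^t}{\sqrt{\vu_i^{t-1}+\delta}}.
\]
Compression and MLT therefore affect the averaged iterate only through the consensus error $\vX_\perp^t$. This is why the restriction on $\gamma$ and the constant $\overline C$ enter the theorem only via Lemma~\ref{lem:xprepmain2-main}.

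\textbf{Auxiliary sequence.} To handle the momentum, I introduce the standard auxiliary sequence
\[
\vz^t=\overline{\vx}^t+\frac{\beta_1}{1-\beta_1}\bigl(\overline{\vx}^t-\overline{\vx}^{t-1}\bigr).
\]
A direct computation gives
\[
\vz^{t+1}-\vz^t=-\frac{\alpha}{n}\sum_i\frac{\vg_i^t}{\sqrt{\vu_i^{t-1}+\delta}}+\frac{\alpha\beta_1}{1-\beta_1}\cdot\frac1n\sum_i \vm_i^{t-1}\circ\Bigl(\frac{1}{\sqrt{\vu_i^{t-2}+\delta}}-\frac{1}{\sqrt{\vu_i^{t-1}+\delta}}\Bigr).
\]
I apply $L$-smoothness of $f$ at $\vz^t$ and get $f(\vz^{t+1})\le f(\vz^t)+\langle\nabla f(\vz^t),\vz^{t+1}-\vz^t\rangle+\frac L2\|\vz^{t+1}-\vz^t\|^2$. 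The inner product splits into three pieces.

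\emph{(a) The main term.} Write $\nabla f(\vz^t)=\nabla f(\overline\vx^t)+(\nabla f(\vz^t)-\nabla f(\overline\vx^t))$. Since $\vu_i^{t-1}$ is $\mathcal F_t$-measurable, $\EE_t$ replaces $\vg_i^t$ by $\nabla f_i(\vx_i^t)$. Then replace $\nabla f_i(\vx_i^t)$ by $\nabla f_i(\overline\vx^t)$ at a cost of $L\|\vx_i^t-\overline\vx^t\|$. Using $\vu_i^{t-1}+\delta\le B_u+\delta$ componentwise, this yields $-\frac{\alpha}{2\sqrt{B_u+\delta}}\|\nabla f(\overline\vx^t)\|^2$ plus a Young's-inequality remainder of order $\frac{\alpha L^2}{n}\bigl(\frac{\sqrt{B_u+\delta}}{\delta}+\frac{1}{2\sqrt\delta}\bigr)\|\vX_\perp^t\|^2$.

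\emph{(b) The gap between $\vz^t$ and $\overline\vx^t$.} This piece is controlled by $L\|\vz^t-\overline\vx^t\|\le \frac{L\beta_1}{1-\beta_1}\cdot\frac{\alpha B}{\sqrt\delta}$. It produces the $\frac{\alpha^3\beta_1^2L^2B^2}{\delta(1-\beta_1)^2}(\cdots)$ terms, and part of it is also absorbed into the $\frac{\alpha TK}{8\sqrt{B_u+\delta}}\cdots$ term.

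\emph{(c) The adaptive-stepsize drift.} Bound it using $\|\vm_i^{t-1}\|_\infty\le B_\infty$ and $\|\nabla f\|_\infty\le B_\infty$, via Young's inequality on coordinates. This gives the $\frac{\alpha\beta_1^2B_\infty^2}{(1-\beta_1)^2}\bigl(4\sqrt{B_u+\delta}+\alpha L\bigr)\sum\|\cdot\|^2$ term; the $\alpha L$ part comes from the quadratic term.

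\textbf{Quadratic term.} For $\frac L2\|\vz^{t+1}-\vz^t\|^2$, the stochastic-gradient part is bounded by $\frac{\alpha^2}{\delta}\|\frac1n\sum_i\vg_i^t\|^2$. This is at most $\frac{\alpha^2}{\delta}B^2$, giving the $\frac{24}{n\delta}TKB^2$ term, plus (after centering) an $\frac{L^2}{n\delta}\|\vX_\perp^t\|^2$ contribution. The condition $\alpha\le\frac{\delta}{48L\sqrt{B_u+\delta}}$ is what lets the $\|\nabla f(\overline\vx^t)\|^2$ pieces arising here be absorbed, reducing the coefficient from $\frac{\alpha}{2\sqrt{B_u+\delta}}$ to $\frac{\alpha}{4\sqrt{B_u+\delta}}$.

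\textbf{Summation.} Sum over $t=0,\dots,TK-1$, take full expectation, and telescope using $\vz^0=\vx^0$ and $f(\vz^{TK})\ge f^*$. Every $\sum_t\EE\|\vX_\perp^t\|^2$ is then replaced by $TK\cdot\alpha^2nK^2\overline C=\alpha^2TnK^3\overline C$ via Lemma~\ref{lem:xprepmain2-main}, which is exactly the form appearing in \eqref{eq:allgradient-bound-main}.

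\textbf{Main obstacle.} I expect the hardest step to be (a). The conditioning is delicate because $\vu_i^{t-1}$ depends on the past while $\vg_i^t$ is fresh, so the lag in line~8 is essential. The heterogeneous preconditioners $1/\sqrt{\vu_i^{t-1}+\delta}$ also differ across agents, so the lower bound $1/\sqrt{B_u+\delta}$ must be used carefully. My first attempt would be to handle this by writing each preconditioned gradient as the product of a positive diagonal and the gradient, bounding it below coordinatewise, and isolating the cross terms arising from $\nabla f_i(\vx_i^t)-\nabla f_i(\overline\vx^t)$.
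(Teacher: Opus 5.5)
Your outline is the paper's own: the sequence $\vz^t$, the descent lemma at $\vz^t$, the split $\nabla f(\vz^t)=\nabla f(\overline{\vx}^t)+(\nabla f(\vz^t)-\nabla f(\overline{\vx}^t))$, and Lemma~\ref{lem:xprepmain2-main} for every $\sum_t\EE\|\vX_\perp^t\|^2$. However, step (a) does not go through. Write $D_i^t=\mathrm{diag}\bigl(1/\sqrt{\vu_i^{t-1}+\delta}\bigr)$. After taking $\EE_t$ and trading $\nabla f_i(\vx_i^t)$ for $\nabla f_i(\overline{\vx}^t)$, you are left with $\bigl\langle\nabla f(\overline{\vx}^t),\frac1n\sum_i D_i^t\nabla f_i(\overline{\vx}^t)\bigr\rangle$. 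You need this to be at least a multiple of $\|\nabla f(\overline{\vx}^t)\|^2/\sqrt{B_u+\delta}$. That holds if all $D_i^t$ coincide (the sum collapses to $D^t\nabla f(\overline{\vx}^t)$) or if $\nabla f_i(\overline{\vx}^t)=\nabla f(\overline{\vx}^t)$, but not in general. The residual $\frac1n\sum_i D_i^t\bigl(\nabla f_i(\overline{\vx}^t)-\nabla f(\overline{\vx}^t)\bigr)$ is a data-heterogeneity term. It is neither $O(\alpha)$ nor controlled by $\|\vX_\perp^t\|$, and no coordinatewise argument fixes its sign. For example, take $n=2$, $d=1$, $\nabla f_1(\overline{\vx})=3$, $\nabla f_2(\overline{\vx})=-1$, and weights $1/\sqrt{u_1+\delta}=0.1$, $1/\sqrt{u_2+\delta}=1$ (admissible when $\delta\le 1$ and $B_u+\delta\ge 100$). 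The inner product is then $(0.3-1)/2<0$, although $\nabla f(\overline{\vx})=1$.

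The paper's proof crosses this point the same way, so it does not supply the missing idea. In \eqref{eq:all1thterm}, \eqref{eq:all3thterm} and \eqref{eq:allzterm3}, Young's inequality is applied to $\nabla f_i(\vx_i^t)-\nabla f(\overline{\vx}^t)$. The resulting norm is then written as $\|\nabla f_i(\vx_i^t)-\nabla f_i(\overline{\vx}^t)\|\le L\|\vx_i^t-\overline{\vx}^t\|$, which is legitimate only when $\nabla f_i(\overline{\vx}^t)=\nabla f(\overline{\vx}^t)$.

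This is not a removable technicality. For AMSGrad the $\vu_i^t$ are nondecreasing and bounded, so they converge to agent-dependent limits. Take heterogeneous data, e.g.\ $n=3$, $d=1$, $\beta_1=0$, exact gradients, and $\nabla f_i(x)=x+c_i$ near $0$ with $c=(2,-1,-1)$. For small $\alpha$ the averaged iterate then settles near a zero of $\frac1n\sum_i D_i\nabla f_i$ at which $\nabla f\ne 0$; this is the DADAM failure mode of \citep{chen2023convergence}. The left side of \eqref{eq:allgradient-bound-main} then grows like $\alpha TK$ while the right side is $O(1)+O(\alpha^2TK)$, which is incompatible for small $\alpha$ and large $TK$. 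An extra hypothesis is therefore needed, such as identical local objectives $f_i\equiv f$ (IID data) or a second-moment estimate shared by all agents. Under $f_i\equiv f$, your step (a) and the paper's four-term split \eqref{eq:allfourterm} are both valid.

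There is a separate problem in your quadratic term. The inequality $\|\frac1n\sum_i D_i^t\vg_i^t\|^2\le\frac1\delta\|\frac1n\sum_i\vg_i^t\|^2$ is false whenever the $D_i^t$ differ, which happens even when $f_i\equiv f$ because the samples differ. The crude bound $B^2/\delta$ avoids that but loses the factor $1/n$. The paper obtains $\frac{24}{n\delta}TKB^2$ by splitting off the noise $\vg_i^t-\nabla f_i(\vx_i^t)$. Its cross terms vanish under $\EE_t$, because $D_i^t$ is fixed before $\xi_i^t$ is drawn and the samples are independent across agents.
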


For each optimizer listed in Table~\ref{tab:optimizers}, we are able to show the condition $\|\vu_i^t\|_{\infty} \leq  B_u, \forall\, t, \forall\, i$	for some constant $B_u$ and that $\mathbb{E} \left[\sum_{t=0}^{TK-1} \frac{1}{n}\sum_{i=1}^n \left\|\frac{1}{\sqrt{\vu^{t-2}_{ i}+\delta}}-\frac{1}{\sqrt{\vu^{t-1}_{ i}+\delta}} \right\|^2\right]$ is bounded; see Lemma~\ref{lem:allnonincrea}. Thus 
by Theorem \ref{thm:allgradientbound-main}, we specify the choice of $\alpha$ 
and obtain the convergence rate of Algorithm~\ref{alg:dad2-2} by different ways of defining the second momentum term $\vu_i^t$ in line 7 of Algorithm \ref{alg:dad2-2}. 
\begin{theorem} \label{thm:allconvg-rate-amsgrad-main}
	Under Assumptions~\ref{ass:problemsetup}--\ref{ass:gradient}, 
	let $\delta = O(1)$ be a universal positive constant and  $\overline C$ be the constant defined in~\eqref{eq:allxprepcom22-main}. Choose $T$ and $K$ such that $\alpha$ and $\gamma>0$ satisfy 
	\begin{equation}\label{eq:cond-alpha-gamma-amsgrad-all-main}
		\alpha =  \frac{4\theta\sqrt{n(B_{\infty}^2 + \delta)}}{\sqrt{TK}} \le \min\left\{\frac{\delta}{48 L \sqrt{B_\infty^2 + \delta}},1\right\},\quad \gamma \leq \frac{(1-\rho)(1-\eta^2)}{100},
	\end{equation}
	where 
	$\theta =O(1)$ is a constant. 
	Then for the sequence $\{\vx^t\}_{t=0}^{TK-1}$ generated by Algorithm~\ref{alg:dad2-2} with any optimizer in Table~\ref{tab:optimizers}, we have  
	\begin{equation}\label{eq:amsbound-main}
		\begin{aligned}
			&\frac{1}{TK} \sum_{t=0}^{TK-1} \mathbb{E}\left[\|\nabla f(\overline{\vx}^t)\|^2 + \frac{1}{n}\|\vX_\perp^t\|^2\right] 
			= O\left( \frac{f(\vx^0) - f^* + 1}{\sqrt{nTK}}  + \frac{n}{TK} + \frac{nK \overline C}{T}\right).
		\end{aligned}
	\end{equation}
\end{theorem}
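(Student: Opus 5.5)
The argument combines Theorem~\ref{thm:allgradientbound-main} for the stationarity part with Lemma~\ref{lem:xprepmain2-main} for the consensus part. For each optimizer in Table~\ref{tab:optimizers} it rests on two optimizer-specific facts, which the paper attributes to Lemma~\ref{lem:allnonincrea}.
First, $\|\vu_i^t\|_\infty \le B_u := B_\infty^2$ for all $t,i$.
Second, the accumulated variation
\[
S:=\mathbb{E}\Bigl[\sum_{t=0}^{TK-1}\frac1n\sum_{i=1}^n \Bigl\|\frac{1}{\sqrt{\vu^{t-2}_{i}+\delta}}-\frac{1}{\sqrt{\vu^{t-1}_{i}+\delta}}\Bigr\|^2\Bigr]
\]
is $O(1)$ in $T$ and $K$, up to factors in $d,\delta,B_\infty$.

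\textbf{Bound on $B_u$.} The claim follows by induction, since every $\vu_i^t$ is a convex combination (or a maximum of convex combinations) of $\vg_i^s\circ\vg_i^s$ and $\vzero$, and $\|\vg_i^s\|_\infty\le B_\infty$. This covers Adam, AMSGrad, Adam-mini, averaged AdaGrad, and trivially the SGD variants with $\vu\equiv\vzero$.

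\textbf{Bound on $S$.}
\begin{itemize}
\item \emph{Monotone sequences.} For AMSGrad, $\vu_i^t$ is coordinatewise nondecreasing, so $1/\sqrt{\vu+\delta}$ is nonincreasing. Then $\|a_{t-2}-a_{t-1}\|^2\le \|a_{t-2}-a_{t-1}\|_1\,\delta^{-1/2}$, and the sum telescopes to at most $d/\delta$.
\item \emph{Adam and Adam-mini.} I would use $|1/\sqrt{a+\delta}-1/\sqrt{b+\delta}|\le |a-b|/(2\delta^{3/2})$ together with $|\vu^{t}-\vu^{t-1}|\le (1-\beta_2)B_\infty^2$ coordinatewise. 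Each term is then $O(d(1-\beta_2)^2B_\infty^4/\delta^3)$. Summing over $TK$ steps and using $1-\beta_2\le 1/(\sqrt{TK}+1)$ gives $O(1)$. This is exactly why the table imposes that range of $\beta_2$.
\item \emph{Averaged AdaGrad.} Here $|\vu^t-\vu^{t-1}|\le 2B_\infty^2/(t+1)$, so the sum is bounded by $\sum_t 1/t^2=O(1)$.
\end{itemize}

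\textbf{Plugging into Theorem~\ref{thm:allgradientbound-main}.} Set $B_u=B_\infty^2$. The stepsize hypothesis of \eqref{eq:cond-alpha-gamma-amsgrad-all-main} then implies \eqref{eq:cond-alpha-gamma-amsgrad2-main}. Multiply \eqref{eq:allgradient-bound-main} by $4\sqrt{B_u+\delta}/(\alpha TK)$ and bound each term, using $\alpha\le1$, $\delta=O(1)$, and $\alpha^2 = 16\theta^2 n(B_\infty^2+\delta)/(TK)$:
\begin{itemize}
\item $f(\vx^0)-f^*$ becomes $O\bigl((f(\vx^0)-f^*)/(\alpha TK)\bigr)=O\bigl((f(\vx^0)-f^*)/\sqrt{nTK}\bigr)$.
\item The $S$ term becomes $O(S/(TK))=O(1/(TK))$.
\item The $\alpha^2 L\cdot 24TKB^2/(n\delta)$ term becomes $O(\alpha/n)=O(1/\sqrt{nTK})$.
\item The $\beta_1$-terms of the form $\alpha^3 TK$ become $O(\alpha^2)=O(n/(TK))$.
\item The $\overline C$ terms, $\alpha^3 TK^3\overline C$ and $\alpha^5TK^3\overline C$, become $O(\alpha^2K^2\overline C)=O(nK\overline C/T)$, using $\alpha\le 1$ for the higher power.
\end{itemize}

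\textbf{Consensus part.} Lemma~\ref{lem:xprepmain2-main} gives
\[
\frac{1}{TK}\sum_t \frac1n\mathbb{E}\|\vX_\perp^t\|^2\le \alpha^2K^2\overline C = O\Bigl(\frac{nK\overline C}{T}\Bigr).
\]
Adding this to the stationarity bound yields \eqref{eq:amsbound-main}.

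\textbf{Main obstacle.} The delicate step is the uniform $O(1)$ bound on $S$ for Adam and Adam-mini, which are not monotone. If the crude Lipschitz bound above loses a factor of $d$ or of $\delta$, that is harmless because both are constants. The real point is to check that the dependence on $TK$ is truly bounded: the $\beta_2$ range is designed to make $(1-\beta_2)^2TK\le 1$. The remaining work is bookkeeping, namely tracking that every leftover term is dominated by one of the three displayed rates.
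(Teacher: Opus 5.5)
Your proposal is correct and follows essentially the same route as the paper. Both arguments take $B_u=B_\infty^2$ together with the $T$- and $K$-independent bounds on the variation sum from Lemma~\ref{lem:allnonincrea}: telescoping for AMSGrad, the $\delta^{-3/2}$-Lipschitz bound with $(1-\beta_2)^2TK\le 1$ for Adam, and $\sum_t 1/t^2$ for averaged AdaGrad. Both then divide \eqref{eq:allgradient-bound-main} by $\alpha TK/(4\sqrt{B_\infty^2+\delta})=\theta\sqrt{nTK}$ and add the consensus bound of Lemma~\ref{lem:xprepmain2-main}. The only slip is harmless: the second $\overline C$ term is $\alpha^4TK^3\overline C$, not $\alpha^5TK^3\overline C$, which changes nothing since $\alpha\le 1$.
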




\subsection{Linear speed up, topology independence, and  communication reduction}\label{sec:mainrem}
Based on the convergence rate results in Theorem~\ref{thm:allconvg-rate-amsgrad-main}, we discuss how the number $n$ of agents, the number $K$ of local updates, and compression ratio $1-\eta$ affect the iteration complexity and communication complexity of our algorithm to produce an $\epsilon$-stationary point in expectation.  


\textbf{Linear speed up and topology-independent step size.}~~
By \eqref{eq:amsbound-main} and the definition of $\overline C$ in~\eqref{eq:allxprepcom22-main}, if 
\begin{equation}\label{eq:allcond2-T-amsgrad-main}
	T=\Omega\left( \frac{n^3K^3}{(1-\rho)^8(1-\eta^2)^4}\right), 
\end{equation}
then $\frac{{n K \overline C}}{T} =O(\frac{1}{\sqrt{n TK}}) $,  $\frac{n}{TK} = O(\frac{1}{\sqrt{n TK}})$, and we obtain 
\begin{equation}\label{eq:allrate-simple-main}
	\frac{1}{TK} \sum_{t=0}^{TK-1} \mathbb{E}\left[\|\nabla  f\left(\overline{\vx}^{t}\right)\|^2 + \frac{1}{n}\|\vX_\perp^t\|^2\right] = O\left(\frac{1}{\sqrt{n TK}}\right).
\end{equation}  
Letting $\tau$ be selected from $\{0, \ldots, TK-1\}$ uniformly at random, we have from~\eqref{eq:allrate-simple-main} that $\EE\left[\|\nabla f(\overline{\vx}^\tau)\|^2 + \frac{1}{n}\|\vX_\perp^\tau\|^2 \right] = O\left(\frac{1}{\sqrt{n TK}}\right)$. Hence, to obtain an $\epsilon$-stationary point in expectation, the total number of local iterations per agent is $TK = \Theta\left(\frac{1}{n\epsilon^4}\right)$. 

Given $K$, we have $T= \Theta\left(\frac{1}{n K \epsilon^4}\right)$; when $\epsilon = O\left(\frac{(1-\rho)^2(1-\eta^2)}{n K}\right)$, the chosen $T$ will satisfy \eqref{eq:allcond2-T-amsgrad-main} and the first inequality in \eqref{eq:cond-alpha-gamma-amsgrad-all-main} holds. Thus in this case, we obtain a linear speed up with respect to $n$, and the step size $\alpha=\Theta(n\epsilon^2)$ and is independent of $\rho$ and $\eta$.

\textbf{Multiplied reduction of communication cost.}~~ For a small enough $\epsilon>0$, we can further reduce the order of communication rounds by picking $K$. Suppose $\epsilon = O\left(\left(\frac{(1-\rho)^2(1-\eta^2)}{n}\right)^{\frac{1}{\nu}}\right)$ for some \mbox{$\nu \in (0,1)$}. Then we can choose $K = \Theta\left(\epsilon^{-1+\nu}\right)$ and $T= \Theta\left(\frac{\epsilon^{-3-\nu}}{n }\right)$,  which satisfies \eqref{eq:allcond2-T-amsgrad-main}. This way, compared to performing a single local update, i.e., $K=1$, we reduce the number of communication rounds by an order of $\epsilon^{-1+\nu}$. In addition, by using an $\eta$-compression operator, our algorithm only needs $1-\eta$ of communication amount as compared to using no compression. Therefore, the total communication volume required by our algorithm is $\Theta\left(\frac{1-\eta}{n \epsilon^{3+\nu}}\right)$, achieving multiplied reduction of the total communication cost. 

\section{Numerical experiments}\label{sec:numerical}

In this section, we demonstrate the efficacy of the proposed framework over a set of numerical experiments. We consider three standard benchmarks, including training a convolutional neural network LeNet5~\citep{lecun1998gradient} on the FashionMNIST dataset~\citep{xiao2017fashion}, a \revise{ResNet} architecture Fixup-ResNet-20~\citep{zhang2019fixup} on the CIFAR-10 dataset~\citep{krizhevsky2009learning}, and a small-scale 10.7M parameter GPT model, from nanoGPT~\citep{karpathy2022}, on the tiny-shakespeare dataset. We will show the performance of LoDAdaC equipped with the following adaptive gradient updates: AdaGrad, Adam, and AMSGrad on homogeneously distributed training data. We will compare LoDAdaC against SQuARM-SGD~\citep{singh2021squarm},  
which incorporates compressed communication and local training with a momentum-based SGD. Our methods improve over SQuARM-SGD with the addition of an adaptive update. We provide an additional comparison against DADAM~\citep{nazari2022dadam}, which represents methods with decentralized and compressed communication but always with a single local update per communication round. A final experimental baseline for comparison is CDProxSGT~\citep{yan2023compressed}, which represents non-adaptive methods with no local updates, for the sake of completeness.

We implement all of these methods in PyTorch. 
For FashionMNIST and CIFAR-10, we run our experiments on a CPU server. This server has two-way 64-core (256 threads) AMD EPYC 7742 CPUs at 2.25GHz and 2TB DDR4 memory. For tiny-shakespeare on nanoGPT, we run the experiments on a separate server with 4 NVIDIA A100 GPUs. Both test systems have Python 3.12.3 and PyTorch 2.7.0+cu126 installed, running on top of Ubuntu 24.04.2 LTS. The code and experimental scripts for our methods are publicly available at \url{https://github.com/DecentralizedMethods/LoDAdaC}.

We will compare training loss, test accuracy, \revise{and consensus error,} as well as validation loss for the GPT model. We will compare these values relative to the number of communication rounds and the communication volume. We ran a significant parametric study, evaluating possible parameters within: local updates $K=[1,2,5,10,20,50]$, optimizers = [AdaGrad, Adam, AMSGrad, DADAM, SQuARM-SGD, CDProxSGT], Top-$k$ compression = [30\%, 40\%, 50\%, 60\%, None], agents = [4, 9, 16], topology = [ring, 2D-grid], data distribution = [IID, Dirichlet(1.0), Dirichlet(0.5)]. We will show a representative selection of these results below on CIFAR-10 and tiny-shakespeare, with FashionMNIST and the rest of the results appearing in the appendix. 
We use a batch size of 64 for the CIFAR-10 and FashionMNIST datasets and a batch size of 128 for training the GPT model. We initialize the learning rate to 0.001 for AdaGrad, Adam, and AMSGrad on CIFAR-10 and FashionMNIST and use $\beta$ values of $\beta_1=0.9, \beta_2=0.999$ for Adam and AMSGrad. We use a learning rate of 0.0001 on tiny-shakespeare. We tune the learning rate to 0.01 for SQuARM-SGD on CIFAR-10 and FashionMNIST and 0.005 on tiny-shakespeare --- higher learning rates, such as the recommended learning rates of 0.1 and 0.2 from~\citep{singh2021squarm}, did not converge with a number of tests when using our Top-$k$ compression operator.

\begin{figure}[!t]
	\begin{center}
		\begin{subfigure}{0.9\textwidth}
			\includegraphics[trim=0 0 0 0cm,clip,width=0.32\textwidth]{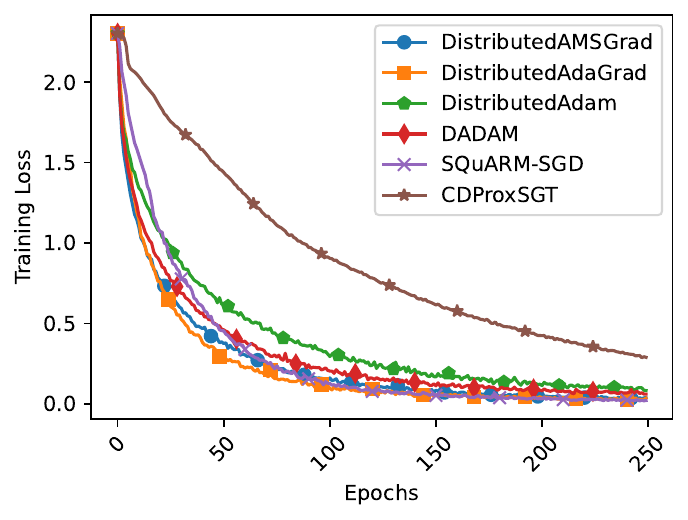}
			\includegraphics[trim=0 0 0 0cm,clip,width=0.32\textwidth]{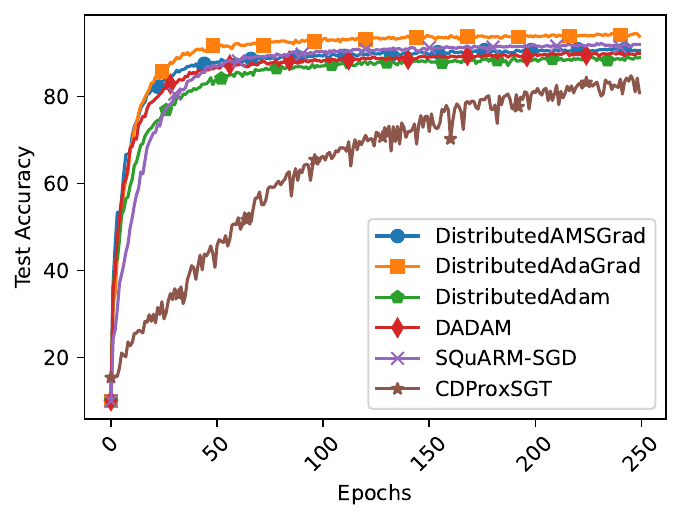}
			\includegraphics[trim=0 0 0 0cm,clip,width=0.33\textwidth]{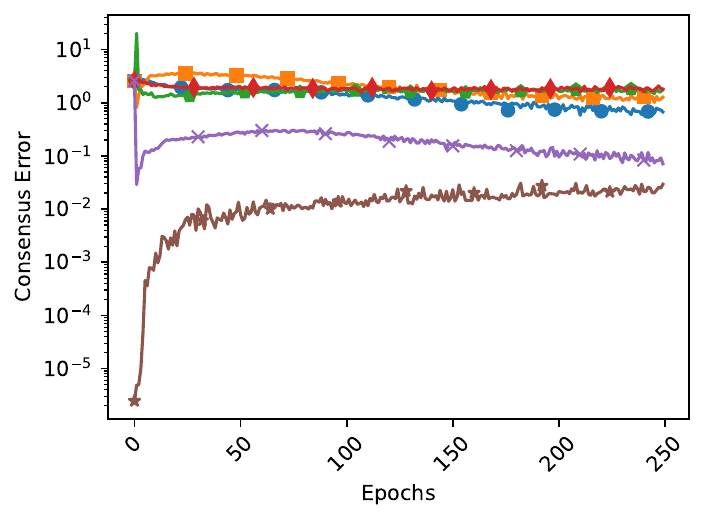}
			\caption{CIFAR-10 optimizer comparison.}
			\label{fig:opt_comp1}
		\end{subfigure}
		\begin{subfigure}{0.9\textwidth}
			\includegraphics[trim=0 0 0 0cm,clip,width=0.32\textwidth]{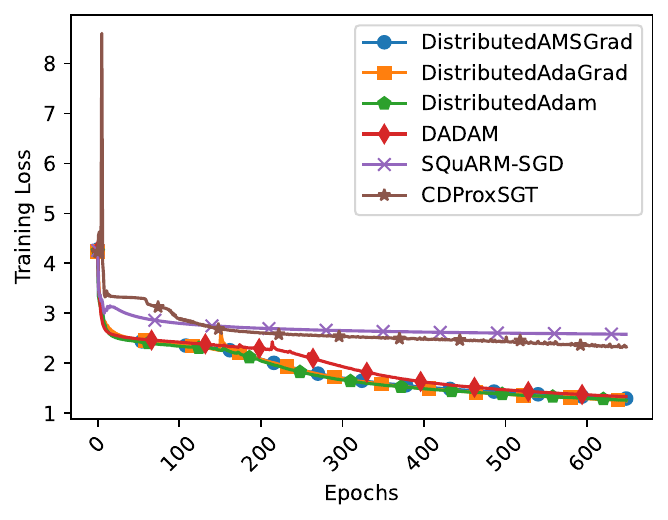}
			\includegraphics[trim=0 0 0 0cm,clip,width=0.32\textwidth]{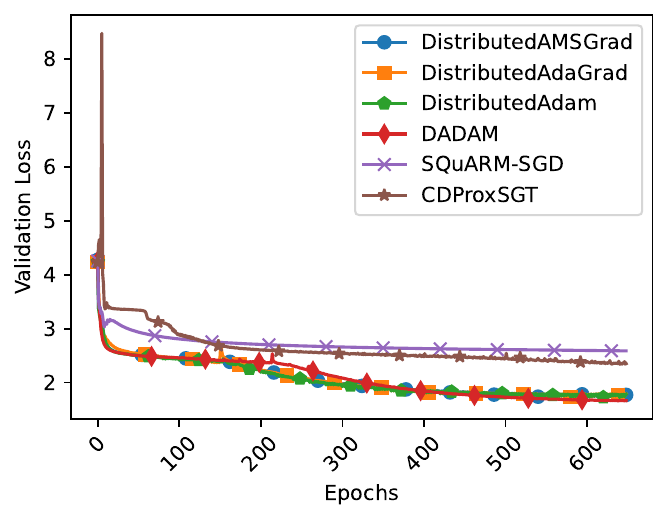}
			\includegraphics[trim=0 0 0 0cm,clip,width=0.33\textwidth]{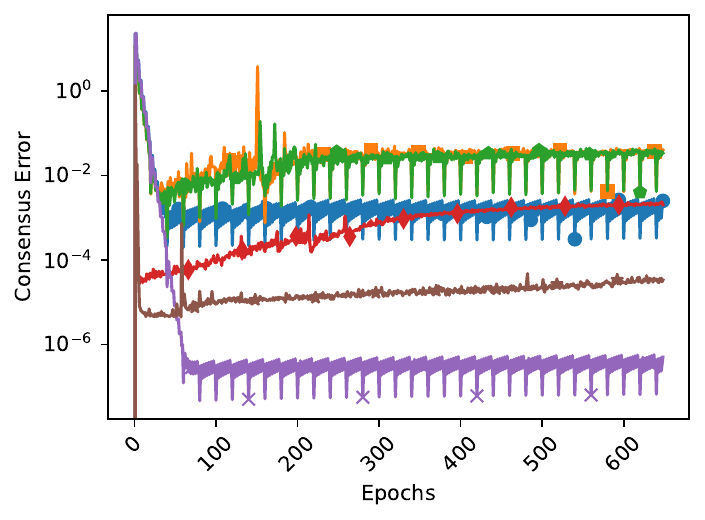}
			\caption{tiny-shakespeare optimizer comparison.}
			\label{fig:opt_comp2}
		\end{subfigure}
	\caption{\textbf{Optimizer Comparison:} Plotted above are the training loss, test accuracy, and consensus error of CIFAR-10 (top) and the training loss, validation loss, and consensus error of tiny-shakespeare (bottom) with training done on all of the various optimizers.}
	\label{fig:per_opt}
    \end{center}
\end{figure}

\begin{figure}[!htb]
	\begin{center}
		\begin{subfigure}{0.9\textwidth}
			\includegraphics[trim=0 0 0 0cm,clip,width=0.32\textwidth]{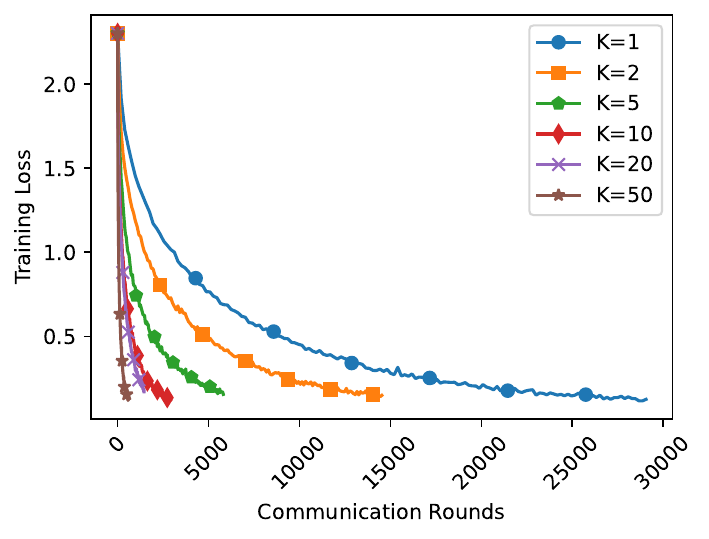}
			\includegraphics[trim=0 0 0 0cm,clip,width=0.32\textwidth]{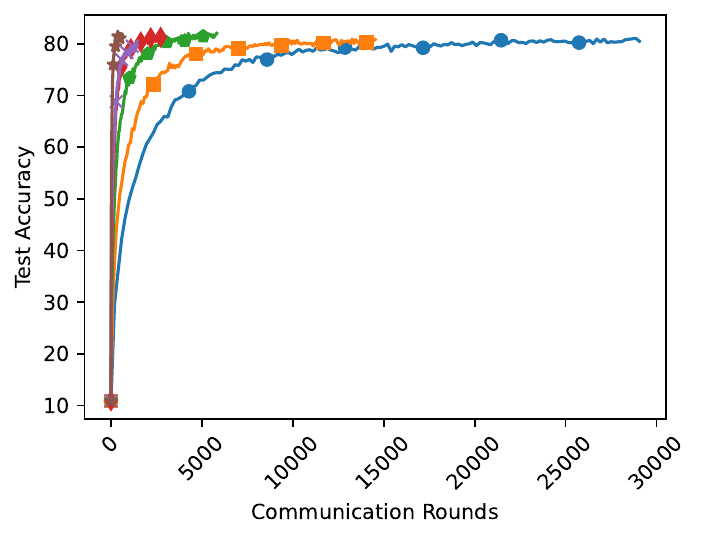}
			\includegraphics[trim=0 0 0 0cm,clip,width=0.32\textwidth]{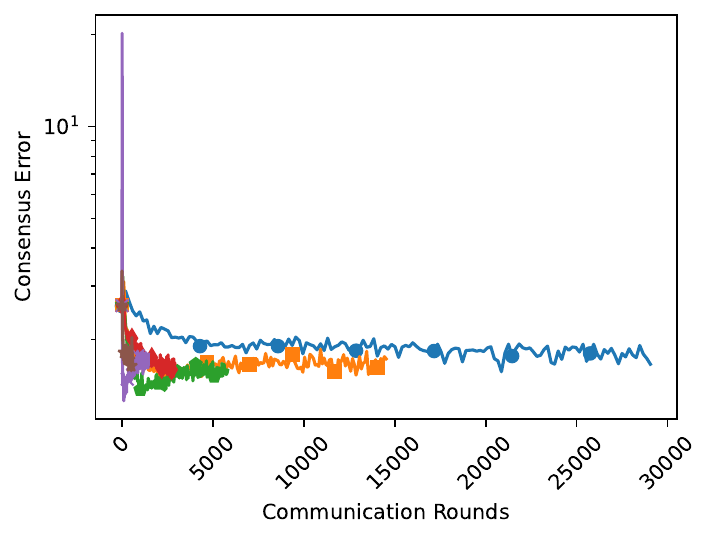}
			\caption{CIFAR-10 reduction in communication rounds.}
			\label{fig:k_comp1}
		\end{subfigure}\\
		\begin{subfigure}{0.9\textwidth}
			\includegraphics[trim=0 0 0 0cm,clip,width=0.32\textwidth]{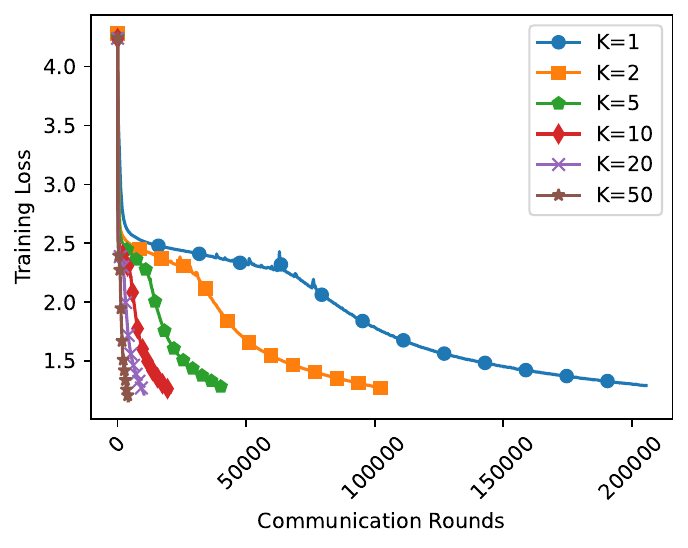}
			\includegraphics[trim=0 0 0 0cm,clip,width=0.32\textwidth]{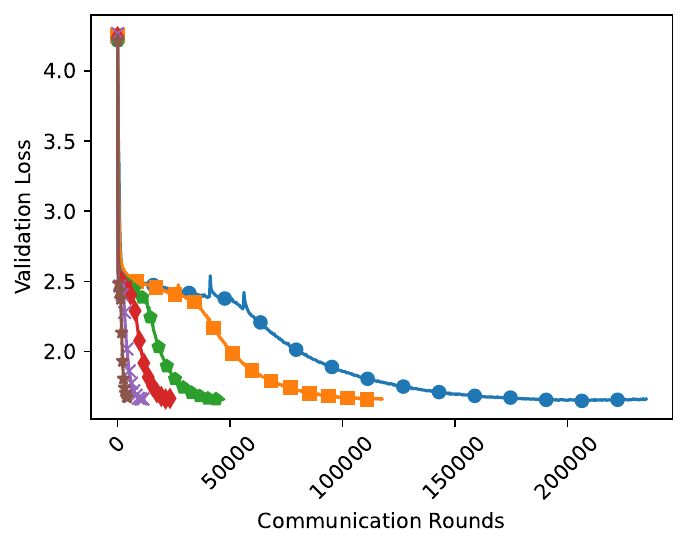}
			\includegraphics[trim=0 0 0 0cm,clip,width=0.32\textwidth]{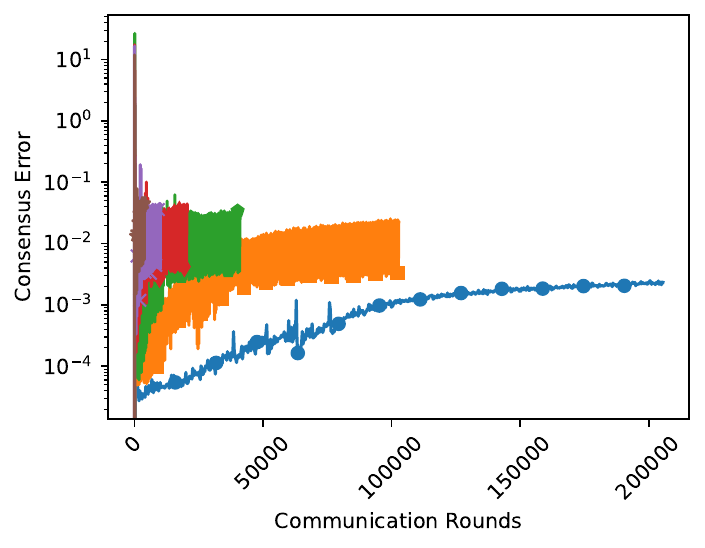}
			\caption{tiny-shakespeare reduction in communication rounds.}
			\label{fig:k_comp2}
		\end{subfigure}
	\caption{\textbf{Number of Local Updates:} Plotted above are the training loss, test accuracy, and consensus error of CIFAR-10 (top) and the training loss, validation loss, and consensus error of tiny-shakespeare (bottom) with training done using the Adam optimizer across a number of possible $K$ values from 1 to 50.}
    \label{fig:per_k}
    \end{center}
\end{figure}

\begin{figure}[!htb]
	\begin{center}
		\begin{subfigure}{0.9\textwidth}
			\includegraphics[trim=0 0 0 0cm,clip,width=0.32\textwidth]{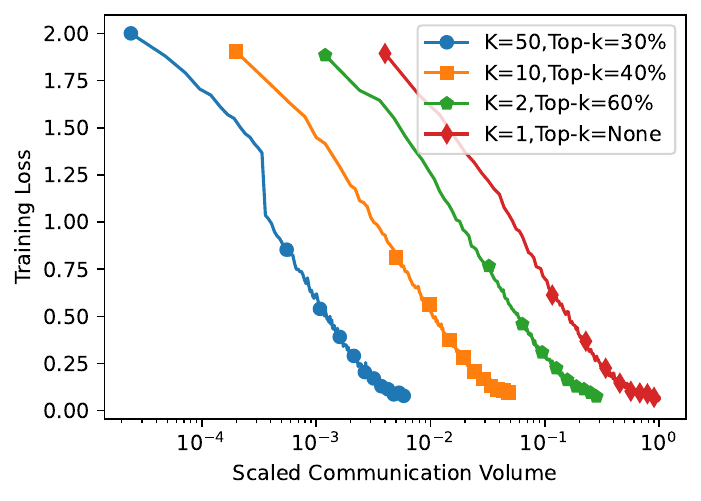}
			\includegraphics[trim=0 0 0 0cm,clip,width=0.32\textwidth]{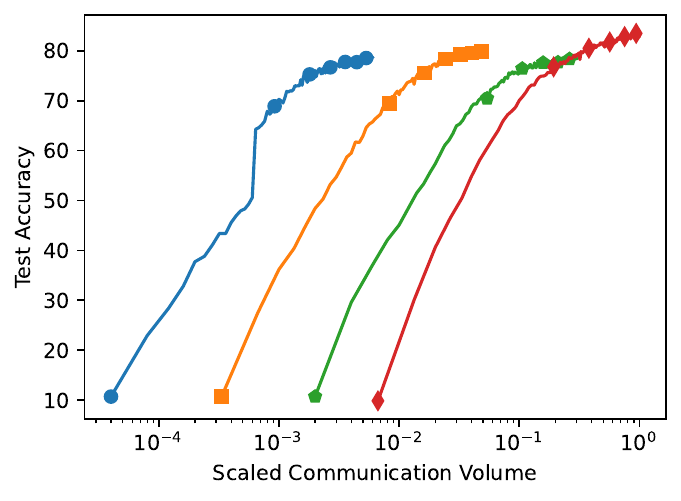}
			\includegraphics[trim=0 0 0 0cm,clip,width=0.32\textwidth]{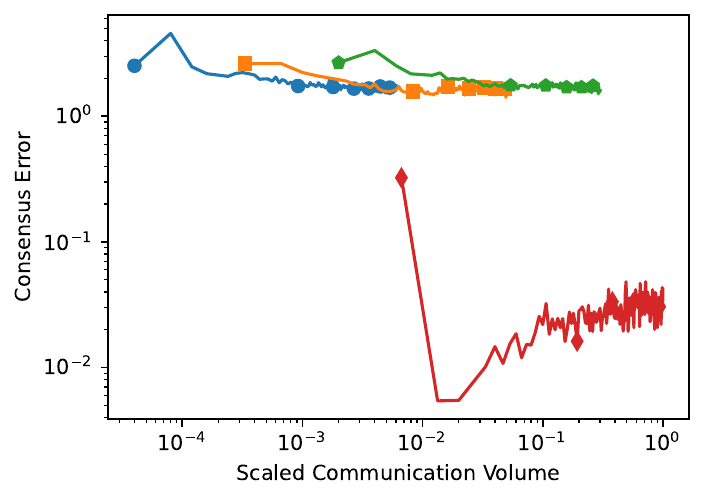}
			\caption{CIFAR-10 reduction in communication volume.}
			\label{fig:vol_comp1}
		\end{subfigure}\\
		\begin{subfigure}{0.9\textwidth}
			\includegraphics[trim=0 0 0 0cm,clip,width=0.32\textwidth]{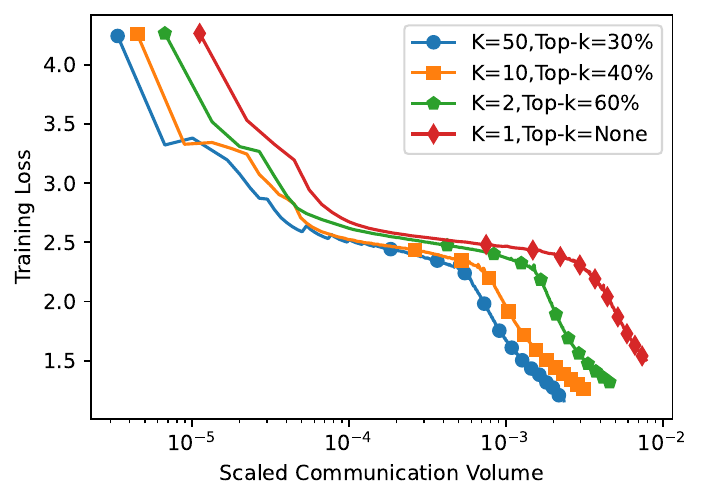}
			\includegraphics[trim=0 0 0 0cm,clip,width=0.32\textwidth]{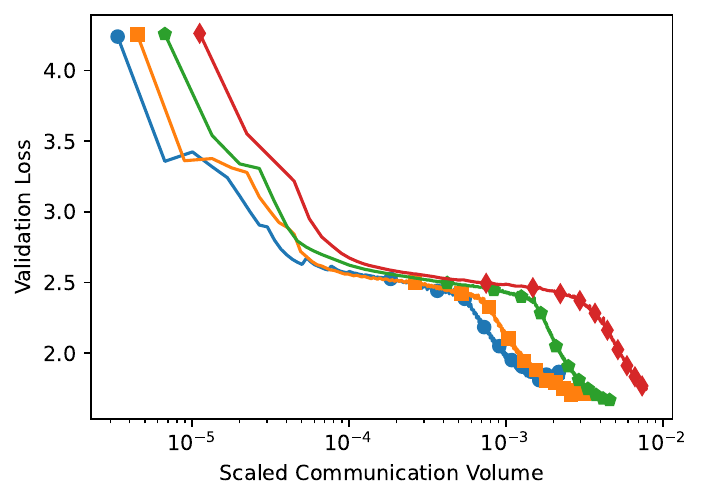}
			\includegraphics[trim=0 0 0 0cm,clip,width=0.32\textwidth]{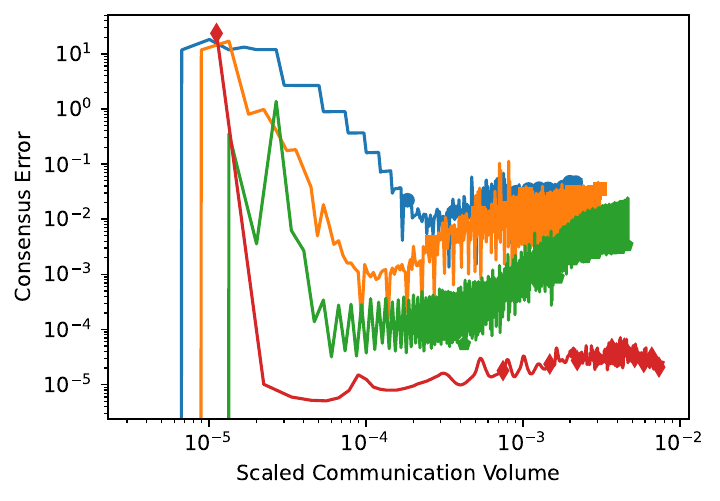}
			\caption{tiny-shakespeare reduction in communication volume.}
			\label{fig:vol_comp2}
		\end{subfigure}
	\caption{\textbf{Communication Volume:} Plotted above are the training loss, test accuracy, and consensus error of CIFAR-10 (top) and the training loss, validation loss, and consensus error of tiny-shakespeare (bottom) with training done using the Adam optimizer across a number of possible $K$ and top-$k$ values.}
	\label{fig:per_topk}
    \end{center}
\end{figure}

\subsection{Optimizer Comparison}


We first compare the performance of 
different optimizers with $n=4$ agents with a fixed local updates per communication of $K=20$ and Top-$k$ compression of 40\% and 50\% for CIFAR-10 and tiny-shakespeare, respectively. We display these results in Figures~\ref{fig:opt_comp1} and~\ref{fig:opt_comp2}. We compare against SQuARM-SGD, DADAM, and CDProxSGT as our baselines for this set of experiments. Similar results are plotted for other values of $K$ in the appendix in Figure~\ref{fig:appx_opt_comp}.

We observe that 
SQuARM-SGD is slightly slower to converge on CIFAR-10 with the given hyperparameters, but it achieves a similar test accuracy. CDProxSGT converges significantly slower and does not achieve an equivalent test accuracy in the given number of epochs. The performance of both non-adaptive methods in terms of validation loss is significantly worse on the GPT model, a known issue of training language models with non-adaptive momentum SGD methods~\citep{zhao2025deconstructing}. We note that across these experiments, AdaGrad and Adam are generally most performant overall, though the performance of all methods contained within our framework is relatively similar. As such, we will focus on Adam in subsequent results.

\subsection{Number of Local Updates}



Our next set of experiments analyzes the effect of the number of local updates per communication round on training performance. Figures~\ref{fig:k_comp1} and~\ref{fig:k_comp2} give results with Adam training on CIFAR-10 and tiny-shakespeare using Top-$k$ compression of 40\% and 50\%, respectively. We plot training loss, test accuracy/validation loss, and consensus error against the total number of communication rounds for $K=1$ to $K=50$. We run all experiments to the same number of epochs, which gives a reduction in communication rounds proportional to $K$. For CIFAR-10, we note only a 1\% loss in maximum test accuracy with $K=50$ local updates per communication compared to the baseline $K=1$ instance. On tiny-shakespeare, we likewise observe less than a 1\% degradation of minimum validation loss when comparing $K=50$ to $K=1$ local updates per communication. We observed similar results with the same experiments on FashionMNIST in Figure~\ref{fig:appx_fmnist_fig1} in the appendix.

\subsection{Communication Volume}

We continue our experiments by giving the relative proportion of communication volume used by our framework with a selection of $K$ and Top-$k$ values, as compared to a $K=1$ baseline without compressed communication. Figures~\ref{fig:vol_comp1} and~\ref{fig:vol_comp2} give such a comparison using Adam on CIFAR-10 and tiny-shakespeare. We overall observe a significant reduction in communication volume at a relatively low cost to optimization quality. On CIFAR-10, we observe no loss in quality between $K=2$ with Top-$k=60$\% to $K=50$ with Top-$k=30$\%, despite a reduction in communication volume of 50$\times$. Comparing $K=50$ with Top-$k=30$\% to the baseline with no compression or local updates, we note that we use only about 0.6\% of the communication volume. The maximum test accuracies across all experiments are within a few percent of the baseline. We observe similar results on tiny-shakespeare when considering validation loss instead of test accuracy. 

\begin{figure}[!t]
	\begin{center}
		\begin{subfigure}{0.9\textwidth}
			\includegraphics[trim=0 0 0 0cm,clip,width=0.32\textwidth]{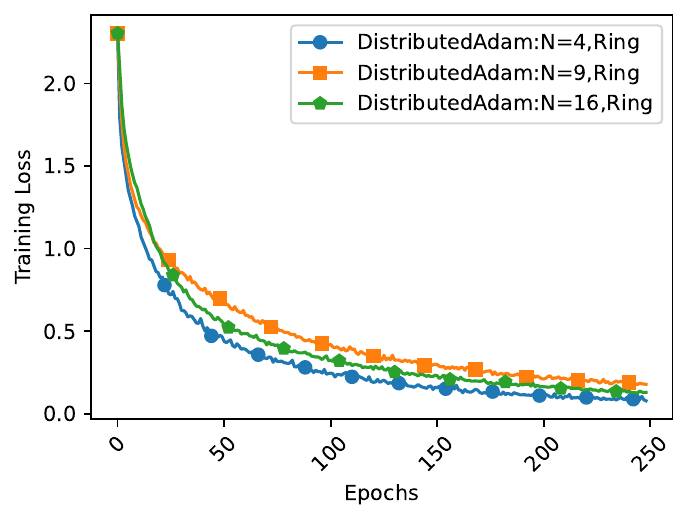}
			\includegraphics[trim=0 0 0 0cm,clip,width=0.32\textwidth]{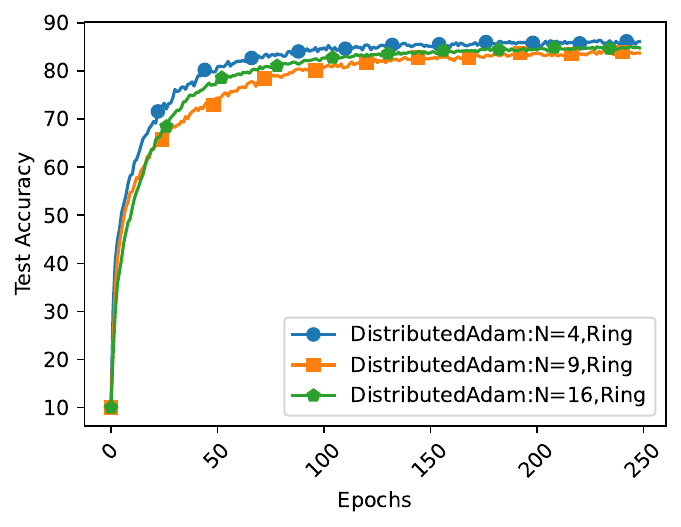}
			\includegraphics[trim=0 0 0 0cm,clip,width=0.32\textwidth]{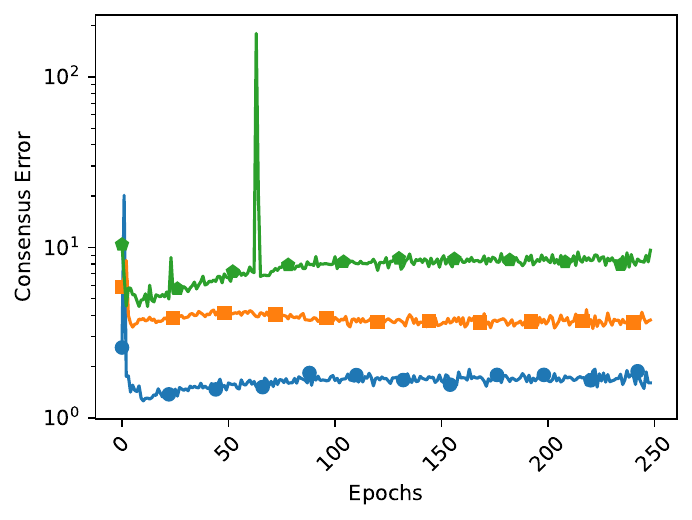}
			\caption{CIFAR-10 with Adam agent scaling with ring topology.}
			\label{fig:rank_comp1}
		\end{subfigure}\\
		\begin{subfigure}{0.9\textwidth}
			\includegraphics[trim=0 0 0 0cm,clip,width=0.32\textwidth]{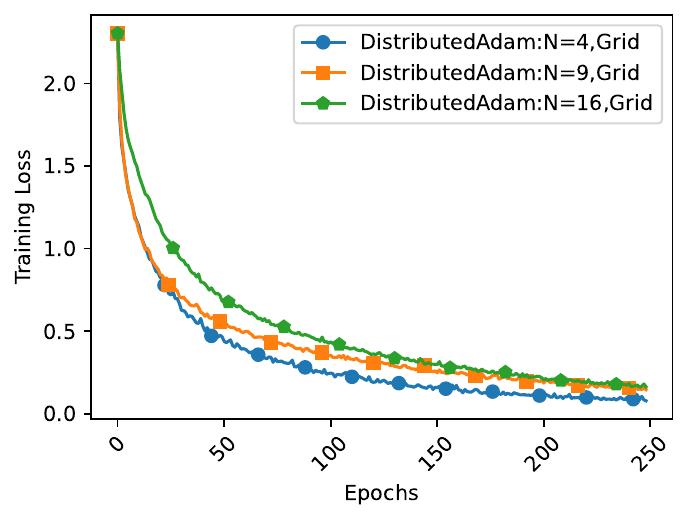}
			\includegraphics[trim=0 0 0 0cm,clip,width=0.32\textwidth]{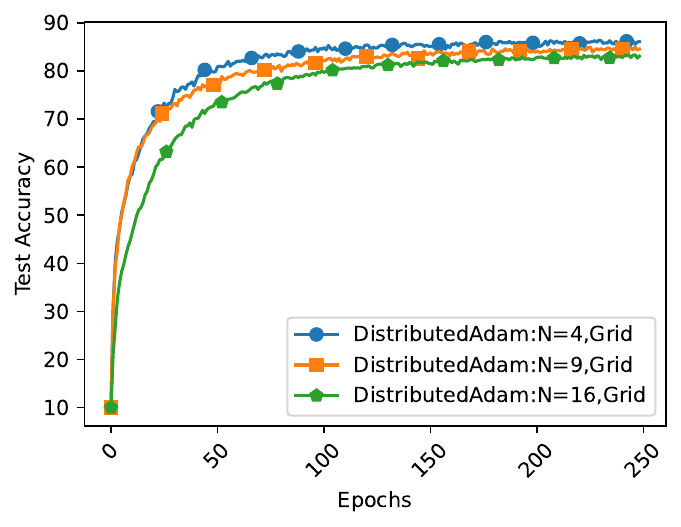}
			\includegraphics[trim=0 0 0 0cm,clip,width=0.32\textwidth]{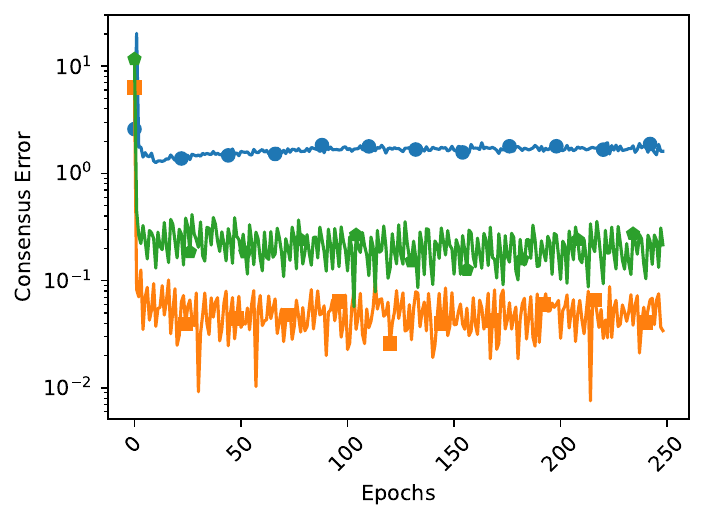}
			\caption{CIFAR-10 with Adam agent scaling with grid topology.}
			\label{fig:grid_comp1}
		\end{subfigure}\\ 
	\end{center}
	\caption{\textbf{Larger Agent Counts and Differing Topology:} Plotted above are the training loss, test accuracy, and consensus error of CIFAR-10 with training done using the Adam optimizer when scaling to 4, 9, and 16 agents on ring topology (top) and 2D grid topology (bottom). All experiments were run with $K=20$ local updates per communication round.}
	\label{fig:per_topology}
\end{figure}

\subsection{Larger Agent Counts and Differing Topology}

We demonstrate the linear scaling of our method by running experiments with 4, 9 and 16 agents in a ring and a 2D grid communication topology. Given in Figure~\ref{fig:per_topology} are plots of training loss, test accuracy, \revise{and consensus error} for CIFAR-10 when using Adam optimizer. Similar results for AdaGrad and AMSGrad are displayed in Figure~\ref{fig:scaling} in the appendix. We set $K=20$ for all tests with Top-$k=40\%$ compression. 
The 2D grid topology is defined as $3\times 3$ for 9 agents and $4\times4$ for 16 agents. Note that the ring and grid topologies are equivalent for 4 agents. We observe relatively consistent results across all optimizers, with near-linear scaling in most cases. Minimum achieved training loss and maximum test accuracy are also relatively close, similar to as we previously observed in Figure~\ref{fig:per_k} across varying $K$ values.

\begin{figure}[!t]
	\begin{center}
		\begin{subfigure}{0.9\textwidth}
			\includegraphics[trim=0 0 0 0cm,clip,width=0.32\textwidth]{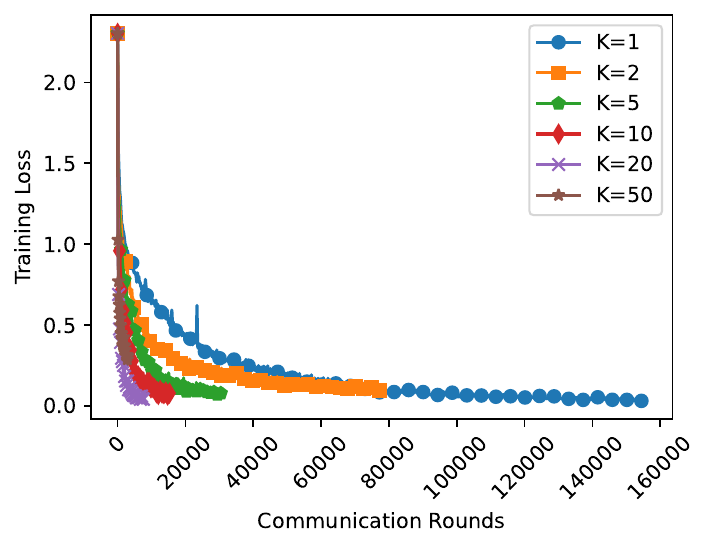}
			\includegraphics[trim=0 0 0 0cm,clip,width=0.32\textwidth]{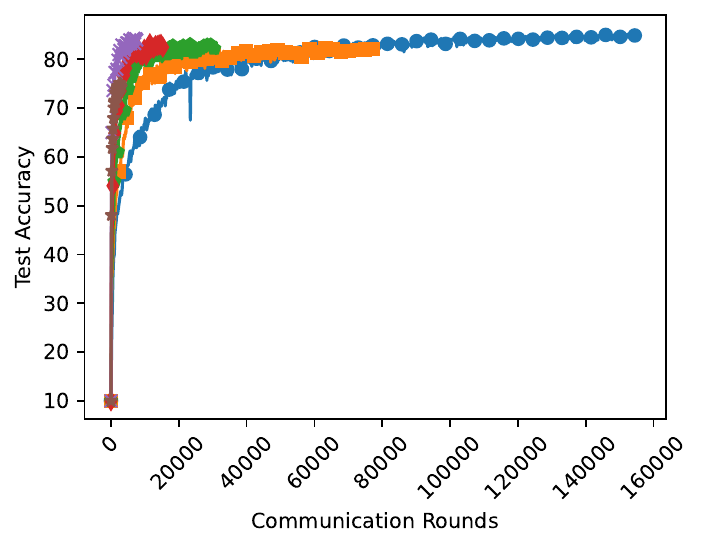}
			\includegraphics[trim=0 0 0 0cm,clip,width=0.32\textwidth]{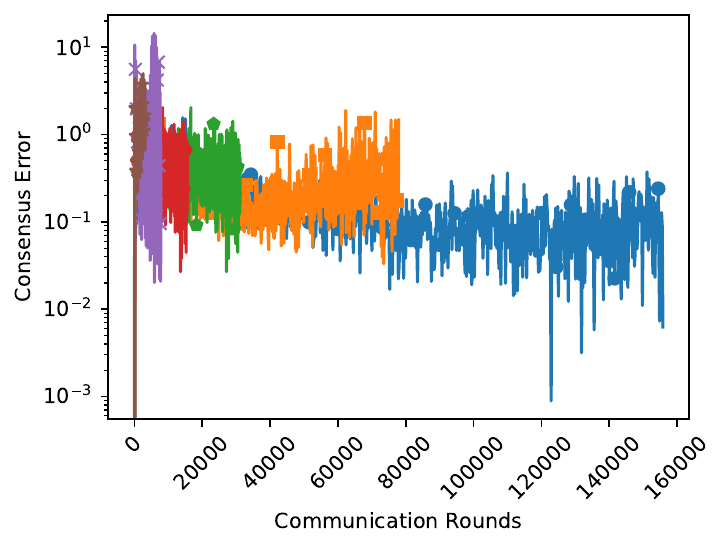}
			\caption{CIFAR-10 with non-IID data following a Dirichlet $\alpha=1.0$ distribution.}
			\label{fig:hetero_1}
		\end{subfigure}\\
		\begin{subfigure}{0.9\textwidth}
			\includegraphics[trim=0 0 0 0cm,clip,width=0.32\textwidth]{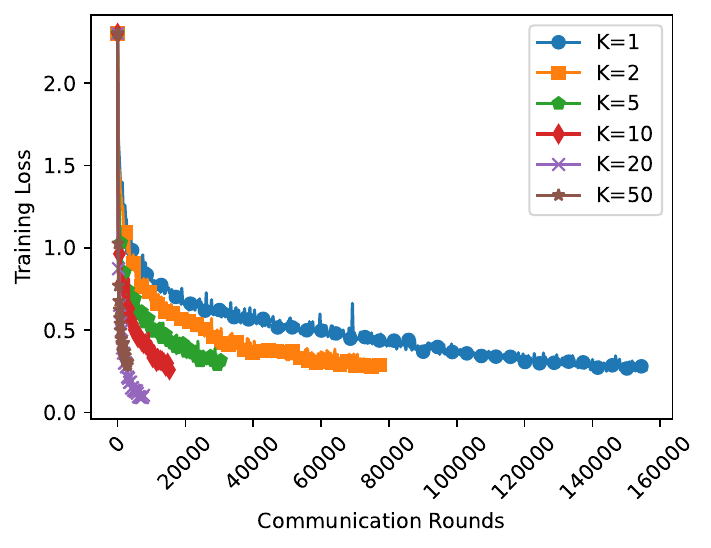}
			\includegraphics[trim=0 0 0 0cm,clip,width=0.32\textwidth]{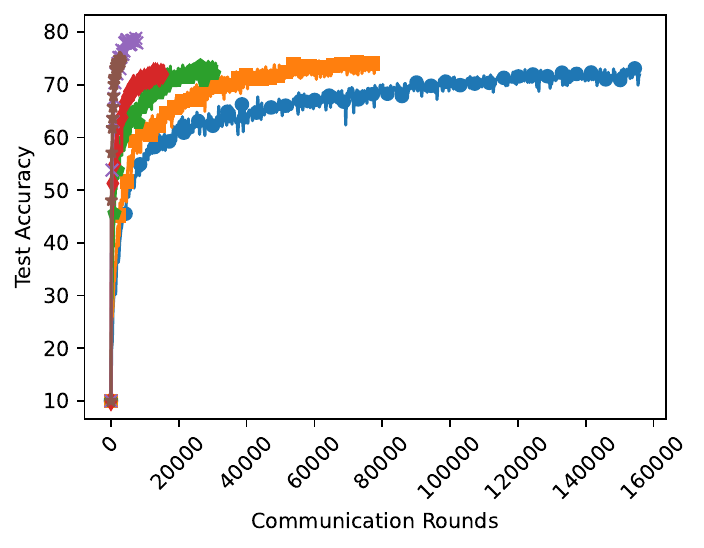}
			\includegraphics[trim=0 0 0 0cm,clip,width=0.32\textwidth]{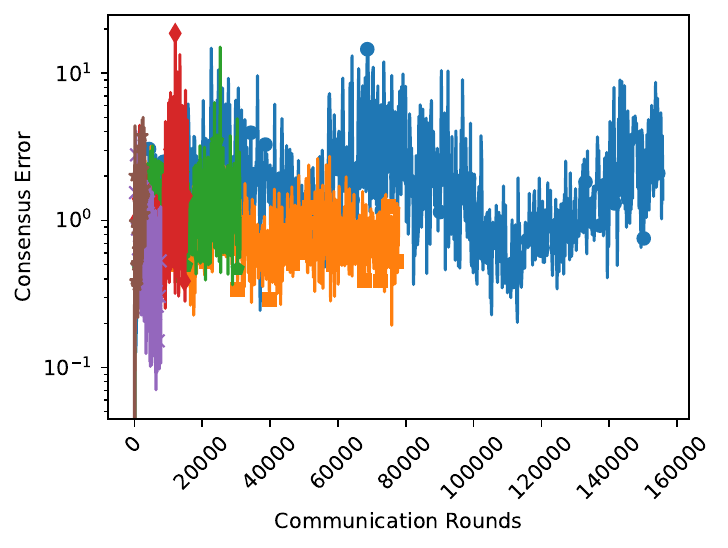}
			\caption{CIFAR-10 with non-IID data following a Dirichlet $\alpha=0.5$ distribution.}
			\label{fig:hetero_50}
		\end{subfigure}
	\end{center}
	\caption{\textbf{Non-IID Data:} Plotted above are the training loss, test accuracy, and consensus error when using the Adam optimizer on CIFAR-10 with non-IID data across $K$ values from 1 to 50. We distribute data following a standard Dirichlet distribution process using $\alpha=1.0$ (top) and $\alpha=0.5$ (bottom).}
	\label{fig:non_iid}
\end{figure}

\subsection{Heterogeneous Training Data}

Our final experiments examine non-IID (independent and identically distributed) training data, emulating the client drift due to heterogeneity often observed in real decentralized environments. We use a Dirichlet distribution to partition training data. We run experiments with $\text{Dirichlet}(\alpha) = [0.5, 1.0]$, using CIFAR-10, Adam optimizer, local updates $K = [1, 2, 5, 10, 20, 50]$, and Top-$k$ = 40\%, with results shown in Figure~\ref{fig:non_iid}. Our experiments were run for approximately $5\times$ the number of communication rounds as the related IID tests shown in Figure~\ref{fig:k_comp1}. We note that in the less skewed setup with Dirichlet parameter $\alpha=1.0$, our method achieves convergence and test accuracies 
similar to those in Figure~\ref{fig:k_comp1}, though convergence occurs more slowly. Consensus error is also significantly more variable, as expected. However, the tests with larger numbers of local updates still convergence to the same test accuracies of the baselines given by our optimizer comparison in Figure~\ref{fig:opt_comp1} in \textit{significantly fewer} communication rounds. With a more skewed Dirichlet parameter of $\alpha = 0.5$, we note a more reduced rate of convergence, with many tests failing to reach an adequate test accuracy. Further experiments with lower $\alpha$ parameters down to $\alpha=0.1$ showed correspondingly slower convergence with many tests failing to converge at all. Overall, these tests empirically demonstrate that our method offers some resilience towards heterogeneous training data, though other methods will likely be required if class label distributions are very significantly skewed.


\section{Conclusions and Discussions}
We propose a local training based decentralized algorithmic framework with adaptive local update and compressed communication. 
The local update of our framework encompasses several well-known optimizers as special cases, including vanilla SGD, momentum SGD, Adam, AMSGrad, AdaGrad, and Adam-Mini, and the established convergence results apply to all these optimizers. 
To the best of our knowledge, this is the first work to provide theoretical convergence guarantees for \revise{adaptive stochastic methods} in MLT-based decentralized nonconvex optimization. 
Our empirical experiments further highlight the compounded benefits of integrating MLT with CC, demonstrating significant reductions in communication overhead without compromising convergence speed or accuracy.

\revise{Our current analysis relies on a bounded-gradient assumption, and our language-model experiments are limited to a small-scale transformer benchmark. Extending the framework to weaker assumptions, stronger data heterogeneity, and larger language models is a natural direction for future work. 
}

\section*{Acknowledgements}

The authors would like to thank three anonymous reviewers for their valuable comments. This work is
partly supported by NSF grant DMS-2208394, ONR grant N000142212573, and also by IBM through the
IBM-Rensselaer Future of Computing Research Collaboration.


\bibliography{DFL}
\bibliographystyle{tmlr}

\appendix
\section{Convergence analysis of compressed decentralized algorithms with multiple local adaptive gradient updates under nonconvex settings}
\label{app:4}

In this section, we give a complete analysis of our 
decentralized algorithmic framework. 
We write the updates of Algorithm~\ref{alg:dad2-2} in the more compact matrix form for all $t\in[TK]$,
\begin{align}
	& \textbf{M}^t = \beta_1 \vM^{t-1} + (1-\beta_1) \textbf{G}^t, \label{eq:allupdate-M-mat}\\
    & \text{Update } \vU^t \ge 0,\\
	& \vY^t = \frac{\mathbf{M}^t}{\sqrt{\vU^{t-1}+\delta}}, \label{eq:alldef-Y}\\
	& \mathbf{X}^{t+\frac{1}{2}} = \mathbf{X}^{t}-\alpha \vY^t,\label{eq:allupdate-X-half-mat} \\
	&\text{if mod}(t+1,K)=0, \text{ then } \underline{\mathbf{X}}^{t+1}=\underline{\mathbf{X}}^{t}+\cQ\left[\mathbf{X}^{t+\frac{1}{2}}-\underline{\mathbf{X}}^{t}\right], \label{eq:allupdate-X-under-mat}\\
&\hspace{4cm}\mathbf{X}^{t+1}=\mathbf{X}^{t+\frac{1}{2}}+  \gamma\underline{\mathbf{X}}^{t+1}(\mathbf{W}-\mathbf{I}) \label{eq:allupdate-X-mat2}, \\ 
&\text{else, }\mathbf{X}^{t+1}=\mathbf{X}^{t+\frac{1}{2}}, \,\,\underline{\mathbf{X}}^{t+1}= \underline{\mathbf{X}}^{t},
\label{eq:allupdateelse}
\end{align} 
where
\begin{align*}
	&\mathbf{G}^t=\left[\vg_{1}^t, \vg_{2}^t, \ldots, \vg_n^t  \right],\    \mathbf{M}^t = \left[\vm_{1}^t, \vm_{2}^t, \ldots, \vm_n^t  \right],\, \underline{\mathbf{X}}=\left[\underline\vx_{1}, \underline\vx_{2}, \ldots, \underline\vx_n  \right], \\
    & \cQ\left[\vX \right] = \left[ \cQ[\vx_1], \cQ[\vx_2],\ldots, \cQ[\vx_n] \right].
\end{align*}
We let 
$$	\overline{\vx} = \frac{1}{n} \mathbf{X} \mathbf{1}, \ 
\overline{\mathbf{X}} = \mathbf{X J} = \overline{\vx} \mathbf{1}^{\top},\ \overline{\mathbf{m}}^t=\frac{1}{n} \mathbf{M}^t \mathbf{1},\ \overline{\mathbf{y}}^t=\frac{1}{n} \mathbf{Y}^t \mathbf{1},\ \overline{\vY}^t= \vY \vJ.$$

First we establish bounds on the sequence $\{\vM^t\}$, $\{\vU^t\}$ and $\{\vY^t\}$.
\begin{lemma}
	\label{lem:allboundy}
	Under Assumption~\ref{ass:gradient}, 
	it holds that for any $t\in[TK]$,	
	\begin{align}
		&\|\vM^t\| \leq  \left(1-\beta_1^{t+1}\right) \sqrt{n} B  \leq \sqrt{n} B,\quad \|\vY^t\|  \leq \sqrt{n} B \delta^{-\frac{1}{2}}, \quad  \|\vY_{\perp}^t\|  \leq \sqrt{n} B \delta^{-\frac{1}{2}}, \label{eq:allbd-M-Y-2norm}\\ 
		& \|\vm_i^t\| \leq B, \quad \|\overline\vm^t\| \leq B, \quad \|\vm_i^t\|_{\infty} \leq B_{\infty},
		\, \forall\, i \in \{1,2,\ldots, n\}. \label{eq:allbd-M-u-inf-norm}
	\end{align}
	
\end{lemma}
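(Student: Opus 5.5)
The plan is to prove the per-agent bounds \eqref{eq:allbd-M-u-inf-norm} first, by induction on $t$ from the momentum recursion, and then assemble them into the Frobenius-norm bounds \eqref{eq:allbd-M-Y-2norm}.

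\textbf{Bounds on $\vm_i^t$.} Unrolling $\vm_i^t=\beta_1\vm_i^{t-1}+(1-\beta_1)\vg_i^t$ with $\vm_i^{-1}=\vzero$ gives
\[
\vm_i^t=(1-\beta_1)\sum_{s=0}^{t}\beta_1^{t-s}\vg_i^s .
\]
By the triangle inequality and Assumption~\ref{ass:gradient},
\[
\|\vm_i^t\|\le (1-\beta_1)\sum_{s=0}^t\beta_1^{t-s}B=(1-\beta_1^{t+1})B\le B .
\]
The identical argument with $\|\cdot\|_\infty$ and $B_\infty$ gives $\|\vm_i^t\|_\infty\le B_\infty$. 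Convexity of the norm then yields $\|\overline\vm^t\|\le \frac1n\sum_i\|\vm_i^t\|\le B$.

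\textbf{Bound on $\vM^t$.} Summing the squared column bounds gives
\[
\|\vM^t\|^2=\sum_i\|\vm_i^t\|^2\le n(1-\beta_1^{t+1})^2B^2 ,
\]
which is the first chain in \eqref{eq:allbd-M-Y-2norm}.

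\textbf{Bound on $\vY^t$.} Since $\vU^{t-1}\ge 0$ entrywise (with $\vU^{-1}=\vzero$), every entry of $1/\sqrt{\vU^{t-1}+\delta}$ lies in $(0,\delta^{-1/2}]$. Hence componentwise $|Y^t_{ki}|\le \delta^{-1/2}|M^t_{ki}|$, and therefore
\[
\|\vY^t\|\le\delta^{-1/2}\|\vM^t\|\le \sqrt n B\delta^{-1/2} .
\]

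\textbf{Bound on $\vY_\perp^t$.} We have $\vY_\perp^t=\vY^t(\vI-\vJ)$, and $\vI-\vJ$ is an orthogonal projection, so $\|\vI-\vJ\|_2\le1$. Using $\|\vA\vB\|\le\|\vA\|\,\|\vB\|_2$ gives $\|\vY_\perp^t\|\le\|\vY^t\|$.

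There is no real obstacle in this argument. The only point needing care is nonnegativity of $\vU^{t-1}$ for every optimizer in Table~\ref{tab:optimizers}, which holds because each update is built from averages or maxima of squared gradients, or is identically zero. The conclusion also uses only $\delta>0$ and no upper bound on $\vU$.
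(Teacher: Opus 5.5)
Your proposal is correct and follows the paper's proof in substance. The paper also unrolls the momentum recursion, one step at a time rather than as an explicit weighted sum, to get $\|\vm_i^t\|\le(1-\beta_1^{t+1})B$ and the analogous $\ell_\infty$ bound, then stacks columns and uses $\vU^{t-1}\ge\vzero$ for $\vY^t$. The only difference is cosmetic: for $\vY_\perp^t$ the paper writes $\|\vY_\perp^t\|^2=\|\vY^t\|^2-\|\overline{\vY}^t\|^2$ instead of invoking $\|\vI-\vJ\|_2\le 1$, which is the same orthogonal-projection fact.
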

\begin{proof}
	From the update of $\vm$, i.e., $\vm_{i}^{t}=\beta_1 \vm^{t-1}_{ i}+\left(1-\beta_1\right) {\vg}^{t}_{i}$, we have that for any $t\ge0$ and each $i\in \{1,2,\ldots, n\}$, 
	$$\|\vm_i^t\| =  \| \beta_1 \vm_i^{t-1} + (1-\beta_1) \textbf{g}_i^t \|
	\leq \beta_1\|\vm_i^{t-1}\| + (1-\beta_1) \|\textbf{g}_i^t\|
	\leq  \beta_1	\|\vm_i^{t-1}\|_{\infty} + (1-\beta_1)  B,$$
	where the second inequality holds from $\|\vg_i^t\|\leq B$ by Assumption~\ref{ass:gradient}. 		
	Recursively applying the inequality above and noticing $\vm_i^{-1}= \vzero $, we obtain 
	$$\|\vm_i^t\| \le \left(1+\beta_1+\beta_1^2+\ldots+\beta_1^{t}\right) (1-\beta_1)  {B}= \left(1-\beta_1^{t+1}\right) {B} \le B.$$
	Hence, it holds $\|\overline\vm^t\| \leq B$ and $\|\vM^t\| \le \left(1-\beta_1^{t+1}\right) \sqrt{n} B$.	 
	Now by $\vU^t \ge \vzero$, we immediately have	
	$\|\vY^t\| = \left\| \frac{\mathbf{M}^t}{\sqrt{\vU^{t-1}+\delta}} \right\| \leq  \sqrt{n} B \delta^{-\frac{1}{2}}$,
	and $
	\left\|\mathbf{Y}_{\perp}^{t}\right\|^2 = \left\|\vY^{t}- \overline\vY^{t}\right\|^2 = \left\|\vY^{t}\right\|^2 - \left\|\overline\vY^{t}\right\|^2 \le \frac{n B^2}{\delta}.$
	
	In addition, we have that for any $t\ge0$ and each $i\in \{1,2,\ldots, n\}$, 
	\begin{align*}
		&\|\vm_i^t\|_{\infty} =  \| \beta_1 \vm_i^{t-1} + (1-\beta_1) \textbf{g}_i^t \|_{\infty}
		\leq \beta_1\|\vm_i^{t-1}\|_{\infty} + (1-\beta_1) \|\textbf{g}_i^t\|_{\infty}
		\leq  \beta_1	\|\vm_i^{t-1}\|_{\infty} + (1-\beta_1)  B_{\infty} ,
	\end{align*}
	where the second inequality follows from $\|\vg_i^t\|_{\infty}\leq B_{\infty}$ by Assumption~\ref{ass:gradient}.
	Recursively applying the inequality above and noticing $\vm_i^{-1}= \vzero $, we obtain 
	$$\|\vm_i^t\|_{\infty} \le \left(1+\beta_1+\beta_1^2+\ldots+\beta_1^{t}\right) (1-\beta_1)  {B_{\infty}}= \left(1-\beta_1^{t+1}\right) {B_{\infty}} \le B_{\infty}.$$
	The proof is then completed.
\end{proof}

The next lemma shows the bound of the consensus error of $\vX$.
\begin{lemma}
	\label{lem:allxprepmain2}
	Under Assumptions~\ref{ass:problemsetup}--\ref{ass:gradient}, let 
    $\widehat\rho=1-\rho$,
    $0<\gamma\leq \frac{(1-\rho)(1-\eta^2)}{100}$, and $\cQ$ be an $\eta$-compression operator with $\eta\ge 0$. Then, the following statements hold:
	\begin{itemize}
		\item[$\mathrm{(i)}$] For any $r\in[T]$, it holds \begin{equation}\label{eq:allxprecom3}
			\begin{aligned}			&\mathbb{E}\left[\left\|\vX_{\perp}^{(r+1)K}\right\|^2\right]
				\leq  \left(1 + \frac{\gamma \widehat\rho}{16}\right) \left(1-\frac{\gamma\widehat\rho}{2} \right)^2 \mathbb{E}\left[\left\|\mathbf{X}_{\perp}^{rK}\right\|^2\right] \\
                & +4\left(1 + \frac{\gamma \widehat\rho}{16}\right) \gamma \left( 1 + \frac{2}{\widehat\rho} \right) \mathbb{E}\left[\left\|\mathbf{X}^{rK}-\underline{\mathbf{X}}^{(r+1)K}\right\|^2\right]  + 2 \left(1 + \frac{16}{\gamma \widehat\rho}\right) \alpha^2  K^2 n B^2 \delta^{-1}.
			\end{aligned}    
		\end{equation}
		\item[$\mathrm{(ii)}$] For any $r\in[T]$, it holds
		\begin{equation}\label{eq:allxprepcom2}	
			\begin{aligned}
				&\mathbb{E}\left[\left\|\mathbf{X}^{(r+1)K}-\underline{\mathbf{X}}^{(r+2)K}\right\|^2\right] \leq 4 \gamma^2  \left(1+\frac{\gamma\widehat\rho} {16}\right) \frac{4}{1-\eta^2}\mathbb{E}\left[\left\|\mathbf{X}_{\perp}^{rK}\right\|^2\right]  \\ 
				 & \quad\quad\quad+  \frac{3+\eta^2}{4}\left(1+8\gamma\right)
		\left(1+\frac{\gamma\widehat\rho}{16}\right)\mathbb{E}\left[\left\|\mathbf{X}^{rK}-\underline{\mathbf{X}}^{(r+1)K}\right\|^2\right]  \\
        &\quad\quad\quad+ \left(3 \left( 1 + \frac{16}{\gamma\widehat\rho} \right)+2\left( 1 + \frac{4}{1-\eta^2} \right)\right)\alpha^2  K^2 n B^2 \delta^{-1}.
			\end{aligned}    
		\end{equation}		
		\item[$\mathrm{(iii)}$]  It holds	\begin{align}
			\label{eq:allxprepcom22}
			&\frac{1}{ TK} \sum_{t=0}^{TK-1} \mathbb{E}\left[\left\|\mathbf{X}_{\perp}^t\right\|^2\right] \leq \alpha^2 nK^2\overline{C}, \text{ where } \overline{C}:= \frac{56}{\gamma \widehat\rho}\left( \frac{80}{\gamma\widehat\rho} +\frac{15}{1-\eta^2} \right) B^2 \delta^{-1}.
		\end{align}
        If in addition $\gamma = \Theta({(1-\rho)(1-\eta^2)})$, we have
        $\overline{C}:= \Theta\left( \frac{B^2}{(1-\rho)^4(1-\eta^2)^2} \right)$.
	\end{itemize}
\end{lemma}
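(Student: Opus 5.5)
The plan is to work at the level of communication rounds. Set $\Delta_r:=\sum_{s=rK}^{(r+1)K-1}\vY^s$. Between communications only local steps occur, so $\vX^{(r+1)K-\frac12}=\vX^{rK}-\alpha\Delta_r$. Lemma~\ref{lem:allboundy} gives $\|\Delta_r\|^2\le K^2nB^2\delta^{-1}$ deterministically. Every $\alpha^2K^2nB^2\delta^{-1}$ term in the statement will come from this bound, and all remaining quantities are measured at the round boundaries $t=rK$. Throughout I use two facts: $\|(\vI+\gamma(\vW-\vI))(\vI-\vJ)\|_2\le 1-\gamma\widehat\rho$, and $\|\vW-\vI\|_2\le 2$.

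\emph{Step (i).} By \eqref{eq:allupdate-X-mat2},
\[
\vX^{(r+1)K}=\vX^{rK}(\vI+\gamma(\vW-\vI))+\gamma(\underline\vX^{(r+1)K}-\vX^{rK})(\vW-\vI)-\alpha\Delta_r .
\]
Right-multiply by $\vI-\vJ$. First peel off the $\alpha\Delta_r$ term with Young's inequality using weights $(1+\frac{\gamma\widehat\rho}{16})$ and $(1+\frac{16}{\gamma\widehat\rho})$; this produces the last term of \eqref{eq:allxprecom3}. Then split the two remaining terms with weight $c=\gamma\widehat\rho/2$. The first piece gives $(1-\gamma\widehat\rho)^2(1+\gamma\widehat\rho/2)\le(1-\gamma\widehat\rho/2)^2$. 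The second gives $(1+\frac{2}{\gamma\widehat\rho})\cdot4\gamma^2=4\gamma(\gamma+\frac{2}{\widehat\rho})\le 4\gamma(1+\frac2{\widehat\rho})$. Taking expectations (including over $\cQ$) yields (i).

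\emph{Step (ii).} Let $\vZ:=\vX^{(r+2)K-\frac12}-\underline\vX^{(r+1)K}=\vX^{(r+1)K}-\underline\vX^{(r+1)K}-\alpha\Delta_{r+1}$. By \eqref{eq:allupdate-X-under-mat},
\[
\vX^{(r+1)K}-\underline\vX^{(r+2)K}=(\vZ-\cQ[\vZ])+\alpha\Delta_{r+1}.
\]
Apply Definition~\ref{def:com} conditionally, together with Young's inequality with weight $\frac{1-\eta^2}{4\eta^2}$-type constants; this bounds the expectation by $\frac{1+\eta^2}{2}\|\vZ\|^2$ plus an $O(\frac{1}{1-\eta^2})\alpha^2K^2nB^2\delta^{-1}$ term. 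Expanding $\|\vZ\|^2$ once more then turns $\frac{1+\eta^2}{2}$ into $\frac{3+\eta^2}{4}$ and accounts for the term $2(1+\frac{4}{1-\eta^2})$.

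Next, rewrite $\vX^{(r+1)K}-\underline\vX^{(r+1)K}$ in terms of round-$r$ quantities. Since $\underline\vX(\vW-\vI)$ only sees the consensus part,
\[
\vX^{(r+1)K}-\underline\vX^{(r+1)K}=(\vX^{rK}-\underline\vX^{(r+1)K})(\vI-\gamma(\vW-\vI))+\gamma\vX_\perp^{rK}(\vW-\vI)-\alpha\Delta_r .
\]
Here $\|\vI-\gamma(\vW-\vI)\|_2\le1+2\gamma$, and $(1+2\gamma)^2\le 1+8\gamma$ for small $\gamma$. One more Young split, with weights $(1+\frac{\gamma\widehat\rho}{16})$ on the main term and $\frac{4}{1-\eta^2}$ on the $\gamma\vX_\perp^{rK}$ term, then gives (ii).

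\emph{Step (iii) and main obstacle.} Define the Lyapunov function
\[
\Phi_r=\EE\|\vX_\perp^{rK}\|^2+\lambda\,\EE\|\vX^{rK}-\underline\vX^{(r+1)K}\|^2,\qquad \lambda\asymp\frac{\gamma}{\widehat\rho(1-\eta^2)} .
\]
The main obstacle is checking that the $2\times2$ recursion from (i)--(ii) contracts, which is constant-chasing driven by $\gamma\le\widehat\rho(1-\eta^2)/100$.
\begin{itemize}
\item Under this condition the second diagonal coefficient is at most $1-\frac{1-\eta^2}{8}$. The $\lambda$-gain then dominates the cross term $4\gamma(1+\frac2{\widehat\rho})(1+\frac{\gamma\widehat\rho}{16})$ coming from (i).
\item In the other direction, $\lambda\cdot\frac{64\gamma^2}{1-\eta^2}$ must stay below a fraction of $\gamma\widehat\rho$; the $1/100$ factor is exactly what makes this hold.
\end{itemize}
I expect the result to be $\Phi_{r+1}\le(1-\frac{\gamma\widehat\rho}{8})\Phi_r+D$ with $D=O\big((\frac{1}{\gamma\widehat\rho}+\frac{\lambda}{\gamma\widehat\rho}+\frac{\lambda}{1-\eta^2})\alpha^2K^2nB^2\delta^{-1}\big)$. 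The initial value is controlled because $\vX_\perp^0=\vzero$ and $\underline\vX^0=\vX^0$, so $\Phi_0=O(\lambda\alpha^2K^2nB^2\delta^{-1})$.

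Unrolling and averaging gives $\frac1T\sum_r\Phi_r\le\Phi_0+\frac{8D}{\gamma\widehat\rho}$. Finally, for $0\le s<K$,
\[
\|\vX_\perp^{rK+s}\|^2\le 2\|\vX_\perp^{rK}\|^2+2\alpha^2K^2nB^2\delta^{-1}.
\]
Averaging over $t$ then yields \eqref{eq:allxprepcom22}, with constants of the form $\frac{1}{\gamma\widehat\rho}(\frac{80}{\gamma\widehat\rho}+\frac{15}{1-\eta^2})$. Substituting $\gamma=\Theta(\widehat\rho(1-\eta^2))$ gives the stated order of $\overline C$.
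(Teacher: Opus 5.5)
Your step (i) is the paper's argument with the Young splits taken in a different order. Your weighted Lyapunov function in (iii) is a legitimate alternative to the paper's unweighted $\Omega^r=\EE\|\vX_\perp^{rK}\|^2+\EE\|\vX^{rK}-\underline{\vX}^{(r+1)K}\|^2$. Given (i)--(ii) as stated, any weight $\lambda$ between roughly $70\gamma/(\widehat\rho(1-\eta^2))$ and $\widehat\rho(1-\eta^2)/(20\gamma)$ gives contraction $1-\gamma\widehat\rho/8$. So your $\lambda\asymp\gamma/(\widehat\rho(1-\eta^2))$ works once its constant is large enough, and it drops the lower-order $\frac{1}{\gamma\widehat\rho(1-\eta^2)}$ term; the paper's $\lambda=1$ also lies in this window.

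The gap is in the last split of (ii). You first spend the entire compression slack, $\eta^2\to\frac{1+\eta^2}{2}\to\frac{3+\eta^2}{4}$. You then claim a Young split of $(\vX^{rK}-\underline{\vX}^{(r+1)K})(\vI-\gamma(\vW-\vI))+\gamma\vX_\perp^{rK}(\vW-\vI)$ with weight $1+\frac{\gamma\widehat\rho}{16}$ on the first term and $\frac{4}{1-\eta^2}$ on the second. Young's inequality requires the reciprocals of the two weights to sum to at most $1$. Weight $1+\frac{\gamma\widehat\rho}{16}$ on the first term therefore forces weight at least $1+\frac{16}{\gamma\widehat\rho}$ on the second. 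Even using the slack in $(1+2\gamma)^2\le 1+8\gamma$, you can only afford a split parameter of about $4\gamma$. That puts a coefficient of order $\gamma$, not $\frac{16\gamma^2}{1-\eta^2}$, on $\EE\|\vX_\perp^{rK}\|^2$.

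This is not cosmetic. Suppose the second row has a cross coefficient of order $\gamma$ and the first row has $4\gamma(1+\frac{2}{\widehat\rho})$. Then the $2\times 2$ recursion contracts under any weighting only if $\gamma=O(\widehat\rho^{2}(1-\eta^2))$. The hypothesis $\gamma\le\frac{\widehat\rho(1-\eta^2)}{100}$ does not ensure this when $\widehat\rho$ is small, so (iii) genuinely needs the $\gamma^2/(1-\eta^2)$ scaling.

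The repair, which is what the paper does, is to reserve part of the compression slack for this split:
\begin{itemize}
\item use $\frac{1-\eta^2}{4}$ as the Young parameter in all three places: compression error versus $\alpha\Delta_{r+1}$, the expansion of $\mathbf{Z}$, and main term versus $\gamma\vX_\perp^{rK}(\vW-\vI)$;
\item peel $\alpha\Delta_r$ off with parameter $\frac{\gamma\widehat\rho}{16}$;
\item verify $\eta^2\bigl(1+\frac{1-\eta^2}{4}\bigr)^3\le\frac{3+\eta^2}{4}$ and $\eta^2\bigl(1+\frac{1-\eta^2}{4}\bigr)^2\frac{5-\eta^2}{1-\eta^2}\le\frac{4}{1-\eta^2}$, both tight at $\eta=1$.
\end{itemize}
The paper keeps $\alpha\Delta_r$ inside $\vX^{(r+1)K-\frac{1}{2}}$ and peels it from each piece separately, which is equivalent.

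A minor slip: it is $\vX^{rK}(\vW-\vI)=\vX_\perp^{rK}(\vW-\vI)$, not $\underline{\vX}(\vW-\vI)$, that sees only the non-consensus part. Your displayed identity is nonetheless correct.
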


\begin{proof}
	(i) From the update rules~\eqref{eq:allupdate-X-half-mat}--\eqref{eq:allupdateelse}, we have, for all \(s \in [K-1]\),
	$$\mathbf{X}^{rK+s+1} = \mathbf{X}^{rK+s} - \alpha \mathbf{Y}^{rK+s},$$
	and for the final step in the block of size \(K\), we conduct the compressed communication step.
	
	First, using \eqref{eq:allbd-M-u-inf-norm} and the inequality \((a+b)^2 \leq (1+\eta_1)a^2 + (1+\eta_1^{-1})b^2\) for any \(\eta_1 > 0\), we have
	\begin{equation}\label{eq:alls+1}
		\begin{aligned}
		\|\mathbf{X}_{\perp}^{rK+s+1}\|^2 = &
		\left\|
		\mathbf{X}_{\perp}^{rK} - \alpha\sum_{i=0}^{s}\vY_{\perp}^{rK+i}
		\right\|^2
		\\
		\leq & (1+\eta_1)\|\mathbf{X}_{\perp}^{rK}\|^2 + (1+\eta_1^{-1})  \alpha^2\left\|
		 \sum_{i=0}^{s}\vY_{\perp}^{rK+i}
		\right\|^2,
		  \\
		\leq & (1+\eta_1)\|\mathbf{X}_{\perp}^{rK}\|^2 + (1+\eta_1^{-1})K^2\alpha^2 n B^2 \delta^{-1}, \quad \forall s \in [K-1].
		\end{aligned}
	\end{equation}

	Next, we analyze the case of \(s = K-1\). 
	By~\eqref{eq:allupdate-X-mat2}, it holds that 
    $$\mathbf{X}^{(r+1)K}_\perp = \mathbf{X}^{(r+1)K-\frac{1}{2}} - \mathbf{X}^{(r+1)K}\vJ +\gamma \underline{\vX}^{(r+1)K}(\vW -\vI).$$
    Noticing $\mathbf{X}^{(r+1)K}\vJ = \mathbf{X}^{(r+1)K-\frac{1}{2}}\vJ$ from \eqref{eq:allupdate-X-under-mat}--\eqref{eq:allupdate-X-mat2}, and $\vJ(\vW-\vI)=\vzero$, we 
     have 
     \begin{align}
         \notag \mathbf{X}^{(r+1)K}_\perp & = \mathbf{X}^{(r+1)K-\frac{1}{2}} - \mathbf{X}^{(r+1)K-\frac{1}{2}}\vJ +\gamma (\underline{\vX}^{(r+1)K}-\mathbf{X}^{(r+1)K-\frac{1}{2}}\vJ)(\vW -\vI)\\
         & = \mathbf{X}^{(r+1)K-\frac{1}{2}} (\vI-\vJ) ((1-\gamma) \vI +\gamma \vW)+\gamma(\underline{\mathbf{X}}^{(r+1)K}-\mathbf{X}^{(r+1)K-\frac{1}{2}})(\mathbf{W}-\mathbf{I}).
         \label{eq:allreform-X-t+1}
     \end{align}

    Denote $\widehat\vW = (1-\gamma) \vI +\gamma \vW$.
    For any $\eta_2>0$, it then holds
	\begin{align}
		\notag
		&\left\|\vX_{\perp}^{(r+1)K}\right\|^2 \\\leq &\notag (1+\eta_2)\left\|\vX^{(r+1)K-\frac{1}{2}}(\mathbf{I}-\mathbf{J})\widehat\vW\right\|^2 + \left(1+\eta_2^{-1}\right) \left\|\gamma\left(\underline{\mathbf{X}}^{(r+1)K}-\mathbf{X}^{(r+1)K-\frac{1}{2}}\right)(\mathbf{W}-\mathbf{I})\right\|^2  \\ \leq  & (1+\eta_2)\left\|\vX^{(r+1)K-\frac{1}{2}} (\mathbf{I}-\mathbf{J})\widehat\vW\right\|^2 + 4\left(1+\eta_2^{-1}\right)\gamma^2 \left\|\left(\underline{\mathbf{X}}^{(r+1)K}-\mathbf{X}^{(r+1)K-\frac{1}{2}}\right)\right\|^2,  \label{eq:Xprepcom}
	\end{align}
	where the second inequality follows from $\|\mathbf{W}-\mathbf{I}\|_2 \leq 2$.  

    Recalling $\widehat\rho = 1-\rho$ and
    by $(\vI - \vJ)\vJ=\vzero$, we have
    \begin{align}
        \notag &\left\|\vX^{(r+1)K-\frac{1}{2}} (\mathbf{I}-\mathbf{J})\widehat\vW\right\| \\ 
        \notag \leq &(1-\gamma)\left\|\vX^{(r+1)K-\frac{1}{2}} (\mathbf{I}-\mathbf{J})\right\| + \gamma\left\|\vX^{(r+1)K-\frac{1}{2}} (\mathbf{I}-\mathbf{J})\vW\right\| \\
        \notag= & (1-\gamma)\left\|\vX^{(r+1)K-\frac{1}{2}} (\mathbf{I}-\mathbf{J})\right\| + \gamma\left\|\vX^{(r+1)K-\frac{1}{2}} (\mathbf{I}-\mathbf{J})(\vW-\vJ)\right\| \\
        \notag\leq & (1-\gamma)\left\|\vX^{(r+1)K-\frac{1}{2}} (\mathbf{I}-\mathbf{J})\right\| + \gamma \rho\left\|\vX^{(r+1)K-\frac{1}{2}} (\mathbf{I}-\mathbf{J})\right\| \\ 
        = &(1-\gamma\widehat\rho) \left\|\vX^{(r+1)K-\frac{1}{2}} (\mathbf{I}-\mathbf{J})\right\|.
        \label{eq:boundxpre1norm}
    \end{align}
    Substituting \eqref{eq:boundxpre1norm} into \eqref{eq:Xprepcom}, we obtain 
\begin{align}
		\notag
		&\left\|\vX_{\perp}^{(r+1)K}\right\|^2   \\\leq 
        &(1+\eta_2)(1-\gamma\widehat\rho)^2\left\|\vX^{(r+1)K-\frac{1}{2}} (\mathbf{I}-\mathbf{J}) \right\|^2 + 4\left(1+\eta_2^{-1}\right)\gamma^2 \left\| \underline{\mathbf{X}}^{(r+1)K}-\mathbf{X}^{(r+1)K-\frac{1}{2}} \right\|^2.\label{eq:Xprepcom2}
	\end{align}

    For the first term in the RHS of \eqref{eq:Xprepcom2},    using the bound \(\|\mathbf{Y}_{\perp}^t\| \leq \sqrt{n}B\delta^{-\frac{1}{2}}\), we have
	\begin{align}
		& \notag \left\|\mathbf{X}^{(r+1)K-\frac{1}{2}}  (\mathbf{I}-\mathbf{J})\right\|^2  =  \left\|\left(\mathbf{X}^{(r+1)K-1}-\alpha \mathbf{Y}^{(r+1)K-1}\right)({\vI}-\mathbf{J})\right\|^2  \\
		\notag=   &
		\left\|\mathbf{X}_{\perp}^{(r+1)K-1}-\alpha \mathbf{Y}_{\perp}^{(r+1)K-1}\right\|^2   \\
		\stackrel{\eqref{eq:alls+1}}{\le} & \left(1 + \eta_1\right)  \left\|\mathbf{X}_{\perp}^{rK}\right\|^2 +\left(1 + \eta_1^{-1}\right)   \alpha^2  K^2 n B^2 \delta^{-1}. \label{eq:boundxprep2}
	\end{align}

    For the second term in the RHS of \eqref{eq:Xprepcom2},  using the bound \(\|\mathbf{Y}^t\| \leq \sqrt{n}B\delta^{-\frac{1}{2}}\), we have
    \begin{align}
        \notag&\left\| \underline{\mathbf{X}}^{(r+1)K}-\mathbf{X}^{(r+1)K-\frac{1}{2}} \right\|^2 = \left\| \underline{\mathbf{X}}^{(r+1)K}-\mathbf{X}^{rK} + \alpha \sum_{i=0}^{K-1}\mathbf{Y}^{rK+i}\right\|^2
        \\
        \notag\le & (1+\eta_1) \left\| \underline{\mathbf{X}}^{(r+1)K}-\mathbf{X}^{rK} \right\|^2 + (1+\eta_1^{-1})  \alpha^2\left\|
		 \sum_{i=0}^{K-1}\vY^{rK+i}
		\right\|^2
        \\
        \le &
        (1+\eta_1) \left\| \underline{\mathbf{X}}^{(r+1)K}-\mathbf{X}^{rK} \right\|^2+\left(1 + \eta_1^{-1}\right)   \alpha^2  K^2 n B^2 \delta^{-1}.
        \label{eq:boundxprep3}
    \end{align}
    Plugging \eqref{eq:boundxprep2}--\eqref{eq:boundxprep3} back into \eqref{eq:Xprepcom2}, we arrive at 
   \begin{align}
		\notag
		&\left\|\vX_{\perp}^{(r+1)K}\right\|^2   \\
        \notag\leq &
(1+\eta_2)(1-\gamma\widehat\rho)^2 \left(1 + \eta_1\right)  \left\|\mathbf{X}_{\perp}^{rK}\right\|^2 + 
        4\left(1+\eta_2^{-1}\right)\gamma^2 (1+\eta_1) \left\| \underline{\mathbf{X}}^{(r+1)K}-\mathbf{X}^{rK} \right\|^2
        \\
        & + \left((1+\eta_2)(1-\gamma\widehat\rho)^2 + 4\left(1+\eta_2^{-1}\right)\gamma^2 \right)\left(1 + \eta_1^{-1}\right) \alpha^2  K^2 n B^2 \delta^{-1} .\label{eq:Xprepcom2i}
	\end{align}

    Let $\eta_1 = \frac{\gamma \widehat\rho}{16}$ and $\eta_2 = \frac{\gamma \widehat\rho}{2}$. By the definition of $\gamma$, $0<\gamma\widehat\rho<1$, we then have 
    \begin{align}
        &(1+\eta_2)(1-\gamma\widehat\rho)^2 \left(1 + \eta_1\right) \leq  \left(1 + \frac{\gamma \widehat\rho}{16}\right) \left(1-\frac{\gamma\widehat\rho}{2} \right)^2,\label{eq:Xprepcom2i-1}\\
        &4\left(1+\eta_2^{-1}\right)\gamma^2 (1+\eta_1) \leq 4\left(1 + \frac{\gamma \widehat\rho}{16}\right) \gamma \left( 1 + \frac{2}{\widehat\rho} \right),\label{eq:Xprepcom2i-2}\\
        & \left((1+\eta_2)(1-\gamma\widehat\rho)^2 + 4\left(1+\eta_2^{-1}\right)\gamma^2 \right)\left(1 + \eta_1^{-1}\right)  \leq 2 \left(1 + \frac{16}{\gamma \widehat\rho}\right).\label{eq:Xprepcom2i-3}
    \end{align}
    We then complete the proof of \eqref{eq:allxprecom3} by using the inequalities \eqref{eq:Xprepcom2i-1}-\eqref{eq:Xprepcom2i-3} in \eqref{eq:Xprepcom2i}.

(ii) From \eqref{eq:allupdate-X-under-mat} and Definition \ref{def:com}, for any $\eta_3>0$, it holds that 
\begin{align}
    \notag & 
    \mathbb{E}\left\| \underline{\mathbf{X}}^{(r+2)K}-\mathbf{X}^{(r+1)K} \right\|^2
    = \mathbb{E}\left\| \underline{\mathbf{X}}^{(r+2)K-1}+\cQ\left[\mathbf{X}^{(r+2)K-\frac{1}{2}}-\underline{\mathbf{X}}^{(r+2)K-1}\right]-\mathbf{X}^{(r+1)K} \right\|^2
    \\
    = & 
    \mathbb{E}\left\| \underline{\mathbf{X}}^{(r+2)K-1}- \mathbf{X}^{(r+2)K-\frac{1}{2}}+ \mathbf{X}^{(r+2)K-\frac{1}{2}}-\mathbf{X}^{(r+1)K} +\cQ\left[\mathbf{X}^{(r+2)K-\frac{1}{2}}-\underline{\mathbf{X}}^{(r+2)K-1}\right]\right\|^2 
    \notag \\
    \notag\leq & (1+\eta_3)
   \mathbb{E} \left\| \mathbf{X}^{(r+2)K-\frac{1}{2}}- \underline{\mathbf{X}}^{(r+2)K-1} -\cQ\left[\mathbf{X}^{(r+2)K-\frac{1}{2}}-\underline{\mathbf{X}}^{(r+2)K-1}\right]\right\|^2  \\ \notag&+ (1+\eta_3^{-1}) 
   \mathbb{E}\left\|\mathbf{X}^{(r+2)K-\frac{1}{2}}-\mathbf{X}^{(r+1)K}  \right\|^2 
    \\
    \leq &
    (1+\eta_3)\eta^2 \mathbb{E}\left\| \mathbf{X}^{(r+2)K-\frac{1}{2}}- \underline{\mathbf{X}}^{(r+2)K-1}\right\|^2   + (1+\eta_3^{-1})\mathbb{E}\left\|\mathbf{X}^{(r+2)K-\frac{1}{2}}-\mathbf{X}^{(r+1)K}  \right\|^2.
    \label{eq:underxbound1}
\end{align}
For the second term in the RHS of \eqref{eq:underxbound1},  using the bound \(\|\mathbf{Y}^t\| \leq \sqrt{n}B\delta^{-\frac{1}{2}}\), we obtain
\begin{align}
    \left\|\mathbf{X}^{(r+2)K-\frac{1}{2}}-\mathbf{X}^{(r+1)K}  \right\|^2 
    =  \alpha^2\left\|
		 \sum_{i=0}^{K-1}\vY^{(r+1)K+i}
		\right\|^2 \leq \alpha^2  K^2 n B^2 \delta^{-1}.
        \label{eq:underxbound2}
\end{align}
For the first term in the RHS of \eqref{eq:underxbound1}, by $\underline{\mathbf{X}}^{(r+2)K-1} = \underline{\mathbf{X}}^{(r+1)K}$ and   \(\|\mathbf{Y}^t\| \leq \sqrt{n}B\delta^{-\frac{1}{2}}\),  for any $\eta_4>0$, we have
\begin{align}
    \notag& \mathbb{E}\left\| \mathbf{X}^{(r+2)K-\frac{1}{2}}- \underline{\mathbf{X}}^{(r+2)K-1}\right\|^2 
    =   \mathbb{E}\left\| \mathbf{X}^{(r+1)K} - \alpha \sum_{i=0}^{K-1}\vY^{(r+1)K+i}- \underline{\mathbf{X}}^{(r+1)K}\right\|^2 \\
    \notag\leq & (1+\eta_4)  \mathbb{E}\left\| \mathbf{X}^{(r+1)K}  - \underline{\mathbf{X}}^{(r+1)K}\right\|^2  +  (1+\eta_4^{-1}) \mathbb{E}\left\|   \alpha \sum_{i=0}^{K-1}\vY^{(r+1)K+i} \right\|^2\\
    \leq &
    (1+\eta_4)  \mathbb{E}\left\| \mathbf{X}^{(r+1)K}  - \underline{\mathbf{X}}^{(r+1)K}\right\|^2  +  (1+\eta_4^{-1}) \alpha^2  K^2 n B^2 \delta^{-1}.
        \label{eq:underxbound3}
\end{align}
Substituting \eqref{eq:underxbound2}--\eqref{eq:underxbound3} into \eqref{eq:underxbound1}, we have
\begin{align}
    \notag & \mathbb{E}\left\| \underline{\mathbf{X}}^{(r+2)K}-\mathbf{X}^{(r+1)K} \right\|^2\\
   \notag \leq &
    (1+\eta_3)\eta^2\left((1+\eta_4)  \mathbb{E}\left\| \mathbf{X}^{(r+1)K}  - \underline{\mathbf{X}}^{(r+1)K}\right\|^2  +  (1+\eta_4^{-1}) \alpha^2  K^2 n B^2 \delta^{-1} \right)   \\ &+ (1+\eta_3^{-1})\alpha^2  K^2 n B^2 \delta^{-1}.
    \label{eq:underxboundall}
\end{align}
We now bound $\left\| \mathbf{X}^{(r+1)K}  - \underline{\mathbf{X}}^{(r+1)K}\right\|^2.$ By~\eqref{eq:allreform-X-t+1}, we have that for any $\eta_5>0$, 
	\begin{align}
		&\notag \mathbb{E}\left[\left\|\vX^{(r+1)K}-\underline{\vX}^{(r+1)K}\right\|^2\right] \\\notag =&\mathbb{E}\left[\left\|\left(\underline{\vX}^{(r+1)K}-\vX^{(r+1)K-\frac{1}{2}}\right)\left(\gamma(\mathbf{W}-\mathbf{I})-\mathbf{I}\right)+\gamma \vX^{(r+1)K-\frac{1}{2}}(\mathbf{I}-\mathbf{J})(\mathbf{W}-\mathbf{I})\right\|^2\right] \\
		\leq & \left(1+\eta_5\right)\left(1+2 \gamma\right)^2 \mathbb{E}\left[\left\|\underline{\vX}^{(r+1)K}-\vX^{(r+1)K-\frac{1}{2}}\right\|^2\right]+\left(1+\eta_5^{-1}\right) 4 \gamma^2 \mathbb{E}\left[\left\|\vX_{\perp}^{(r+1)K-\frac{1}{2}}\right\|^2\right], \label{eq:gunderg}
	\end{align}
	where we have used $\mathbf{J W}=\mathbf{J}$ in the equality and $\left\|\gamma(\mathbf{W}-\mathbf{I})-\mathbf{I}\right\|_2 \leq \gamma\|\mathbf{W}-\mathbf{I}\|_2+\|\mathbf{I}\|_2 \leq 1+2 \gamma$ and $\|\mathbf{W}-\mathbf{I}\|_2 \leq 2$ in the inequality. 

    For the first term in the RHS of~\eqref{eq:gunderg}, we know from \eqref{eq:boundxprep3} that for any $\eta_1>0$,
    \begin{align}
        &\left\| \underline{\mathbf{X}}^{(r+1)K}-\mathbf{X}^{(r+1)K-\frac{1}{2}} \right\|^2  
        \le 
        (1+\eta_1) \left\| \underline{\mathbf{X}}^{(r+1)K}-\mathbf{X}^{rK} \right\|^2+\left(1 + \eta_1^{-1}\right)   \alpha^2  K^2 n B^2 \delta^{-1}.
        \label{eq:boundxprep4}
    \end{align}
    For the second term in the RHS of~\eqref{eq:gunderg}, we know from \eqref{eq:boundxprep2} that 
	\begin{align}
        \left\|\vX_{\perp}^{(r+1)K-\frac{1}{2}}\right\|^2
        = \left\|\mathbf{X}^{(r+1)K-\frac{1}{2}}  (\mathbf{I}-\mathbf{J})\right\|^2  \le \left(1 + \eta_1\right)  \left\|\mathbf{X}_{\perp}^{rK}\right\|^2 +\left(1 + \eta_1^{-1}\right)   \alpha^2  K^2 n B^2 \delta^{-1}. \label{eq:boundxprep5}
	\end{align}
    
	Plugging~\eqref{eq:boundxprep4} and~\eqref{eq:boundxprep5} into~\eqref{eq:gunderg}, we have
	\begin{align}
		\notag
		&\mathbb{E}\left[\left\|\mathbf{X}^{(r+1)K}-\underline{\mathbf{X}}^{(r+1)K}\right\|^2\right] \\\notag
		\leq & \left(1+\eta_5^{-1}\right) 4 \gamma^2 \left(\left(1 + \eta_1\right)  \left\|\mathbf{X}_{\perp}^{rK}\right\|^2 +\left(1 + \eta_1^{-1}\right)   \alpha^2  K^2 n B^2 \delta^{-1}\right) \\ & +   \left(1+\eta_5\right)\left(1+2 \gamma\right)^2 \bigg(  
		(1+\eta_1) \left\| \underline{\mathbf{X}}^{(r+1)K}-\mathbf{X}^{rK} \right\|^2+\left(1 + \eta_1^{-1}\right)   \alpha^2  K^2 n B^2 \delta^{-1} \bigg).
		\label{eq:xprep13}
	\end{align}
    Combining \eqref{eq:xprep13} and \eqref{eq:underxboundall}, we arrive at 
\begin{align}
    \notag & \mathbb{E}\left\| \underline{\mathbf{X}}^{(r+2)K}-\mathbf{X}^{(r+1)K} \right\|^2\\
   \notag \leq &
    (1+\eta_3)\eta^2 (1+\eta_4)   \left(1+\eta_5^{-1}\right) 4 \gamma^2  \left(1 + \eta_1\right)  \left\|\mathbf{X}_{\perp}^{rK}\right\|^2  
    \\ 
    \notag& +(1+\eta_3)\eta^2 (1+\eta_4) \left(1+\eta_5\right)\left(1+2 \gamma\right)^2  
		(1+\eta_1) \left\| \underline{\mathbf{X}}^{(r+1)K}-\mathbf{X}^{rK} \right\|^2 
        \\
      \notag  & +(1+\eta_3)\eta^2 (1+\eta_4)   \left(1+\eta_5^{-1}\right) 4 \gamma^2 \left(1 + \eta_1^{-1}\right)   \alpha^2  K^2 n B^2 \delta^{-1} 
     \notag   \\ \notag& +
        (1+\eta_3)\eta^2 (1+\eta_4) \left(1+\eta_5\right)\left(1+2 \gamma\right)^2 \left(1 + \eta_1^{-1}\right)   \alpha^2  K^2 n B^2 \delta^{-1}
    \\ &+  (1+\eta_3)\eta^2(1+\eta_4^{-1}) \alpha^2  K^2 n B^2 \delta^{-1}+ (1+\eta_3^{-1})\alpha^2  K^2 n B^2 \delta^{-1}.
    \label{eq:underxboundallnew}
\end{align}

    Let $\eta_3=\eta_4 = \eta_5 = \frac{1-\eta^2}{4}$, and $\eta_1 = \frac{\gamma\widehat\rho}{16}$. By $\gamma \leq \frac{(1-\rho)(1-\eta^2)}{100}$ and $0<\eta, \widehat\rho<1$, we have 
    \begin{align}
       & (1+\eta_3)\eta^2 (1+\eta_4) \left(1+\eta_5\right)\left(1+2 \gamma\right)^2  
		(1+\eta_1)  \leq \frac{3+\eta^2}{4}\left(1+8\gamma\right)
		\left(1+\frac{\gamma\widehat\rho}{16}\right), \label{eq:underxboundallnew-1}\\
        & (1+\eta_3)\eta^2 (1+\eta_4)   \left(1+\eta_5^{-1}\right) 4 \gamma^2  \left(1 + \eta_1\right) \leq 4 \gamma^2  \left(1+\frac{\gamma\widehat\rho} {16}\right) \frac{4}{1-\eta^2},\\
        & (1+\eta_3)\eta^2 (1+\eta_4)   \left(1+\eta_5^{-1}\right) 4 \gamma^2 \left(1 + \eta_1^{-1}\right) \leq 1 + \frac{16}{\gamma\widehat\rho},\\
        & (1+\eta_3)\eta^2 (1+\eta_4) \left(1+\eta_5\right)\left(1+2 \gamma\right)^2 \left(1 + \eta_1^{-1}\right)  \leq 2 \left( 1 + \frac{16}{\gamma\widehat\rho} \right), \\
        & (1+\eta_3)\eta^2(1+\eta_4^{-1}) + (1+\eta_3^{-1}) \leq 2\left( 1 + \frac{4}{1-\eta^2} \right).
        \label{eq:etabound5}
    \end{align}
    We complete the proof of \eqref{eq:allxprepcom2} by using the inequalities \eqref{eq:underxboundallnew-1}-\eqref{eq:etabound5} in \eqref{eq:underxboundallnew}.
	
	(iii) 
	Denote
	$ \Omega^r=\mathbb{E}\left[\left\|\mathbf{X}_{\perp}^{rK}\right\|^2\right] + \mathbb{E}\left[\left\|\vX^{rK}-\underline{\vX}^{(r+1)K}\right\|^2\right].
	$
	Then, since $0<\eta<1$, the two inequalities in \eqref{eq:allxprecom3} and~\eqref{eq:allxprepcom2} imply 
    \begin{align}
        \label{eq:mainomega}
        \Omega^{k+1} \leq A_0\Omega^k+A_1,
    \end{align}  where
    \begin{align}
         &A_0 = \left(1 + \frac{\gamma \widehat\rho}{16}\right)\max\bigg\{  \left(1-\frac{\gamma\widehat\rho}{2} \right)^2 + 4 \gamma^2    \frac{4}{1-\eta^2},  4  \gamma \left( 1 + \frac{2}{\widehat\rho} \right) + \frac{3+\eta^2}{4}\left(1+8\gamma\right)
		  \bigg\}\\
          &A_1 = \left( \frac{80}{\gamma\widehat\rho} +\frac{15}{1-\eta^2} \right)\alpha^2  K^2 n B^2 \delta^{-1}.
    \end{align}
    By $ \widehat\rho = 1-\rho$, it holds that
    \begin{align}
        \gamma \leq \frac{(1-\rho)(1-\eta^2)}{100} \leq  \min\left\{ \frac{2\widehat\rho(1-\eta^2)}{\widehat\rho^2 + 32\widehat\rho  + 64 + 48\widehat\rho +16\widehat\rho\eta^2} , \frac{7\widehat\rho(1-\eta^2)}{128 + 2\widehat\rho^2 (1-\eta^2)}  \right\}.
    \end{align}
    Notice that $\gamma\leq \frac{7\widehat\rho(1-\eta^2)}{128 + 2\widehat\rho^2 (1-\eta^2)} $ yields $\left(1-\frac{\gamma\widehat\rho}{2} \right)^2 + 4 \gamma^2    \frac{4}{1-\eta^2}\leq 1- \frac{\widehat\rho \gamma}{8}$, and $\gamma \leq \frac{2\widehat\rho(1-\eta^2)}{\widehat\rho^2 + 32\widehat\rho  + 64 + 48\widehat\rho +16\widehat\rho\eta^2}$ implies $4  \gamma \left( 1 + \frac{2}{\widehat\rho} \right) + \frac{3+\eta^2}{4}\left(1+8\gamma\right)\leq 1- \frac{\widehat\rho \gamma}{8}$.
    Thus
    $$
    A_0 \leq \left(1 + \frac{\gamma \widehat\rho}{16}\right) \left(1- \frac{\widehat\rho \gamma}{8}\right) \leq 1 - \frac{\gamma \widehat\rho}{16} <1.
    $$

    From $\vx_i^0=\overline\vx^0 = \underline{\vx}^0_i, \, \forall\, i \in \{1,2,\ldots,n\}$, we have
	$
	\|\mathbf{X}_{\perp}^0\|^2=0
	$
    and $\underline{\vX}^0=\vzero$.
    By \eqref{eq:underxboundall} with $r=-1$ and \eqref{eq:etabound5}, we have
$$\mathbb{E}\left[\left\|\vX^{0}-\underline{\vX}^{K}\right\|^2\right]\leq 2\left( 1 + \frac{4}{1-\eta^2} \right) \alpha^2  K^2 n B^2 \delta^{-1}  \leq \frac{2}{\gamma \widehat\rho} A_1.$$
	Thus, multiplying both sides of \eqref{eq:mainomega} by $A_0^{r-k}$ and summing it over $k=0,1,\ldots,r-1$ gives 
    \begin{align}
        \notag
        \Omega^{r} \leq &
        A_0^{r+1}\Omega^0+ \sum_{k=0}^{r-1}A_0^{r-k} A_1 
        \leq  \Omega^0+ \frac{16}{\gamma \widehat\rho} A_1 
        \\ = & 
        \frac{16}{\gamma \widehat\rho} A_1 + \mathbb{E}\left[\left\|\vX^{0}-\underline{\vX}^{K}\right\|^2\right] \\
         {\le} & \frac{16}{\gamma \widehat\rho} A_1 + 2\left( 1 + \frac{4}{1-\eta^2} \right) \alpha^2  K^2 n B^2 \delta^{-1}  \leq \frac{18}{\gamma \widehat\rho} A_1.
         \label{eq:mainomega2}
    \end{align}
    
Summing up the above inequality  for all $r=0,1, \ldots, T-1$, we obtain
	\begin{align}
		\frac{1}{ T} \sum_{r=0}^{T-1} \mathbb{E}\left[\left\|\mathbf{X}_{\perp}^{rK}\right\|^2\right] \leq \frac{18}{\gamma \widehat\rho} A_1.
	\end{align}	
	Summing~\eqref{eq:alls+1} with $\eta_1=1$ over \(s \in \{ 0 ,1, \ldots,K-2\}\), we derive that, for all \(r \in [T+1]\),
	\begin{align*}
		\frac{1}{K}\sum_{s=1}^{K-1}
		\|\mathbf{X}_{\perp}^{rK+s}\|^2  \leq  2\|\mathbf{X}_{\perp}^{rK}\|^2 + 2K^2\alpha^2 n B^2 \delta^{-1}.
	\end{align*}
	This implies 
	\begin{align*}
		\frac{1}{K}\sum_{s=0}^{K-1}
		\|\mathbf{X}_{\perp}^{rK+s}\|^2  \leq  3\|\mathbf{X}_{\perp}^{rK}\|^2 + 2K^2\alpha^2 n B^2 \delta^{-1}.
	\end{align*}
	Thus, we have 
	\begin{align*}
		\frac{1}{TK}\sum_{t=0}^{TK-1}\mathbb{E}\|\mathbf{X}_{\perp}^{t}\|^2 =\frac{1}{ T} \sum_{r=0}^{T-1} \frac{1}{K}\sum_{s=0}^{K-1}\mathbb{E}\|\mathbf{X}_{\perp}^{rK+s}\|^2 
		\leq &\frac{1}{ T} \sum_{r=0}^{T-1}\left( 3\mathbb{E}\|\mathbf{X}_{\perp}^{rK}\|^2 + 2K^2\alpha^2 n B^2 \delta^{-1}\right)
		\leq  \frac{56}{\gamma \widehat\rho} A_1.
	\end{align*}
	This completes the proof.
\end{proof}

To prove the convergence of our algorithm, we define an auxiliary sequence as follows
\begin{equation}
	\label{eq:allZ}
	\vz^t=\overline{\vx}^t+\frac{\beta_1}{1-\beta_1}\left(\overline{\vx}^t-\overline{\vx}^{t-1}\right), \forall\, t\in [TK],
\end{equation}
with $\overline{\vx}^{-1} = \overline{\vx}^0$. The lemma below shows the difference of two consecutive $\vz$-points. 

\begin{lemma}
	\label{lem:alldiffZ}
	Let $\{\vz^t\}$ be defined in~\eqref{eq:allZ}.	It holds that for all $t\in [TK]$,
	\begin{equation}
		\label{eq:alldiffz}
		\vz^{t+1}-\vz^{t}= \frac{ \beta_1}{1-\beta_1} \frac{\alpha}{n} \sum_{i=1}^n \vm^{t-1}_{ i} \circ\left(\frac{1}{\sqrt{\vu^{t-2}_{ i}+\delta}}-\frac{1}{\sqrt{\vu^{t-1}_{ i}+\delta}}\right)- \frac{\alpha}{n} \sum_{i=1}^n \frac{{\vg}^{t}_{ i}}{\sqrt{\vu^{t-1}_{ i}+\delta}},
	\end{equation}
	where $\vu_i^{-2}=\vzero$.
\end{lemma}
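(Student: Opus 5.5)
The plan is a direct computation. The first step is to obtain an exact recursion for the averaged iterate $\overline{\vx}^t$. Multiplying the matrix form \eqref{eq:allupdate-X-half-mat}--\eqref{eq:allupdateelse} on the right by $\frac1n\vone$ and using $(\vW-\vI)\vone=\vzero$ (double stochasticity, Assumption~\ref{ass:W}), the communication step leaves the average unchanged. Hence, whether or not $\mathrm{mod}(t+1,K)=0$,
\[
\overline{\vx}^{t+1}=\overline{\vx}^t-\frac{\alpha}{n}\sum_{i=1}^n \frac{\vm_i^t}{\sqrt{\vu_i^{t-1}+\delta}} .
\]
Consequently the compression operator and $\gamma$ play no role in this lemma.

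Next, expand $\vz^{t+1}-\vz^t$ from \eqref{eq:allZ}:
\[
\vz^{t+1}-\vz^t=\frac{1}{1-\beta_1}(\overline{\vx}^{t+1}-\overline{\vx}^t)-\frac{\beta_1}{1-\beta_1}(\overline{\vx}^{t}-\overline{\vx}^{t-1}).
\]
Substitute the recursion into both differences. In the first difference, split $\vm_i^t=\beta_1\vm_i^{t-1}+(1-\beta_1)\vg_i^t$. The $(1-\beta_1)\vg_i^t$ part gives exactly $-\frac{\alpha}{n}\sum_i \vg_i^t/\sqrt{\vu_i^{t-1}+\delta}$. The $\beta_1\vm_i^{t-1}$ part gives $-\frac{\beta_1}{1-\beta_1}\frac{\alpha}{n}\sum_i \vm_i^{t-1}/\sqrt{\vu_i^{t-1}+\delta}$. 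The second difference contributes $+\frac{\beta_1}{1-\beta_1}\frac{\alpha}{n}\sum_i \vm_i^{t-1}/\sqrt{\vu_i^{t-2}+\delta}$. Combining these two $\vm_i^{t-1}$ terms componentwise yields the Hadamard-product term in \eqref{eq:alldiffz}.

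The only delicate point is the boundary case $t=0$, where $\overline{\vx}^{-1}=\overline{\vx}^0$. In that case the second difference is zero instead of $-\frac{\alpha}{n}\sum_i \vm_i^{-1}/\sqrt{\vu_i^{-2}+\delta}$. Since $\vm_i^{-1}=\vzero$, however, the formula's $\vm^{-1}$ term vanishes as well, so the convention $\vu_i^{-2}=\vzero$ is harmless and \eqref{eq:alldiffz} still holds. I would also confirm that $\vu_i^{-1}$ is well defined (it is initialized to $\vzero$), so that the $t=0$ step uses $\sqrt{\delta}$. This boundary check is the main thing to get right; everything else is algebra.
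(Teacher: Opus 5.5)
Your proposal is correct and follows the same route as the paper. You derive the averaged recursion $\overline{\vx}^{t+1}=\overline{\vx}^t-\frac{\alpha}{n}\sum_i \vm_i^t/\sqrt{\vu_i^{t-1}+\delta}$ from $(\vW-\vI)\vone=\vzero$, expand $\vz^{t+1}-\vz^t$, split $\vm_i^t=\beta_1\vm_i^{t-1}+(1-\beta_1)\vg_i^t$, and combine the two $\vm_i^{t-1}$ terms; your explicit check of the case $t=0$ (where $\overline{\vx}^{-1}=\overline{\vx}^0$ and $\vm_i^{-1}=\vzero$) is correct and makes precise a point the paper leaves implicit.
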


\begin{proof}
	By~\eqref{eq:alldef-Y}--\eqref{eq:allupdateelse} and $(\vW-\vI)\vJ = \vzero$, we have
	\begin{equation}
		\label{eq:allXupdate}
		\overline{\vx}^{t+1} 
		=\overline{\vx}^{t}-\frac{\alpha}{n} \sum_{i=1}^n  \frac{\vm^{t}_{ i}}{\sqrt{\vu^{t-1}_{ i}+\delta}}.
	\end{equation}
	Thus by~\eqref{eq:allZ}, we have
	\begin{align*}
		\vz^{t+1}-\vz^{t} & =\overline{\vx}^{t+1}-\overline{\vx}^{t}+\frac{\beta_1}{1-\beta_1}\left(\overline{\vx}^{t+1}-\overline{\vx}^{t}\right)-\frac{\beta_1}{1-\beta_1}\left(\overline{\vx}^{t}-\overline{\vx}^{t-1}\right) \\
		& =\frac{1}{1-\beta_1}\left(\overline{\vx}^{t+1}-\overline{\vx}^{t}\right)-\frac{\beta_1}{1-\beta_1}\left(\overline{\vx}^{t}-\overline{\vx}^{t-1}\right) \\
		& =\frac{1}{1-\beta_1}\left(-\frac{\alpha}{n} \sum_{i=1}^n  \frac{\vm^{t}_{ i}}{\sqrt{\vu^{t-1}_{ i}+\delta}}\right)-\frac{\beta_1}{1-\beta_1}\left(-\frac{\alpha}{n} \sum_{i=1}^n  \frac{\vm^{t-1}_{ i}}{\sqrt{\vu^{t-2}_{ i}+\delta}}\right) \\
		& =\frac{1}{1-\beta_1}\left(-\frac{\alpha}{n} \sum_{i=1}^n  \frac{\beta_1 \vm^{t-1}_{ i}+\left(1-\beta_1\right) {\vg}^{t}_{ i}}{\sqrt{\vu^{t-1}_{ i}+\delta}}\right)-\frac{\beta_1}{1-\beta_1}\left(-\frac{\alpha}{n} \sum_{i=1}^n  \frac{\vm^{t-1}_{ i}}{\sqrt{\vu^{t-2}_{ i}+\delta}}\right)\\
		&= \frac{\beta_1}{1-\beta_1} \frac{\alpha}{n} \sum_{i=1}^n \vm^{t-1}_{ i} \circ\left(\frac{1}{\sqrt{\vu^{t-2}_{ i}+\delta}}-\frac{1}{\sqrt{\vu^{t-1}_{ i}+\delta}}\right)- \frac{\alpha}{n} \sum_{i=1}^n \frac{{\vg}^{t}_{ i}}{\sqrt{\vu^{t-1}_{ i}+\delta}},
	\end{align*}
	which is the desired result.
\end{proof}

\begin{lemma}\label{lem:allbd-diff-grad-fi-z}
	Under Assumptions~\ref{ass:problemsetup} and~\ref{ass:gradient}, it holds that for all $t\in [TK]$,
	\begin{align}\label{eq:allbd-diff-grad-fi}
		\frac{1}{n}  \sum_{i=1}^n\left\|\left(\nabla f_i\left(\vx_i^t\right)-\nabla f_i\left(\overline{\vx}^t\right)\right)\right\|^2 \le  \frac{L^2}{n} \|\vX_\perp^t\|^2,
	\end{align}
	and
	\begin{align}\label{eq:allbd-diff-grad-fz-fx}
		&\|\nabla  f\left(\vz^{t}\right)- \nabla f(\overline{\vx}^t)\|^2 \le  \frac{\alpha^2 L^2\beta_1^2 B^2}{\delta (1-\beta_1)^2}.
	\end{align}
\end{lemma}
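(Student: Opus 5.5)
Both inequalities follow from the $L$-smoothness in Assumption~\ref{ass:problemsetup}, combined with the uniform bounds of Lemma~\ref{lem:allboundy}.

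For \eqref{eq:allbd-diff-grad-fi}, fix $i$ and apply $L$-smoothness of $f_i$ to get $\|\nabla f_i(\vx_i^t)-\nabla f_i(\overline{\vx}^t)\|^2\le L^2\|\vx_i^t-\overline{\vx}^t\|^2$. The vector $\vx_i^t-\overline{\vx}^t$ is exactly the $i$-th column of $\vX_\perp^t=\vX^t(\vI-\vJ)$. Summing over $i$ therefore gives $L^2\|\vX_\perp^t\|^2$ in Frobenius norm, and dividing by $n$ yields the claim.

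For \eqref{eq:allbd-diff-grad-fz-fx}, first note that $f=\frac1n\sum_i f_i$ is $L$-smooth, since it is an average of $L$-smooth functions. Hence
$$\|\nabla f(\vz^t)-\nabla f(\overline{\vx}^t)\|\le L\|\vz^t-\overline{\vx}^t\|=\frac{L\beta_1}{1-\beta_1}\|\overline{\vx}^t-\overline{\vx}^{t-1}\|,$$
by the definition \eqref{eq:allZ}. It remains to bound $\|\overline{\vx}^t-\overline{\vx}^{t-1}\|$.

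\begin{itemize}
\item For $t=0$ the difference is zero, because $\overline{\vx}^{-1}=\overline{\vx}^0$.
\item For $t\ge1$, the averaged update \eqref{eq:allXupdate} (the mixing term vanishes since $(\vW-\vI)\vJ=\vzero$) gives $\overline{\vx}^t-\overline{\vx}^{t-1}=-\alpha\overline{\vy}^{t-1}$, where $\overline{\vy}^{t-1}=\frac1n\vY^{t-1}\vone$.
\end{itemize}

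We bound $\|\overline{\vy}^{t-1}\|$ in one of two ways:
\begin{itemize}
\item by Jensen/Cauchy--Schwarz, $\|\overline{\vy}^{t-1}\|\le\frac{1}{\sqrt n}\|\vY^{t-1}\|\le B\delta^{-1/2}$, using \eqref{eq:allbd-M-Y-2norm};
\item or directly, $\|\vm_i^{t-1}/\sqrt{\vu_i^{t-2}+\delta}\|\le\|\vm_i^{t-1}\|\delta^{-1/2}\le B\delta^{-1/2}$, using $\vu\ge\vzero$ and \eqref{eq:allbd-M-u-inf-norm}, and then average over $i$.
\end{itemize}
Squaring gives $\frac{\alpha^2L^2\beta_1^2B^2}{\delta(1-\beta_1)^2}$.

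There is no real obstacle here. The only points needing care are that the communication step leaves the average unchanged, which is why \eqref{eq:allXupdate} holds at every $t$ including communication rounds, and the $t=0$ boundary convention.
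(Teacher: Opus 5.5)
Your proof is correct and follows the paper's argument essentially step for step. For the first bound you use $L$-smoothness of each $f_i$ and identify the columns of $\vX_\perp^t$; for the second you use $L$-smoothness of $f$, the definition \eqref{eq:allZ}, the averaged update \eqref{eq:allXupdate}, and $\|\vm_i^{t-1}\|\le B$ with $\vu\ge\vzero$ (your second route is exactly the paper's Jensen-plus-per-agent bound), and you additionally make explicit the $t=0$ boundary case that the paper leaves implicit.
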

\begin{proof}
	First,	by the $L$-smoothness of $f_i$ for each $i\in \{1,2,\ldots, n\}$ and Young's inequality, we have
	\begin{align*}
		\frac{1}{n}  \sum_{i=1}^n\left\|\left(\nabla f_i\left(\vx_i^t\right)-\nabla f_i\left(\overline{\vx}^t\right)\right)\right\|^2 \le \frac{L^2}{n}\sum_{i=1}^n \|\vx_i^t - \overline{\vx}^t\|^2,
	\end{align*}
	which indicates~\eqref{eq:allbd-diff-grad-fi} by the definition of $\vX_\perp^t$. 
	Also, by the $L$-smoothness of $f$, it follows 
	\begin{align*}
		&\|\nabla  f\left(\vz^{t}\right)- \nabla f(\overline{\vx}^t)\|^2 \le L^2 \|\vz^{t} - \overline{\vx}^t\|^2 \overset{~\eqref{eq:allZ}}= \frac{L^2\beta_1^2}{(1-\beta_1)^2}\|\overline{\vx}^t-\overline{\vx}^{t-1}\|^2 \notag\\
		\overset{~\eqref{eq:allXupdate}}= &\frac{L^2\beta_1^2}{(1-\beta_1)^2} \left\| \frac{\alpha}{n} \sum_{i=1}^n  \frac{\vm^{t-1}_{ i}}{\sqrt{\vu^{t-2}_{ i}+\delta}}\right\|^2 \le \frac{L^2\beta_1^2}{(1-\beta_1)^2} \frac{\alpha^2}{n} \sum_{i=1}^n \left\| \frac{\vm^{t-1}_{ i}}{\sqrt{\vu^{t-2}_{ i}+\delta}}\right\|^2 \le \frac{\alpha^2 L^2\beta_1^2 B^2}{\delta (1-\beta_1)^2},
	\end{align*}
	where the last inequality holds by $\|\vm^{t-1}_{ i}\|\le B$ from~\eqref{eq:allbd-M-u-inf-norm}. This completes the proof.
\end{proof}

\begin{lemma}
	Under Assumptions~\ref{ass:problemsetup}--\ref{ass:gradient}, it holds that
	\begin{equation}
		\label{eq:allbd-z-sum}
		\begin{aligned}
			& \sum_{t=0}^{TK-1}  \mathbb{E}\left[\left\|\vz^{t+1}-\vz^{t}\right\|^2\right] \\\le
			& \frac{2 \beta_1^2 \alpha^2 B_\infty^2}{  (1-\beta_1)^2} \mathbb{E} \left[\sum_{t=0}^{TK-1} \frac{1}{n}\sum_{i=1}^n \left\|\frac{1}{\sqrt{\vu^{t-2}_{ i}+\delta}}-\frac{1}{\sqrt{\vu^{t-1}_{ i}+\delta}} \right\|^2\right] \\ &\hspace{2cm}+ 2\alpha^2 \bigg(\frac{24 }{n\delta} TK B^2   +  \frac{6 L^2}{n\delta}  \mathbb{E}\left[\sum_{t=0}^{TK-1} \left\| \vX_{\perp}^t\right\|^2 \right]   +\frac{6 }{\delta} \mathbb{E}\left[\sum_{t=0}^{TK-1} \left\| \nabla f(\overline{\vx}^t ) \right\|^2\right] \bigg).  
		\end{aligned}
	\end{equation}
\end{lemma}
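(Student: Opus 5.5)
Starting from the identity in Lemma~\ref{lem:alldiffZ}, I write $\vz^{t+1}-\vz^t = \va^t - \vb^t$. Here
\[
\va^t=\frac{\beta_1}{1-\beta_1}\frac{\alpha}{n}\sum_{i=1}^n \vm_i^{t-1}\circ\Big(\frac{1}{\sqrt{\vu_i^{t-2}+\delta}}-\frac{1}{\sqrt{\vu_i^{t-1}+\delta}}\Big),\qquad
\vb^t=\frac{\alpha}{n}\sum_{i=1}^n \frac{\vg_i^t}{\sqrt{\vu_i^{t-1}+\delta}}.
\]
Using $\|\va-\vb\|^2\le 2\|\va\|^2+2\|\vb\|^2$ gives the two leading factors of $2$ in \eqref{eq:allbd-z-sum}.

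For $\va^t$, I apply Jensen over $i$ to get $\|\va^t\|^2\le \frac{\beta_1^2\alpha^2}{(1-\beta_1)^2}\frac1n\sum_i\|\vm_i^{t-1}\circ(\cdot)\|^2$. Then the componentwise bound $\|\vm\circ\vc\|\le\|\vm\|_\infty\|\vc\|$, together with $\|\vm_i^{t-1}\|_\infty\le B_\infty$ from \eqref{eq:allbd-M-u-inf-norm}, gives exactly the first term after summing over $t$ and taking expectation.

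For $\vb^t$, I split $\vg_i^t = (\vg_i^t-\nabla f_i(\vx_i^t)) + (\nabla f_i(\vx_i^t)-\nabla f_i(\overline\vx^t)) + \nabla f_i(\overline\vx^t)$. Using $(a+b+c)^2\le 3(a^2+b^2+c^2)$ and Jensen over $i$, with entries of $1/\sqrt{\vu+\delta}$ at most $\delta^{-1/2}$, I get
\[
\|\vb^t\|^2\le 3\alpha^2\Big(\Big\|\tfrac1n\sum_i \tfrac{\vg_i^t-\nabla f_i(\vx_i^t)}{\sqrt{\vu_i^{t-1}+\delta}}\Big\|^2+\tfrac{1}{n\delta}\textstyle\sum_i\|\nabla f_i(\vx_i^t)-\nabla f_i(\overline\vx^t)\|^2+\Big\|\tfrac1n\sum_i\tfrac{\nabla f_i(\overline\vx^t)}{\sqrt{\vu_i^{t-1}+\delta}}\Big\|^2\Big).
\]
The middle term is handled by \eqref{eq:allbd-diff-grad-fi}: it is at most $\frac{L^2}{n\delta}\|\vX_\perp^t\|^2$, which yields the $\frac{6L^2}{n\delta}$ coefficient up to the constant factor.

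The first term is the noise term, and I expect it to be the main obstacle. Because $\vu_i^{t-1}$ is measurable given the $t$-th iterate and the samples are independent across $i$, taking $\mathbb{E}_t$ kills the cross terms $i\ne j$. This leaves $\frac{1}{n^2}\sum_i \mathbb{E}_t\|\cdot\|^2\le \frac{1}{n^2\delta}\cdot n\cdot 4B^2$, using $\|\vg_i^t-\nabla f_i\|\le 2B$. That is $\frac{4B^2}{n\delta}$, which gives the $1/n$ factor responsible for the linear speedup.

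For the last term I cannot pull out $\nabla f(\overline\vx^t)$ directly, because the preconditioners differ across $i$. I would simply bound it by $\frac{1}{\delta}\frac1n\sum_i\|\nabla f_i(\overline\vx^t)\|^2$, but that is not $\|\nabla f(\overline\vx^t)\|^2$. This is where the stated form (a $\frac{24}{n\delta}TKB^2$ term plus a $\frac{6}{\delta}\|\nabla f(\overline\vx^t)\|^2$ term) suggests a more careful split. My first attempt is to write $\nabla f_i(\overline\vx^t)=\nabla f(\overline\vx^t)+(\nabla f_i(\overline\vx^t)-\nabla f(\overline\vx^t))$, so that the first piece gives $\frac{1}{\delta}\|\nabla f(\overline\vx^t)\|^2$. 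I would then look for a way to absorb the heterogeneity part into the $B^2$ term using the bounded-gradient assumption. If no $1/n$ can be extracted for it, I would accept a slightly different $B^2$ term. The bookkeeping of constants (the $24$ and the $6$ factors) would be fixed by redoing the splits with four pieces and a factor of $4$ or $6$ where needed. Finally, I sum over $t\in[TK]$ and take full expectation.
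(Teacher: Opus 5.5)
You carry out most of the proof exactly as the paper does: the split $\|\va^t-\vb^t\|^2\le 2\|\va^t\|^2+2\|\vb^t\|^2$, Jensen with $\|\vm_i^{t-1}\|_\infty\le B_\infty$ for the momentum-correction part, the consensus piece via \eqref{eq:allbd-diff-grad-fi}, and the noise piece via vanishing cross terms and $\|\vg_i^t-\nabla f_i(\vx_i^t)\|\le 2B$, which gives $\frac{24}{n\delta}TKB^2$. But the proposal does not prove the stated inequality, and the step you defer is not bookkeeping.

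After writing $\nabla f_i(\overline{\vx}^t)=\nabla f(\overline{\vx}^t)+(\nabla f_i(\overline{\vx}^t)-\nabla f(\overline{\vx}^t))$, you are left with the heterogeneity sum $\frac1n\sum_i(\nabla f_i(\overline{\vx}^t)-\nabla f(\overline{\vx}^t))/\sqrt{\vu_i^{t-1}+\delta}$. It vanishes when the $f_i$ coincide or all agents share one preconditioner, but not in general, and it admits no bound carrying a $1/n$ factor. A fourth piece in the split will not help, because that fourth piece is exactly this sum. The paper does not close the gap either. Its decomposition inserts $-\nabla f(\overline{\vx}^t)+\nabla f(\overline{\vx}^t)$, but on the next line the middle term is written as $\nabla f_i(\vx_i^t)-\nabla f_i(\overline{\vx}^t)$. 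That silently discards this sum, and the same substitution reappears in the proof of \eqref{eq:allkeyineq2}.

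In fact the inequality as stated is false without an extra assumption. Here is a counterexample.
\begin{itemize}
\item Setup: $d=1$, $\beta_1=0$, $\delta=1$, averaged AdaGrad, and linear $f_i(x)=c_ix$ with exact gradients $\vg_i^t=c_i$. Set $c_i=1$ for one third of the agents and $c_i=-\tfrac12$ for the rest.
\item Then $f\equiv 0$, $B=B_\infty=1$, $L>0$ may be taken arbitrarily small, and $\vu_i^{t-1}=c_i^2$ for $t\ge 1$.
\item Hence $\|\vz^{t+1}-\vz^t\|^2=\alpha^2h^2$ with $h=\frac{1}{3\sqrt2}-\frac{1}{3\sqrt{5/4}}\approx-0.062$, independent of $n$.
\item The left side is about $0.0039\,\alpha^2(TK-1)$. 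The right side tends to $48\alpha^2TK/n$ as $L\to0$. So the bound fails once $n$ exceeds roughly $2.5\times10^4$.
\end{itemize}

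There are two honest fixes.
\begin{itemize}
\item Option (a): assume homogeneous objectives $f_i\equiv f$. Then your three-way split is already a complete proof, coincides with the paper's, and has a factor $2$ to spare.
\item Option (b): keep heterogeneity and add a term of order $\frac{\alpha^2}{\delta}\sum_t\frac1n\sum_i\mathbb{E}\|\nabla f_i(\overline{\vx}^t)-\nabla f(\overline{\vx}^t)\|^2=O(\alpha^2TKB^2/\delta)$.
\end{itemize}
Option (b) is not the harmless ``slightly different $B^2$ term'' you anticipate. Fed into Theorem~\ref{thm:allgradientbound}, it adds an $O(\alpha LB^2)$ term to the averaged squared gradient. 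With $\alpha=\Theta(\sqrt{n/(TK)})$, that term destroys the linear speedup.
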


\begin{proof}
	By~\eqref{eq:alldiffz} and Young's inequality, we have  
	\begin{align}\label{eq:allzterm1}
		\notag
		&\sum_{t=0}^{TK-1}  \mathbb{E}\left[\left\|\vz^{t+1}-\vz^{t}\right\|^2\right] \\\notag\le & \mathbb{E}\left[ 2\sum_{t=0}^{TK-1}\left\|\frac{ \beta_1}{1-\beta_1} \frac{\alpha}{n} \sum_{i=1}^n \vm^{t-1}_{ i} \circ\left(\frac{1}{\sqrt{\vu^{t-2}_{ i}+\delta}}-\frac{1}{\sqrt{\vu^{t-1}_{ i}+\delta}}\right) \right\|^2\right] \\
		&  +   \mathbb{E} \left[2\sum_{t=0}^{TK-1}\left\|\frac{\alpha}{n} \sum_{i=1}^n \frac{{\vg}^{t}_{ i}}{\sqrt{\vu^{t-1}_{ i}+\delta}}\right\|^2\right]. 
	\end{align} 
	To bound the first term in the RHS of~\eqref{eq:allzterm1}, we obtain from \eqref{eq:allbd-M-u-inf-norm} that
	\begin{align}\label{eq:allzterm1-t1}
		&\sum_{t=0}^{TK-1}\left\| \frac{1}{n} \sum_{i=1}^n \vm^{t-1}_{ i} \circ\left(\frac{1}{\sqrt{\vu^{t-2}_{ i}+\delta}}-\frac{1}{\sqrt{\vu^{t-1}_{ i}+\delta}}\right) \right\|^2 \cr 
		\le & \sum_{t=0}^{TK-1} \frac{1}{n}\sum_{i=1}^n \left\|\vm^{t-1}_{ i} \circ\left(\frac{1}{\sqrt{\vu^{t-2}_{ i}+\delta}}-\frac{1}{\sqrt{\vu^{t-1}_{ i}+\delta}}\right) \right\|^2 \notag\\
		\le & B_\infty^2 \sum_{t=0}^{TK-1} \frac{1}{n}\sum_{i=1}^n \left\|\frac{1}{\sqrt{\vu^{t-2}_{ i}+\delta}}-\frac{1}{\sqrt{\vu^{t-1}_{ i}+\delta}} \right\|^2.
	\end{align}
	To bound the second term in RHS of~\eqref{eq:allzterm1}, we   have
	{\begin{align}
		\notag
		&2 \mathbb{E}\left[\sum_{t=0}^{TK-1}\left\|\frac{1}{n} \sum_{i=1}^n \frac{{\vg}^{t}_{ i}}{\sqrt{ {\vu}_i^{t-1}+\delta}}\right\|^2\right]	\\	
		\notag\leq & 2 \mathbb{E}\left[\sum_{t=0}^{TK-1}\left\| \frac{1}{n} \sum_{i=1}^n  \frac{\left({\vg}^{t}_{ i} -\nabla f_i(\vx_i^t ) + \nabla f_i(\vx_i^t ) -\nabla f( \overline{\vx}^t) + \nabla f( \overline{\vx}^t) \right)}{\sqrt{ {\vu}_i^{t-1}+\delta}}\right\|^2\right] \\ 
		\notag \leq &   {6 }{ } \mathbb{E}\bigg[\sum_{t=0}^{TK-1}  \left\| \frac{1}{n} \sum_{i=1}^n  \frac{\left({\vg}^{t}_{ i} -\nabla f_i(\vx_i^t ) \right) }{\sqrt{ {\vu}_i^{t-1}+\delta}} \right\|^2 
		+ \sum_{t=0}^{TK-1} \left\| \frac{1}{n} \sum_{i=1}^n  \frac{\left(\nabla f_i(\vx_i^t )  - \nabla f_i( \overline{\vx}^t ) \right)}{\sqrt{ {\vu}_i^{t-1}+\delta}}\right\|^2 
		\\&\notag+ \sum_{t=0}^{TK-1} \left\|\frac{1}{n} \sum_{i=1}^n  \frac{\nabla f( \overline{\vx}^t )}{\sqrt{ {\vu}_i^{t-1}+\delta}} \right\|^2  \bigg] \\ 
		\notag \leq &   {6 }{ } \mathbb{E}\bigg[\sum_{t=0}^{TK-1}  \left\| \frac{1}{n} \sum_{i=1}^n  \frac{\left({\vg}^{t}_{ i} -\nabla f_i(\vx_i^t ) \right) }{\sqrt{ {\vu}_i^{t-1}+\delta}} \right\|^2 
		+ \sum_{t=0}^{TK-1} \frac{1}{n} \sum_{i=1}^n \left\|  \frac{\left(\nabla f_i(\vx_i^t )  - \nabla f_i( \overline{\vx}^t ) \right)}{\sqrt{ {\vu}_i^{t-1}+\delta}}\right\|^2 
		\\\notag&+ \sum_{t=0}^{TK-1} \frac{1}{n} \sum_{i=1}^n \left\| \frac{\nabla f( \overline{\vx}^t )}{\sqrt{ {\vu}_i^{t-1}+\delta}} \right\|^2  \bigg] \\ 
		\notag= &\frac{6 }{n^2}\mathbb{E}\left[\sum_{t=0}^{TK-1} \sum_{i=1}^n \left\|     \frac{{\vg}^{t}_{ i} -\nabla f_i(\vx_i^t )}{\sqrt{ {\vu}_i^{t-1}+\delta}} \right\|^2\right]  +    {6 }{ } \mathbb{E}\left[ \sum_{t=0}^{TK-1} \frac{1}{n} \sum_{i=1}^n \left\|  \frac{\left(\nabla f_i(\vx_i^t )  - \nabla f_i( \overline{\vx}^t ) \right)}{\sqrt{ {\vu}_i^{t-1}+\delta}}\right\|^2\right] \\
		\notag & +   {6 }{ } \mathbb{E}\left[\sum_{t=0}^{TK-1} \frac{1}{n} \sum_{i=1}^n \left\| \frac{\nabla f( \overline{\vx}^t )}{\sqrt{ {\vu}_i^{t-1}+\delta}} \right\|^2 \right] \\
		\notag \leq &\frac{6 }{n^2\delta}\mathbb{E}\left[\sum_{t=0}^{TK-1} \sum_{i=1}^n \left\|      {{\vg}^{t}_{ i} -\nabla f_i(\vx_i^t )}{ } \right\|^2\right]  +   \frac{6 }{\delta} \mathbb{E}\left[ \sum_{t=0}^{TK-1} \frac{1}{n} \sum_{i=1}^n \left\|   {\left(\nabla f_i(\vx_i^t )  - \nabla f_i( \overline{\vx}^t ) \right)} \right\|^2\right] \\
		\notag & +  \frac{6 }{\delta} \mathbb{E}\left[\sum_{t=0}^{TK-1} \frac{1}{n} \sum_{i=1}^n \left\|  {\nabla f( \overline{\vx}^t )}  \right\|^2 \right] \\
		\leq & \frac{6 }{n^2\delta} 4n  TK B^2   +  \frac{6 L^2}{n\delta}  \mathbb{E}\left[\sum_{t=0}^{TK-1} \left\| \vX_{\perp}^t\right\|^2 \right] +\frac{6 }{\delta} \mathbb{E}\left[\sum_{t=0}^{TK-1} \left\| \nabla f( \overline{\vx}^t ) \right\|^2\right], \label{eq:allzterm3}
	\end{align}}where in the last inequality, we have used~\eqref{eq:allbd-diff-grad-fi}, and the equality holds because 
\begin{align*}
&\mathbb{E}_t\left\langle\frac{\left({\vg}^{t}_{ i} -\nabla f_i(\vx_i^t ) \right) }{\sqrt{ {\vu}_i^{t-1}+\delta}}, \frac{\left({\vg}^{t}_{j} -\nabla f_j(\vx_j^t ) \right) }{\sqrt{ {\vu}_j^{t-1}+\delta}}\right\rangle
\\= &\left\langle\frac{\EE_t[{\vg}^{t}_{ i} -\nabla f_i(\vx_i^t )]}{\sqrt{ {\vu}_i^{t-1}+\delta}}, \frac{\EE_t[{\vg}^{t}_{j} -\nabla f_j(\vx_j^t )]}{\sqrt{ {\vu}_j^{t-1}+\delta}}\right\rangle
=0, \
\forall i\neq j,
\end{align*}
	from the fact that
		$\vg^t_1, \vg_2^t,\ldots, \vg_n^t$ are conditionally independent of each other. Plugging~\eqref{eq:allzterm1-t1} and~\eqref{eq:allzterm3}  into~\eqref{eq:allzterm1}, we complete the proof.
\end{proof}	

\begin{lemma}
	Suppose Assumptions~\ref{ass:problemsetup} and~\ref{ass:gradient} hold, and $\|\vu_i^t\|_{\infty} \leq  B_u$	for all $t \ge 0$ and $i\in \{1,2,\ldots,n\}$.
	It holds
	\begin{align}\label{eq:allkeyineq2}		
		&\mathbb{E}_t\left[{\left\langle\nabla  f\left(\vz^{t}\right), \frac{1}{n} \sum_{i=1}^n \frac{{\vg}^{t}_{ i}}{\sqrt{\vu_i^{t-1}+\delta}}\right\rangle}\right]  \\ \geq  &
		\frac{1}{2\sqrt{B_u+\delta}}  \|\nabla  f\left(\overline{\vx}^{t}\right)\|^2 - 
		\frac{L^2 }{n }\left(\frac{\sqrt{B_u+\delta}}{ \delta} + \frac{1}{2\sqrt{\delta}}\right) \|\vX_{\perp}^t\|^2 - \frac{\alpha^2 \beta_1^2  L^2 B^2}{\delta(1-\beta_1)^2} \left(\frac{ \sqrt{B_u+\delta} }{ \delta } + \frac{  1}{2\sqrt{\delta}} \right).\notag
	\end{align}
\end{lemma}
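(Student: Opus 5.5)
Since $\vu_i^{t-1}$ and $\vz^t$ are determined by the history up to iteration $t$ (they depend only on $\vg^0,\ldots,\vg^{t-1}$ and on $\overline{\vx}^t,\overline{\vx}^{t-1}$), taking $\mathbb{E}_t$ replaces each $\vg_i^t$ by $\nabla f_i(\vx_i^t)$ (Assumption~\ref{ass:gradient}). The left side therefore equals $\langle \nabla f(\vz^t), \frac1n\sum_i \nabla f_i(\vx_i^t)/\sqrt{\vu_i^{t-1}+\delta}\rangle$. I will write $\vc_i:=1/\sqrt{\vu_i^{t-1}+\delta}$ and note that every entry of $\vc_i$ lies in $[(B_u+\delta)^{-1/2},\delta^{-1/2}]$, using $\vzero\le\vu_i^{t-1}$ and $\|\vu_i^{t-1}\|_\infty\le B_u$.

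I then split $\nabla f(\vz^t)=\nabla f(\overline{\vx}^t)+(\nabla f(\vz^t)-\nabla f(\overline{\vx}^t))$ and $\nabla f_i(\vx_i^t)=\nabla f_i(\overline{\vx}^t)+(\nabla f_i(\vx_i^t)-\nabla f_i(\overline{\vx}^t))$, which gives four terms.

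\textbf{Main term.} The term $\langle\nabla f(\overline{\vx}^t),\frac1n\sum_i\vc_i\circ\nabla f_i(\overline{\vx}^t)\rangle$ is the obstacle. Because $\vc_i$ depends on $i$, $\frac1n\sum_i \vc_i\circ\nabla f_i$ is not simply a diagonally rescaled $\nabla f$. I would rather not invoke a heterogeneity bound, and the target constant $\frac{1}{2\sqrt{B_u+\delta}}$ (rather than $1$) suggests the intended route is to write $\vc_i=\underline{c}+(\vc_i-\underline{c})$, where $\underline c=(B_u+\delta)^{-1/2}$. The first piece contributes $\underline c\|\nabla f(\overline{\vx}^t)\|^2$. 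The remainder $\frac1n\sum_i\langle\nabla f(\overline{\vx}^t),(\vc_i-\underline c)\circ\nabla f_i(\overline{\vx}^t)\rangle$ has no sign in general. Failing a cleaner route, I expect the paper to work with the Young-type split $\langle a,Db\rangle\ge \frac12\langle a,Da\rangle-\frac12\langle (a-b),D(a-b)\rangle$ for positive diagonal $D$. With $a=\nabla f(\overline{\vx}^t)$ and $b=\nabla f_i(\vx_i^t)$ this directly yields $\frac{1}{2\sqrt{B_u+\delta}}\|\nabla f(\overline{\vx}^t)\|^2$ minus $\frac{1}{2\sqrt\delta}\cdot\frac1n\sum_i\|\nabla f(\overline{\vx}^t)-\nabla f_i(\vx_i^t)\|^2$. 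The last piece contains a heterogeneity term. To avoid it, I would apply this split only where $a$ and $b$ differ by consensus-type error, i.e., pair $\nabla f(\overline{\vx}^t)$ against the averaged quantity. This is the step I would check most carefully, and it is where the coefficient $\frac{1}{2\sqrt{\delta}}$ in the error terms comes from.

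\textbf{Cross terms.} The terms involving $\nabla f(\vz^t)-\nabla f(\overline{\vx}^t)$ are handled by Cauchy--Schwarz and Young with weight $\sqrt{B_u+\delta}$, so that a quarter of $\underline c\|\nabla f(\overline{\vx}^t)\|^2$ absorbs the gradient piece. The leftover is $\sqrt{B_u+\delta}\,\delta^{-1}\|\nabla f(\vz^t)-\nabla f(\overline{\vx}^t)\|^2$ together with the analogous consensus term. These are bounded by \eqref{eq:allbd-diff-grad-fz-fx}, which gives $\frac{\alpha^2L^2\beta_1^2B^2}{\delta(1-\beta_1)^2}$, and by \eqref{eq:allbd-diff-grad-fi}, which gives $\frac{L^2}{n}\|\vX_\perp^t\|^2$. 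The second-order cross term (difference times difference) is split by Young into the same two quantities with coefficient $\frac{1}{2\sqrt\delta}$. Collecting coefficients then produces exactly $\frac{\sqrt{B_u+\delta}}{\delta}+\frac1{2\sqrt\delta}$ on both error terms.
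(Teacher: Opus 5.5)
\textbf{The gap is in your main term, and it cannot be closed under the stated assumptions.} Your proposed fix does not work. The split $\langle a,Db\rangle\ge\frac12\langle a,Da\rangle-\frac12\langle a-b,D(a-b)\rangle$ needs a single diagonal $D$, but the weights $\vc_i=1/\sqrt{\vu_i^{t-1}+\delta}$ depend on $i$. So you cannot ``pair $\nabla f(\overline{\vx}^t)$ against the averaged quantity''.

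Write $\bar{\vc}=\frac1n\sum_i\vc_i$. The main term equals $\langle\nabla f(\overline{\vx}^t),\bar{\vc}\circ\nabla f(\overline{\vx}^t)\rangle+\frac1n\sum_i\langle\nabla f(\overline{\vx}^t),(\vc_i-\bar{\vc})\circ(\nabla f_i(\overline{\vx}^t)-\nabla f(\overline{\vx}^t))\rangle$. The second piece is pure data heterogeneity. It involves neither $\vX_\perp^t$ nor $\alpha$, so nothing on the right-hand side can absorb it.

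Your bookkeeping would also fail. If the main term yields only $\frac12\underline c\|\nabla f(\overline{\vx}^t)\|^2$, the two $\frac14\underline c$ absorptions in your cross terms leave nothing. The target constant is $(1-\frac14-\frac14)\underline c$, so the main term must contribute a full $\underline c\|\nabla f(\overline{\vx}^t)\|^2$.

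In fact the inequality is false for heterogeneous $f_i$. Take the following instance:
\begin{itemize}
\item $n=2$, $d=1$, $\beta_1=0$, so $\vz^t=\overline{\vx}^t$ and the last term vanishes;
\item averaged AdaGrad with $\delta=1$, $K\ge2$, and exact gradients;
\item $f_1,f_2$ linear with slopes $3$ and $-1$ on an interval containing $\vx^0-3\alpha$ and $\vx^0+\alpha$;
\item globally $f_1'\in[0,3]$ and $f_2'\in[-1,0]$, both $L$-Lipschitz and vanishing far to the left resp.\ right, so Assumptions~\ref{ass:problemsetup} and~\ref{ass:gradient} hold with $B=B_\infty=3$, $B_u=9$.
\end{itemize}
At $t=1$ you get $\vu_1^0=9$, $\vu_2^0=1$, $\nabla f(\overline{\vx}^1)=1$ and $\|\vX_\perp^1\|^2=8\alpha^2$. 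The left-hand side is $\frac12(3/\sqrt{10}-1/\sqrt2)\approx0.121$. The right-hand side is $\frac{1}{2\sqrt{10}}-4(\sqrt{10}+\frac12)L^2\alpha^2\approx0.158-14.65L^2\alpha^2$. This is larger whenever $L\alpha\le0.04$, e.g.\ under the step-size condition of Theorem~\ref{thm:allgradientbound-main}.

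\textbf{The paper's proof does not overcome this obstacle either; it skips over it.} It splits $\nabla f_i(\vx_i^t)=(\nabla f_i(\vx_i^t)-\nabla f(\overline{\vx}^t))+\nabla f(\overline{\vx}^t)$. Its fourth term is then trivially at least $\underline c\|\nabla f(\overline{\vx}^t)\|^2$, and the constant $\frac12$ comes from subtracting two quarters via Young, as in your cross terms. But in \eqref{eq:all1thterm} and \eqref{eq:all3thterm} it bounds $\|\nabla f_i(\vx_i^t)-\nabla f(\overline{\vx}^t)\|$ by $L\|\vx_i^t-\overline{\vx}^t\|$. This silently replaces $\nabla f(\overline{\vx}^t)$ by $\nabla f_i(\overline{\vx}^t)$, which is valid only when $\nabla f_i\equiv\nabla f$. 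So the lemma, and the results built on it, implicitly require homogeneous local objectives.

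Under $f_i\equiv f$ your argument closes with no new idea. The main term is $\langle\nabla f(\overline{\vx}^t),\bar{\vc}\circ\nabla f(\overline{\vx}^t)\rangle\ge\underline c\|\nabla f(\overline{\vx}^t)\|^2$, and your cross-term estimates give exactly the stated constants. The same holds if all agents share one preconditioner. For genuinely heterogeneous data, you must add to the right-hand side a term proportional to $\frac1n\sum_i\|\nabla f_i(\overline{\vx}^t)-\nabla f(\overline{\vx}^t)\|^2$, with changed constants. That term does not vanish as $\alpha\to0$ or at consensus.
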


\begin{proof}
	By Assumption~\ref{ass:gradient}, it holds that
	\begin{align}
		\notag
		&\mathbb{E}_t\left[{\left\langle\nabla  f\left(\vz^{t}\right), \frac{1}{n} \sum_{i=1}^n \frac{{\vg}^{t}_{ i}}{\sqrt{\vu_i^{t-1}+\delta}}\right\rangle}\right]  = {\left\langle\nabla  f\left(\vz^{t}\right), \frac{1}{n} \sum_{i=1}^n \frac{\nabla f_i(\vx_i^t)}{\sqrt{\vu_i^{t-1}+\delta}}\right\rangle}\\ 
		\notag= & {\left\langle\nabla  f\left(\vz^{t}\right)- \nabla f(\overline{\vx}^t), \frac{1}{n} \sum_{i=1}^n \frac{\left(\nabla f_i(\vx_i^t)-\nabla f(\overline{\vx}^t)\right)}{\sqrt{\vu_i^{t-1}+\delta}}\right\rangle}{}  \\  & +  \notag {\left\langle\nabla  f\left(\vz^{t}\right)- \nabla f(\overline{\vx}^t),  \frac{1}{n} \sum_{i=1}^n \frac{\nabla f(\overline{\vx}^t)}{\sqrt{\vu_i^{t-1}+\delta}} {}{}\right\rangle}  \\
		& + {\left\langle \nabla f(\overline{\vx}^t), \frac{1}{n} \sum_{i=1}^n \frac{\left(\nabla f_i(\vx_i^t)-\nabla f(\overline{\vx}^t)\right)}{\sqrt{\vu_i^{t-1}+\delta}}\right\rangle} + {\left\langle \nabla f(\overline{\vx}^t),  \frac{1}{n} \sum_{i=1}^n \frac{\nabla f(\overline{\vx}^t)}{\sqrt{\vu_i^{t-1}+\delta}} \right\rangle}.
		\label{eq:allfourterm} 
	\end{align}
	Next we bound each of the four terms in the RHS of~\eqref{eq:allfourterm}. For the first term in the RHS of~\eqref{eq:allfourterm}, we use Young's inequality and~\eqref{eq:allbd-diff-grad-fz-fx} to have
	\begin{align}
		\notag
		&{\left\langle\nabla  f\left(\vz^{t}\right)- \nabla f(\overline{\vx}^t), \frac{1}{n} \sum_{i=1}^n \frac{\left(\nabla f_i(\vx_i^t)-\nabla f(\overline{\vx}^t)\right)}{\sqrt{\vu_i^{t-1}+\delta}}\right\rangle}{} \\
		\notag \geq &  
		- \frac{1}{2n \sqrt{\delta} } \sum_{i=1}^n\left(
		\left\|\nabla  f\left(\vz^{t}\right)- \nabla f(\overline{\vx}^t)\right\|^2 + \left\| \nabla f_i\left(\vx_i^t\right)-\nabla f_i\left(\overline{\vx}^t \right)\right\|^2
		\right)
		\\ 
		\notag		{\geq}& -\frac{1}{2n \sqrt{\delta} } \sum_{i=1}^n \left(\left\|\nabla  f\left(\vz^{t}\right)- \nabla f(\overline{\vx}^t)\right\|^2 +L^2 \left\|\vx_i^t - \overline{\vx}^t\right\|^2 \right)  \\
		\stackrel{\eqref{eq:allbd-diff-grad-fz-fx}}{\ge}  &  -\frac{1}{2\sqrt{\delta}}\left(\frac{\alpha^2\beta_1^2 L^2 B^2}{ \delta (1-\beta_1)^2} +  \frac{L^2 }{n }\|\vX_{\perp}^t\|^2 \right),
		\label{eq:all1thterm}
	\end{align}
	where we have used $\vu_i^{t-1}\ge\vzero$ in the first inequality. 	
	For the second term in the RHS of~\eqref{eq:allfourterm}, we have
	\begin{align} \label{eq:all2thterm} 
		&{\left\langle\nabla  f\left(\vz^{t}\right)- \nabla f(\overline{\vx}^t),  \frac{1}{n} \sum_{i=1}^n\frac{\nabla f(\overline{\vx}^t)}{\sqrt{\vu_i^{t-1}+\delta}} {}{}\right\rangle} 
		\\\notag
		\geq &
		-\frac{\left\|\nabla f\left(\overline{\vx}^t\right)\right\|^2}{4 \sqrt{B_u+\delta}} - \frac{\sqrt{B_u+\delta}}{ \delta} { \left\| \nabla  f\left(\vz^{t}\right)- \nabla f(\overline{\vx}^t) \right\|^2} \\
		\overset{~\eqref{eq:allbd-diff-grad-fz-fx}}\geq &-\frac{\left\|\nabla f\left(\overline{\vx}^t\right)\right\|^2}{4 \sqrt{B_u+\delta}} - \frac{\alpha^2\beta_1^2 L^2B^2\sqrt{B_u+\delta}}{ \delta^2(1-\beta_1)^2},		
	\end{align}
	where the first inequality follows from Young's inequality and $\vu_i^{t-1}\ge\vzero$. 
	For the third term in the RHS of~\eqref{eq:allfourterm}, we have from  Young's inequality  that
	\begin{align}\label{eq:all3thterm}
		\notag & {\left\langle \nabla f(\overline{\vx}^t), \frac{1}{n} \sum_{i=1}^n \frac{\left(\nabla f_i(\vx_i^t)-\nabla f(\overline{\vx}^t)\right)}{\sqrt{\vu_i^{t-1}+\delta}}\right\rangle} 
		\\  \notag\geq & \frac{1}{n} \sum_{i=1}^n \left(-\frac{\left\|\nabla f\left(\overline{\vx}^t\right)\right\|^2}{4 \sqrt{B_u+\delta}}-\frac{\sqrt{B_u+\delta}}{\delta} {\left\|  \nabla f_i\left(\vx_i^t\right)-\nabla f_i\left(\overline{\vx}^t\right)\right\|^2}\right) \\
		\geq & \frac{1}{n} \sum_{i=1}^n \left(-\frac{\left\|\nabla f\left(\overline{\vx}^t\right)\right\|^2}{4 \sqrt{B_u+\delta}}-\frac{L^2\sqrt{B_u+\delta}}{\delta} {\left\|  \vx_i^t  - \overline{\vx}^t \right\|^2}\right) 
		\notag
		\\
		= & 
		-\frac{\left\|\nabla f\left(\overline{\vx}^t\right)\right\|^2}{4 \sqrt{B_u+\delta}}-\frac{L^2\sqrt{B_u+\delta}}{n \delta}\|\vX_{\perp}^t\|^2. 
	\end{align}
	Since $\|\vu_i^t\|_{\infty} \leq  B_u$	for all $t \ge 0$ and $i\in \{1,2,\ldots,n\}$,
	the last term in the RHS of~\eqref{eq:allfourterm} can be bounded as 
	\begin{align}
		\label{eq:all4thterm}
		\left\langle\nabla  f\left(\overline{\vx}^{t}\right), \frac{1}{n} \sum_{i=1}^n \frac{\nabla  f\left(\overline{\vx}^{t}\right)}{\sqrt{\vu_i^{t-1}+\delta}}\right\rangle\geq \frac{1}{\sqrt{B_u+\delta}}  \|\nabla  f\left(\overline{\vx}^{t}\right)\|^2. 
	\end{align}
	Substituting~\eqref{eq:all1thterm}--~\eqref{eq:all4thterm} into~\eqref{eq:allfourterm} and rearranging terms yields the desired result.
\end{proof}

Now we are ready to show the main convergence result. 
\begin{theorem} 
	\label{thm:allgradientbound}
	Suppose that Assumptions~\ref{ass:problemsetup}--\ref{ass:gradient} hold, $\cQ$ is an $\eta$-compression operator, and $\|\vu_i^t\|_{\infty} \leq  B_u$	for all $t \ge 0$ and $i\in \{1,2,\ldots,n\}$. Let $\overline{C}$ denote the constant defined in~\eqref{eq:allxprepcom22}, $\alpha, \gamma>0$ satisfy 
	\begin{equation}\label{eq:cond-alpha-gamma-amsgrad2}
		\alpha \le \frac{\delta}{48 L \sqrt{B_u + \delta}},\quad \gamma  \leq \frac{(1-\rho)(1-\eta^2)}{100}.
	\end{equation}
        Then, it holds 
	\begin{equation}\label{eq:allgradient-bound}
		\begin{aligned}
			&\frac{\alpha}{4\sqrt{B_u+\delta}} \sum_{t=0}^{TK-1} \mathbb{E}\left[\|\nabla  f\left(\overline{\vx}^{t}\right)\|^2\right] \leq \mathbb{E}\left[f\left(\vx^0\right)-  f^*\right]  +  \frac{\alpha TK}{8 \sqrt{B_u+\delta}} \frac{\alpha^2 L^2\beta_1^2 B^2}{\delta (1-\beta_1)^2}  \\
			& \quad + \frac{\alpha\beta_1^2B_{\infty}^2}{(1-\beta_1)^2} \left(4\sqrt{B_u+\delta} + {\alpha L}\right)\mathbb{E} \left[\sum_{t=0}^{TK-1} \frac{1}{n}\sum_{i=1}^n \left\|\frac{1}{\sqrt{\vu^{t-2}_{ i}+\delta}}-\frac{1}{\sqrt{\vu^{t-1}_{ i}+\delta}} \right\|^2\right]
			\\ &\quad+ \alpha^2 L\left(\frac{24 }{n\delta} TK B^2   +  \frac{6 L^2}{n\delta}  \alpha^2 T  nK^3\overline{C} \right)  \\
			& \quad +    
			\frac{\alpha L^2 }{n }\left(\frac{\sqrt{B_u+\delta}}{ \delta} + \frac{1}{2\sqrt{\delta}}\right) \alpha^2 T nK^3\overline{C} +\sum_{t=0}^{TK-1} \frac{\alpha^3 \beta_1^2  L^2 B^2}{\delta(1-\beta_1)^2} \left(\frac{ \sqrt{B_u+\delta} }{ \delta } + \frac{  1}{2\sqrt{\delta}} \right).
		\end{aligned}
	\end{equation}	
\end{theorem}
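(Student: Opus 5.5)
We will run a descent argument on the auxiliary sequence $\vz^t$ from \eqref{eq:allZ}, using the $L$-smoothness of $f$. By the descent lemma,
\[
f(\vz^{t+1})\le f(\vz^t)+\langle\nabla f(\vz^t),\vz^{t+1}-\vz^t\rangle+\tfrac{L}{2}\|\vz^{t+1}-\vz^t\|^2 .
\]
We substitute the expression for $\vz^{t+1}-\vz^t$ from Lemma~\ref{lem:alldiffZ}, which splits the inner product into two pieces.

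The first piece is the momentum-correction term
\[
\frac{\beta_1\alpha}{(1-\beta_1)n}\sum_{i=1}^n\Big\langle\nabla f(\vz^t),\ \vm_i^{t-1}\circ\Big(\tfrac{1}{\sqrt{\vu_i^{t-2}+\delta}}-\tfrac{1}{\sqrt{\vu_i^{t-1}+\delta}}\Big)\Big\rangle .
\]
We bound it with Young's inequality. We write $\nabla f(\vz^t)=\nabla f(\overline{\vx}^t)+(\nabla f(\vz^t)-\nabla f(\overline{\vx}^t))$ and split off a quantity $\frac{\alpha}{8\sqrt{B_u+\delta}}\|\nabla f(\vz^t)\|^2$. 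Controlling this by $\|\nabla f(\overline{\vx}^t)\|^2$ plus \eqref{eq:allbd-diff-grad-fz-fx} produces the term $\frac{\alpha TK}{8\sqrt{B_u+\delta}}\frac{\alpha^2L^2\beta_1^2B^2}{\delta(1-\beta_1)^2}$. The remainder is bounded using $\|\vm_i^{t-1}\|_\infty\le B_\infty$ from Lemma~\ref{lem:allboundy}, which gives a multiple $4\sqrt{B_u+\delta}\,\alpha\beta_1^2B_\infty^2/(1-\beta_1)^2$ of the $\vu$-difference sum.

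The second piece is $-\alpha\langle\nabla f(\vz^t),\frac1n\sum_i \vg_i^t/\sqrt{\vu_i^{t-1}+\delta}\rangle$. We take $\mathbb{E}_t$ and apply \eqref{eq:allkeyineq2}. This yields $-\frac{\alpha}{2\sqrt{B_u+\delta}}\|\nabla f(\overline{\vx}^t)\|^2$ plus the consensus-error term $\frac{\alpha L^2}{n}(\cdots)\|\vX_\perp^t\|^2$ and the $\alpha^3\beta_1^2$ constant term. For the quadratic term $\frac L2\|\vz^{t+1}-\vz^t\|^2$, we sum over $t$ and invoke \eqref{eq:allbd-z-sum}. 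This gives the $\alpha^2L\cdot\frac{\alpha\beta_1^2B_\infty^2}{(1-\beta_1)^2}$ piece of the $\vu$-difference coefficient, the $\frac{24}{n\delta}TKB^2$ variance term, a consensus term, and a term $\frac{6\alpha^2L}{\delta}\sum_t\|\nabla f(\overline{\vx}^t)\|^2$.

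Next we telescope over $t=0,\dots,TK-1$ using $\vz^0=\overline{\vx}^0=\vx^0$ and $f(\vz^{TK})\ge f^*$. Every occurrence of $\sum_t\mathbb{E}\|\vX_\perp^t\|^2$ is replaced via Lemma~\ref{lem:allxprepmain2}(iii) by $TK\cdot\alpha^2nK^2\overline C=\alpha^2TnK^3\overline C$; this is where the condition on $\gamma$ enters. It remains to collect the coefficients of $\sum_t\mathbb{E}\|\nabla f(\overline{\vx}^t)\|^2$. The main term is $\frac{\alpha}{2\sqrt{B_u+\delta}}$. From it we subtract the $\frac{\alpha}{8\sqrt{B_u+\delta}}$-type loss from the Young split and the $\frac{6\alpha^2L}{\delta}$ term. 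The step-size bound $\alpha\le\delta/(48L\sqrt{B_u+\delta})$ gives $\frac{6\alpha^2L}{\delta}\le\frac{\alpha}{8\sqrt{B_u+\delta}}$, so at least $\frac{\alpha}{4\sqrt{B_u+\delta}}$ survives, which is the left-hand side of \eqref{eq:allgradient-bound}.

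The main obstacle is the bookkeeping in the Young split of the momentum-correction inner product. The weights must be chosen so that the split leaves exactly a $\frac{1}{8\sqrt{B_u+\delta}}$ share of the descent coefficient. At the same time, they must make the $\vu$-difference coefficient come out as $4\sqrt{B_u+\delta}$, with the extra $\alpha L$ coming from \eqref{eq:allbd-z-sum}. I would first try Young's inequality with weight $\frac{1}{8\sqrt{B_u+\delta}}$ on $\|\nabla f(\vz^t)\|^2$ and $2\sqrt{B_u+\delta}$ on the other factor. After that, converting $\|\nabla f(\vz^t)\|^2\le2\|\nabla f(\overline{\vx}^t)\|^2+2\|\nabla f(\vz^t)-\nabla f(\overline{\vx}^t)\|^2$ and adjusting constants should reproduce \eqref{eq:allgradient-bound}.
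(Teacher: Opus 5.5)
Your plan is the paper's proof. It uses the descent lemma on $\vz^t$ and Young's inequality on the momentum-correction inner product after splitting $\nabla f(\vz^t)=\nabla f(\overline{\vx}^t)+(\nabla f(\vz^t)-\nabla f(\overline{\vx}^t))$. It then applies \eqref{eq:allkeyineq2}, \eqref{eq:allbd-z-sum} and Lemma~\ref{lem:allxprepmain2}(iii), and uses the step-size condition to absorb $\frac{6L\alpha^2}{\delta}$.

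The one thing to fix is the pair of Young weights you propose to ``try first.'' Suppose you put $\frac{1}{8\sqrt{B_u+\delta}}$ on $\|\nabla f(\vz^t)\|^2$ and $2\sqrt{B_u+\delta}$ on the other factor. Then the subsequent split $\|\nabla f(\vz^t)\|^2\le 2\|\nabla f(\overline{\vx}^t)\|^2+2\|\nabla f(\vz^t)-\nabla f(\overline{\vx}^t)\|^2$ costs $\frac{\alpha}{4\sqrt{B_u+\delta}}$ of the descent coefficient. This has three consequences:
\begin{itemize}
\item only $\frac{\alpha}{8\sqrt{B_u+\delta}}$ survives on the left-hand side;
\item the $\vu$-difference coefficient is $2\sqrt{B_u+\delta}$ rather than $4\sqrt{B_u+\delta}$;
\item the $\alpha^2L^2\beta_1^2B^2$ term carries the prefactor $\frac{\alpha TK}{4\sqrt{B_u+\delta}}$ rather than $\frac{\alpha TK}{8\sqrt{B_u+\delta}}$.
\end{itemize}
So those weights do not give the stated inequality.

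Taking the weights $\frac{1}{16\sqrt{B_u+\delta}}$ and $4\sqrt{B_u+\delta}$ instead reproduces the stated constants exactly. This is equivalent to what the paper does: it splits the inner product into $\langle\nabla f(\overline{\vx}^t),\cdot\rangle$ and $\langle\nabla f(\vz^t)-\nabla f(\overline{\vx}^t),\cdot\rangle$, and applies Young with weights $\frac{1}{8\sqrt{B_u+\delta}}$ and $2\sqrt{B_u+\delta}$ to each.

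Also, the contribution of \eqref{eq:allbd-z-sum} to the $\vu$-difference coefficient is $\alpha L\cdot\frac{\alpha\beta_1^2B_\infty^2}{(1-\beta_1)^2}$, not $\alpha^2 L\cdot\frac{\alpha\beta_1^2B_\infty^2}{(1-\beta_1)^2}$.
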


\begin{proof}
	By the $L$-smoothness of $f$, we have
	$$
	f\left(\vz^{t+1}\right) \leq f\left(\vz^{t}\right)+\left\langle\nabla f\left(\vz^{t}\right), \vz^{t+1}-\vz^{t}\right\rangle+\frac{L}{2}\left\|\vz^{t+1}-\vz^{t}\right\|^2,
	$$
	which together with~\eqref{eq:alldiffz} gives	
	$$
	\begin{aligned}
		\quad f\left(\vz^{t+1}\right)  \leq &f\left(\vz^{t}\right)-\alpha\left\langle\nabla f\left(\vz^{t}\right), \frac{1}{n} \sum_{i=1}^n \frac{{\vg}^{t}_{ i}}{\sqrt{\vu^{t-1}_i+\delta}}\right\rangle+\frac{L}{2}\left\|\vz^{t+1}-\vz^{t}\right\|^2 \\
		& \quad+ \frac{\beta_1}{1-\beta_1} \left\langle\nabla f\left(\vz^{t}\right), \frac{\alpha}{n} \sum_{i=1}^n \vm^{t-1}_{ i} \circ\left(\frac{1}{\sqrt{\vu^{t-2}_{ i}+\delta}}-\frac{1}{\sqrt{\vu^{t-1}_i+\delta}}\right)\right\rangle.
	\end{aligned}
	$$
	Take  expectation, sum up over $t$, and rearrange terms of the above inequality. Noticing $\vz^0=\vx^0$, we have
	\begin{equation}
		\label{eq:allzzz}
		\begin{aligned}
			&\alpha \sum_{t=0}^{TK-1} \mathbb{E}\left[\left\langle\nabla f\left(\vz^{t}\right), \frac{1}{n} \sum_{i=1}^n \frac{{\vg}^{t}_{ i}}{\sqrt{\vu^{t-1}_i+\delta}}\right\rangle\right] 
			\leq \mathbb{E}\left[f\left(\vx^0\right)-f\left(\vz^{TK}\right)\right]+\frac{L}{2} \mathbb{E}\sum_{t=0}^{TK-1}\left[\left\|\vz^{t+1}-\vz^{t}\right\|^2\right] \\
			&\quad\quad\quad\quad\quad+ \frac{\alpha\beta_1}{1-\beta_1} \sum_{t=0}^{TK-1}\mathbb{E}\left[\left\langle\nabla f\left(\vz^{t}\right), \frac{1}{n} \sum_{i=1}^n \vm^{t-1}_{ i} \circ\left(\frac{1}{\sqrt{\vu^{t-2}_{ i}+\delta}}-\frac{1}{\sqrt{\vu^{t-1}_i+\delta}}\right)\right\rangle\right] .
		\end{aligned}
	\end{equation}
	Below we bound the inner-product terms on the RHS of~\eqref{eq:allzzz}. First,     \begin{align}
		&\frac{\beta_1}{1-\beta_1}\sum_{t=0}^{TK-1} \mathbb{E}\left[\left\langle\nabla f\left(\vz^{t}\right), \frac{1}{n} \sum_{i=1}^n \vm^{t-1}_{i} \circ\left(\frac{1}{\sqrt{\vu^{t-2}_{i}+\delta}}-\frac{1}{\sqrt{\vu^{t-1}_{i}+\delta}}\right)\right\rangle\right] \notag\\
        = &
        \frac{\beta_1}{1-\beta_1}\sum_{t=0}^{TK-1} \mathbb{E}\left[\left\langle\nabla f\left(\overline{\vx}^{t}\right), \frac{1}{n} \sum_{i=1}^n \vm^{t-1}_{i} \circ\left(\frac{1}{\sqrt{\vu^{t-2}_{i}+\delta}}-\frac{1}{\sqrt{\vu^{t-1}_{i}+\delta}}\right)\right\rangle\right] \notag
        \\
        & + \frac{\beta_1}{1-\beta_1}\sum_{t=0}^{TK-1} \mathbb{E}\left[\left\langle\nabla f\left(\vz^{t}\right) - \nabla f\left(\overline{\vx}^{t}\right), \frac{1}{n} \sum_{i=1}^n \vm^{t-1}_{i} \circ\left(\frac{1}{\sqrt{\vu^{t-2}_{i}+\delta}}-\frac{1}{\sqrt{\vu^{t-1}_{i}+\delta}}\right)\right\rangle\right].
		\label{eq:allterm2boundtight}
	\end{align}
        For the first term in the RHS of \eqref{eq:allterm2boundtight}, we use Young's inequality to have 
        \begin{align}
            &\frac{\beta_1}{1-\beta_1}\sum_{t=0}^{TK-1} \mathbb{E}\left[\left\langle\nabla f\left(\overline{\vx}^{t}\right), \frac{1}{n} \sum_{i=1}^n \vm^{t-1}_{i} \circ\left(\frac{1}{\sqrt{\vu^{t-2}_{i}+\delta}}-\frac{1}{\sqrt{\vu^{t-1}_{i}+\delta}}\right)\right\rangle\right] \notag \\
            \leq &\notag
            \sum_{t=0}^{TK-1}  \frac{1}{8\sqrt{B_u+\delta}} \EE\left[\left\|\nabla f\left(\overline{\vx}^{t}\right)\right\|^2\right]\\
            &        +\sum_{t=0}^{TK-1}\frac{2\beta_1^2\sqrt{B_u+\delta}}{(1-\beta_1)^2} \EE\left[
             \left\|\frac{1}{n} \sum_{i=1}^n \vm^{t-1}_{i} \circ\left(\frac{1}{\sqrt{\vu^{t-2}_{i}+\delta}}-\frac{1}{\sqrt{\vu^{t-1}_{i}+\delta}}\right)\right\|^2
            \right] \notag
            \\
            \leq &\notag
            \frac{1}{8\sqrt{B_u+\delta}} \sum_{t=0}^{TK-1} \EE\left[\left\|\nabla f\left(\overline{\vx}^{t}\right)\right\|^2\right]\\
            &        +\sum_{t=0}^{TK-1}\frac{2\beta_1^2B_{\infty}^2\sqrt{B_u+\delta}}{(1-\beta_1)^2} \frac{1}{n} \sum_{i=1}^n \EE\left[
             \left\| \frac{1}{\sqrt{\vu^{t-2}_{i}+\delta}}-\frac{1}{\sqrt{\vu^{t-1}_{i}+\delta}} \right\|^2
            \right], 
            \label{eq:boundzz1}
        \end{align}
        where in the last inequality, we have used $\|\vm^{t-1}_{i}\|_{\infty}\leq  B_{\infty}$ by  Lemma~\ref{lem:allboundy}. 
        For the second term in the RHS of \eqref{eq:allterm2boundtight}, it holds 
        \begin{align}
            &\frac{\beta_1}{1-\beta_1}\sum_{t=0}^{TK-1} \mathbb{E}\left[\left\langle\nabla f\left(\vz^{t}\right) - \nabla f\left(\overline{\vx}^{t}\right), \frac{1}{n} \sum_{i=1}^n \vm^{t-1}_{i} \circ\left(\frac{1}{\sqrt{\vu^{t-2}_{i}+\delta}}-\frac{1}{\sqrt{\vu^{t-1}_{i}+\delta}}\right)\right\rangle\right] \notag \\
            \leq &\notag
            \sum_{t=0}^{TK-1}  \frac{1}{8\sqrt{B_u+\delta}} \EE\left[\left\|\nabla f\left(\vz^{t}\right) - \nabla f\left(\overline{\vx}^{t}\right)\right\|^2\right]\\
            &        +\sum_{t=0}^{TK-1}\frac{2\beta_1^2\sqrt{B_u+\delta}}{(1-\beta_1)^2} \EE\left[
             \left\|\frac{1}{n} \sum_{i=1}^n \vm^{t-1}_{i} \circ\left(\frac{1}{\sqrt{\vu^{t-2}_{i}+\delta}}-\frac{1}{\sqrt{\vu^{t-1}_{i}+\delta}}\right)\right\|^2
            \right] \notag
            \\
            \le&
            \frac{TK}{8\sqrt{B_u+\delta}} \frac{\alpha^2 L^2\beta_1^2 B^2}{\delta (1-\beta_1)^2}  +\sum_{t=0}^{TK-1}\frac{2\beta_1^2B_{\infty}^2\sqrt{B_u+\delta}}{(1-\beta_1)^2} \frac{1}{n} \sum_{i=1}^n \EE\left[
             \left\| \frac{1}{\sqrt{\vu^{t-2}_{i}+\delta}}-\frac{1}{\sqrt{\vu^{t-1}_{i}+\delta}} \right\|^2
            \right], 
            \label{eq:boundzz2}
        \end{align}
         where in the last inequality, we have used \eqref{eq:allbd-diff-grad-fz-fx} and $\|\vm^{t-1}_{i}\|_{\infty}\leq  B_{\infty}$ by  Lemma~\ref{lem:allboundy}. Plugging \eqref{eq:boundzz1}
and \eqref{eq:boundzz2} into \eqref{eq:allterm2boundtight}, we obtain 
\begin{align}
		&\frac{\alpha\beta_1}{1-\beta_1}\sum_{t=0}^{TK-1} \mathbb{E}\left[\left\langle\nabla f\left(\vz^{t}\right), \frac{1}{n} \sum_{i=1}^n \vm^{t-1}_{i} \circ\left(\frac{1}{\sqrt{\vu^{t-2}_{i}+\delta}}-\frac{1}{\sqrt{\vu^{t-1}_{i}+\delta}}\right)\right\rangle\right] \notag\\
        \le &\notag
            \frac{\alpha}{8\sqrt{B_u+\delta}} \sum_{t=0}^{TK-1} \EE\left[\left\|\nabla f\left(\overline{\vx}^{t}\right)\right\|^2\right]
            + \frac{\alpha TK}{8\sqrt{B_u+\delta} } \frac{\alpha^2 L^2\beta_1^2 B^2}{\delta (1-\beta_1)^2}  
            \\
            &        +\frac{4\alpha\beta_1^2B_{\infty}^2\sqrt{B_u+\delta}}{(1-\beta_1)^2} \sum_{t=0}^{TK-1}\frac{1}{n} \sum_{i=1}^n \EE\left[
             \left\| \frac{1}{\sqrt{\vu^{t-2}_{i}+\delta}}-\frac{1}{\sqrt{\vu^{t-1}_{i}+\delta}} \right\|^2
            \right].
		\label{eq:allterm2boundtight22}
	\end{align}
                 
	Now plugging~\eqref{eq:allbd-z-sum},~\eqref{eq:allterm2boundtight22} and 	~\eqref{eq:allkeyineq2} after taking full expectation into~\eqref{eq:allzzz} and rearranging terms gives
		\begin{align*}
			& \left(\frac{3\alpha}{8\sqrt{B_u+\delta}} - \frac{6L\alpha^2}{\delta}\right) \sum_{t=0}^{TK-1}  \EE\big[ \|\nabla  f\left(\overline{\vx}^{t}\right)\|^2 \big] 
			\leq \mathbb{E}\left[f\left(\vx^0\right)-f\left(\vz^{TK}\right)\right]  +  \frac{\alpha TK}{8 \sqrt{B_u+\delta}} \frac{\alpha^2 L^2\beta_1^2 B^2}{\delta (1-\beta_1)^2}    \\
			& \quad +   \frac{\alpha\beta_1^2B_{\infty}^2}{(1-\beta_1)^2} \left(4\sqrt{B_u+\delta} + {\alpha L}\right) \mathbb{E} \left[\sum_{t=0}^{TK-1} \frac{1}{n}\sum_{i=1}^n \left\|\frac{1}{\sqrt{\vu^{t-2}_{ i}+\delta}}-\frac{1}{\sqrt{\vu^{t-1}_{ i}+\delta}} \right\|^2\right]
			\\ &\quad+ \alpha^2 L\left(\frac{24 }{n\delta} TK B^2   +  \frac{6 L^2}{n\delta}  \mathbb{E}\left[\sum_{t=0}^{TK-1} \left\| \vX_{\perp}^t\right\|^2 \right]  \right)  \\
			& \quad + \alpha \sum_{t=0}^{TK-1} \Bigg( 
			\frac{L^2 }{n }\left(\frac{\sqrt{B_u+\delta}}{ \delta} + \frac{1}{2\sqrt{\delta}}\right) \EE\big[ \|\vX_{\perp}^t\|^2 \big]  + \frac{\alpha^2 \beta_1^2  L^2 B^2}{\delta(1-\beta_1)^2} \left(\frac{ \sqrt{B_u+\delta} }{ \delta } + \frac{  1}{2\sqrt{\delta}} \right)\Bigg).
		\end{align*}
	Plug~\eqref{eq:allxprepcom22} into the inequality above, notice $\frac{3\alpha}{8\sqrt{B_u+\delta}} - \frac{6L\alpha^2}{\delta} \ge \frac{\alpha}{4\sqrt{B_u+\delta}}$, and rearrange terms. We obtain the desired result and complete the proof.
\end{proof}

To prove Theorem~\ref{thm:allconvg-rate-amsgrad-main}, we only need to consider the following three settings of $\{\vU^t\}$ 
\begin{align} 
	\mathrm{AMSGrad:}~~&\text{{\textcolor{black}{$\widehat{\vU}^t = \beta_2 \widehat{\vU}^{t-1} + (1-\beta_2) \textbf{G}^t \circ \textbf{G}^t$ with $\widehat\vU^{-1}=\vzero$,  
				$\vU^t  = \max\left\{\widehat{\vU}^t,   \textbf{U}^{t-1}\right\}$;}} 
                }
			\label{eq:amsgrad}
			\\
		\mathrm{Adam:}~~&\text{{\textcolor{black}{${\vU}^t = \beta_2 {\vU}^{t-1} + (1-\beta_2) \textbf{G}^t \circ \textbf{G}^t$;}} 
        }
			\label{eq:adam}
			\\
		\mathrm{AdaGrad:}~~	&\text{{\textcolor{black}{${\vU}^t = \frac{1}{t+1}\sum_{s=0}^{t} \vG^s \circ \vG^s$;}} 
        },
				\label{eq:adagrad}
\end{align}
where
\begin{align}
    &\widehat\vU^t=\left[\widehat\vu_{1}^t, \widehat\vu_{2}^t, \ldots, \widehat\vu_n^t  \right], \ \vU^t=\left[\vu_{1}^t, \vu_{2}^t, \ldots, \vu_n^t  \right].
\end{align}

Notice that when $\beta_1 = 0$ and $\beta_2=1$, AMSGrad reduces to the vanilla SGD, and when $\beta_1\in (0,1)$ and $ \beta_2=1$, it reduces to the momentum SGD. Consequently, our theoretical guarantees on AMSGrad naturally extend to these two special cases as well. Adam-Mini~\citep{zhang2024adam} can be regarded as a special case of Adam by using a constant scalar (instead of a vector) for each block of variables as the second momentum. Therefore, the results on Adam also hold for Adam-Mini.

Below we bound $\|\vu_i^t\|$ and the summation of the difference between the consecutive terms in the sequence $\left\{\frac{1}{\sqrt{\mathbf{U}^t + \delta}}\right\}_{t=0}^{TK-1}$ for the three optimizers in \eqref{eq:amsgrad}-\eqref{eq:adagrad}. 

\begin{lemma}
	\label{lem:allnonincrea}
	Let $\vu_i^{-2}=\vzero$ for all $i=1,2,\ldots,n$.
	Under Assumption \ref{ass:gradient},
	for all $t \ge 0$ and $i\in \{1,2,\ldots,n\}$, the following statements hold.
	
	(i) For AMSGrad in \eqref{eq:amsgrad},  it holds $\|\vu_i^t\|_{\infty} \leq  B_{\infty} ^2$, and 
	\begin{align}
		\label{eq:allnonincrea}
 	\sum_{t=0}^{TK-1} \frac{1}{n}\sum_{i=1}^n \left\|\frac{1}{\sqrt{\vu^{t-2}_{ i}+\delta}}-\frac{1}{\sqrt{\vu^{t-1}_{ i}+\delta}} \right\|^2
		 \leq  \frac{d }{\delta}.
	\end{align}
	
	(ii) For Adam in \eqref{eq:adam},  it holds $\|\vu_i^t\|_{\infty} \leq  B_{\infty} ^2$, and 
		\begin{align}
		\label{eq:adamnonincrea}
		\sum_{t=0}^{TK-1} \frac{1}{n}\sum_{i=1}^n \left\|\frac{1}{\sqrt{\vu^{t-2}_{ i}+\delta}}-\frac{1}{\sqrt{\vu^{t-1}_{ i}+\delta}} \right\|^2
		\leq \frac{TK d(1-\beta_2)^2B_{\infty}^4}{\delta^{{3}{}}}.
	\end{align}
	
	(iii) For Adagrad in \eqref{eq:adagrad},  it holds $\|\vu_i^t\|_{\infty} \leq  B_{\infty} ^2$, and 
	\begin{align}
		\label{eq:adanonincrea}
		\sum_{t=0}^{TK-1} \frac{1}{n}\sum_{i=1}^n \left\|\frac{1}{\sqrt{\vu^{t-2}_{ i}+\delta}}-\frac{1}{\sqrt{\vu^{t-1}_{ i}+\delta}} \right\|^2
		\leq \frac{ 2d B_{\infty}^4 }{\delta^{{3}{}}}.
	\end{align}
\end{lemma}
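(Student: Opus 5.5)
The plan is to treat the three optimizers separately, but with one common reduction. The bound $\|\vu_i^t\|_\infty\le B_\infty^2$ comes first in every case. Assumption~\ref{ass:gradient} gives $0\le (\vg_i^t\circ\vg_i^t)_j\le B_\infty^2$ componentwise. Then:
\begin{itemize}
\item For Adam, $\vu_i^t$ is a convex combination of $\vu_i^{t-1}$ and such a square (starting from $\vzero$), so induction on $t$ gives the bound.
\item For AMSGrad, the same induction first bounds $\widehat\vu_i^t$, and the running maximum of quantities bounded by $B_\infty^2$ stays bounded by $B_\infty^2$.
\item For AdaGrad, $\vu_i^t$ is an average of such squares.
\end{itemize}
For the sums, I will work coordinatewise with the scalar map $\phi(u)=(u+\delta)^{-1/2}$ on $[0,\infty)$. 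It satisfies $|\phi(a)-\phi(b)|\le \frac{1}{2}\delta^{-3/2}|a-b|$ and $0<\phi\le\delta^{-1/2}$.

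\textbf{AMSGrad.} The sequence $\vu_i^t$ is componentwise nondecreasing in $t$ (including the step from $\vu_i^{-2}=\vzero$ to $\vu_i^{-1}=\vzero$ to $\vu_i^0$). Hence each coordinate of $\phi(\vu_i^{t-2})-\phi(\vu_i^{t-1})$ is nonnegative and bounded by $\delta^{-1/2}$. I will use $a^2\le a\cdot\delta^{-1/2}$ for $0\le a\le\delta^{-1/2}$ and telescope over $t$. For each coordinate this gives a total of at most $\delta^{-1/2}\big(\phi(0)-\phi(\vu^{TK-2}_{i,j})\big)\le \delta^{-1}$. Summing over the $d$ coordinates and averaging over $i$ gives $d/\delta$.

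\textbf{Adam.} Monotonicity fails, so I will use the Lipschitz bound directly instead. From $\vu_i^{t-1}-\vu_i^{t-2}=(1-\beta_2)(\vg_i^{t-1}\circ\vg_i^{t-1}-\vu_i^{t-2})$ and the $\infty$-norm bound, each coordinate difference is at most $(1-\beta_2)B_\infty^2$. (Both terms lie in $[0,B_\infty^2]$; at $t=0,1$ the differences are $0$ or $(1-\beta_2)g^2$, which are also covered.) Squaring gives a per-term bound of $\frac14\delta^{-3}(1-\beta_2)^2B_\infty^4$ per coordinate. Summing over the $d$ coordinates and $TK$ steps gives the claim, with room to spare in the constant.

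\textbf{AdaGrad.} Here $\vu^{t-1}-\vu^{t-2}=\frac{1}{t}(\vg^{t-1}\circ\vg^{t-1}-\vu^{t-2})$ for $t\ge 2$, so each coordinate difference is at most $B_\infty^2/t$. At $t=1$ it is at most $B_\infty^2$, and at $t=0$ it is $0$. The squared Lipschitz bound gives $\frac14\delta^{-3}B_\infty^4\sum_{t\ge1}t^{-2}\le \frac14\cdot\frac{\pi^2}{6}\delta^{-3}B_\infty^4$ per coordinate, which is below $2B_\infty^4\delta^{-3}$. Multiplying by $d$ finishes.

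The main point needing care is the AMSGrad telescoping. The bound must not depend on $TK$, so I must use monotonicity together with $a^2\le a\,\delta^{-1/2}$, rather than the Lipschitz estimate. I also have to check the index conventions at $t=0,1$, where $\vu^{-2}=\vu^{-1}=\vzero$, in all three cases.
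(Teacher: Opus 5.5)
Your proposal is correct and follows essentially the same route as the paper. The paper uses the same $\infty$-norm bounds and the same monotone telescoping for AMSGrad, written there as $\|a\|^2\le\|a\|_1\|a\|_\infty\le\delta^{-1/2}\|a\|_1$. For Adam and AdaGrad, your Lipschitz bound on $(u+\delta)^{-1/2}$ is the paper's rationalized-difference estimate in another form, giving the sharper constant $\tfrac14$ and using $\sum_{t\ge1}t^{-2}\le\pi^2/6$ where the paper uses $2$.
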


\begin{proof}
	(i)  Noticing $\widehat\vu_i^{-1}=\vzero$ and $\|\vg_i^t\circ \vg_i^t\|_\infty \le B_\infty^2$, we have $\|\widehat{\vu}_i^t\|_{\infty} \le \left(1-\beta_2^{t+1}\right) { {} B_{\infty}^2}$ for each $i\in \{1,2,\ldots, n\}$ and $t\ge0$. 
    By ${\vu}_i^t = \max\{\widehat{\vu}_i^{t}, {\vu}_i^{t-1}\}$, it holds 
	\begin{align*}
		\|{\vu}_i^t\|_\infty \le &~ \max\{\|\widehat{\vu}_i^{t}\|_\infty, \|{\vu}_i^{t-1}\|_\infty\} \le \max\{\|\widehat{\vu}_i^{t}\|_\infty, \|\widehat{\vu}_i^{t-1}\|_\infty, \|{\vu}_i^{t-2}\|_\infty\} \\
		\le &~ \max\left\{\|\widehat{\vu}_i^{t}\|_\infty, \|\widehat{\vu}_i^{t-1}\|_\infty, \ldots, \|\widehat{\vu}_i^{0}\|_\infty, \|{\vu}_i^{-1}\|_\infty\right\}, \\
		\le &~ \max\left\{ \left(1-\beta_2^{t+1}\right) {B_{\infty}^2}, \left(1-\beta_2^{t}\right) {B_{\infty}^2}, \ldots, \left(1-\beta_2\right) {B_{\infty}^2}, \|{\vu}_i^{-1}\|_\infty \right\} = \left(1-\beta_2^{t+1}\right) {B_{\infty}^2},
	\end{align*} 
	where the equality holds because $\beta_2 \in (0,1]$ and $\vu_i^{-1}=\vzero$.
	
In addition,	we have 	\begin{equation}\label{eq:allsum-diff-1-u}
		\begin{aligned}
			&\sum_{t=0}^{TK-1} \frac{1}{n}\sum_{i=1}^n \left\|\frac{1}{\sqrt{\vu^{t-2}_{ i}+\delta}}-\frac{1}{\sqrt{\vu^{t-1}_{ i}+\delta}} \right\|^2 \\
            \le 
            &\sum_{t=0}^{TK-1} \frac{1}{n}\sum_{i=1}^n \left\|\frac{1}{\sqrt{\vu^{t-2}_{ i}+\delta}}-\frac{1}{\sqrt{\vu^{t-1}_{ i}+\delta}} \right\|_1 \left\|\frac{1}{\sqrt{\vu^{t-2}_{ i}+\delta}}-\frac{1}{\sqrt{\vu^{t-1}_{ i}+\delta}} \right\|_{\infty} \\
            \le & \sum_{t=0}^{TK-1}  \frac{1}{n} \sum_{i=1}^n \frac{1}{\sqrt{\delta}}\left\|\frac{1}{\sqrt{\vu^{t-2}_{ i}+\delta}}-\frac{1}{\sqrt{\vu^{t-1}_{ i}+\delta}} \right\|_1 \\
		  \le & \sum_{t=0}^{TK-1}  \frac{1}{n \sqrt{\delta}} \sum_{i=1}^n \left(\left\|\frac{1}{\sqrt{\vu^{t-2}_{i}+\delta}}\right\|_1 - \left\|\frac{1}{\sqrt{\vu^{t-1}_{i}+\delta}}\right\|_1 \right) \\
			= &  \frac{1}{n\sqrt{\delta}} \sum_{i=1}^n \sum_{t=0}^{TK-1}  \left(\left\|\frac{1}{\sqrt{\vu^{t-2}_{i}+\delta}}\right\|_1 - \left\|\frac{1}{\sqrt{\vu^{t-1}_{i}+\delta}}\right\|_1 \right) 
			\\
			\leq &
			 \frac{1}{n\sqrt{\delta}} \sum_{i=1}^n    \left\|\frac{1}{\sqrt{\vu^{-2}_{i}+\delta}}\right\|_1 
			=  \frac{d }{\delta},
		\end{aligned}
	\end{equation}
	where $\left\|\frac{1}{\sqrt{\vu^{t-2}_{ i}+\delta}}-\frac{1}{\sqrt{\vu^{t-1}_{ i}+\delta}} \right\|_{\infty}\le \frac{1}{\sqrt{\delta}}$ holds because $\vu^{t-2}_i\ge \vzero$ and $\vu^{t-1}_i\ge \vzero$, and the equality holds because $\vu_i^{t-1}$ is nondecreasing with $t$ for each $i\in \{1,2,\ldots, n\}$. 
	
	(ii) 
	Noticing $\vu_i^{-1}=\vzero$ and $\|\vg_i^t\circ \vg_i^t\|_\infty \le B_\infty^2$, we have 
	$
	\|{\vu}_i^t\|_\infty \le \left(1-\beta_2^{t+1}\right) {B_{\infty}^2}.
	$
	
	For all $t\ge -1$,  it holds 
	{}\begin{equation*}
			\small
		\begin{aligned}
		&\left\|\frac{1}{\sqrt{\vu_i^t+\delta}} - \frac{1}{\sqrt{\vu_i^{t-1}+\delta}} \right\|^2
		= \sum_{j=1}^d	\left|\frac{1}{\sqrt{[\beta_{2} \vu_i^{t-1} + (1-\beta_{2})  \textbf{g}_i^t \circ  \textbf{g}_i^t]_j+\delta}} - \frac{1}{\sqrt{[\vu_i^{t-1}+\delta]_j}}\right|^2
		\\
		=  
		& \sum_{j=1}^d\left|\frac{(1-\beta_2)\left([\vu_i^{t-1}- \textbf{g}_i^t \circ  \textbf{g}_i^t]_j\right)}{\sqrt{[\beta_{2}\vu_i^{t-1}+ (1-\beta_{2})  \textbf{g}_i^t \circ  \textbf{g}_i^t]_j+\delta}\sqrt{ [\vu_i^{t-1}+\delta]_j}(\sqrt{[\beta_{2}\vu_i^{t-1}+ (1-\beta_{2})  \textbf{g}_i^t \circ  \textbf{g}_i^t]_j+\delta}+\sqrt{ [\vu_i^{t-1}+\delta]_j})}\right|^2\\
		\leq & \frac{d(1-\beta_2)^2B_{\infty}^4}{\delta^{{3}{}}},
		\end{aligned}
	\end{equation*}
	where the last inequality follows from $0\leq [\vu_i^{t-1}]_j\leq B_{\infty}^2$ and $0\leq [ \textbf{g}_i^t \circ  \textbf{g}_i^t]_j\leq B_{\infty}^2$. Then the desired inequality holds.
	
	(iii) 
	By ${\vu}_i^t = \frac{1}{t+1}\sum_{s=0}^t \vg_i^s\circ \vg_i^s$ and $\|\vg_i^t\circ \vg_i^t\|_\infty \le B_\infty^2$, it holds  $
	\|{\vu}_i^t\|_\infty \le  {B_{\infty}^2}.
	$
	For all $t\ge 1$,  it holds 
	{\small
	\begin{align*}
		&\left\|\frac{1}{\sqrt{\vu_i^{t-1}+\delta}} - \frac{1}{\sqrt{\vu_i^{t-2}+\delta}} \right\|^2
		= \sum_{j=1}^d	\left|\frac{1}{\sqrt{ [\frac{t-1}{t}\vu_i^{t-2}+ \frac{1}{t} \textbf{g}_i^{t-1} \circ  \textbf{g}_i^{t-1}]_j+\delta}} - \frac{1}{\sqrt{[\vu_i^{t-2}]_j+\delta}}\right|^2
		\\
		= 
		&\sum_{j=1}^d \left|\frac{\frac{1}{t}\left([\vu_i^{t-2}- \textbf{g}_i^{t-1} \circ  \textbf{g}_i^{t-1}]_j\right)}{\sqrt{ [\frac{t-1}{t}\vu_i^{t-2}+ \frac{1}{t} \textbf{g}_i^{t-1} \circ  \textbf{g}_i^{t-1}]_j+\delta}\sqrt{ [\vu_i^{t-2}]_j+\delta}(\sqrt{ [\frac{t-1}{t}\vu_i^{t-2}+ \frac{1}{t} \textbf{g}_i^{t-1} \circ  \textbf{g}_i^{t-1}]_j+\delta}+\sqrt{ [\vu_i^{t-2}]_j+\delta})}\right|^2\\
		\leq & \frac{ d B_{\infty}^4}{t^2\delta^{{3}{}}},
	\end{align*}}
	where the last inequality follows from $0\leq [\vu_i^{t-2}]_j\leq B_{\infty}^2$ and $0\leq  [\textbf{g}_i^{t-1} \circ  \textbf{g}_i^{t-1}]_j\leq B_{\infty}^2$. Then 
	\begin{align*}
		&\sum_{t=0}^{TK-1} \frac{1}{n}\sum_{i=1}^n \left\|\frac{1}{\sqrt{\vu^{t-2}_{ i}+\delta}}-\frac{1}{\sqrt{\vu^{t-1}_{ i}+\delta}} \right\|^2 = \sum_{t=1}^{TK-1} \frac{1}{n}\sum_{i=1}^n \left\|\frac{1}{\sqrt{\vu^{t-2}_{ i}+\delta}}-\frac{1}{\sqrt{\vu^{t-1}_{ i}+\delta}} \right\|^2 \\ \leq &\frac{ 2d B_{\infty}^4 }{\delta^{{3}{}}}.
	\end{align*}
	The proof is then completed.
\end{proof}

Now, we prove Theorem~\ref{thm:allconvg-rate-amsgrad-main}, with its complete statement given as follows. 
\begin{theorem} \label{thm:allconvg-rate-amsgrad}
Suppose that Assumptions~\ref{ass:problemsetup}--\ref{ass:gradient} hold, and $\cQ$ is an $\eta$-compression operator. Let $\delta = O(1)$ be a universal positive constant,  $\overline C$ be the constant defined in~\eqref{eq:allxprepcom22}, and $\alpha, \gamma>0$ satisfy 
	\begin{equation}\label{eq:cond-alpha-gamma-amsgrad-all}
		\alpha =  \frac{4\theta\sqrt{n(B_{\infty}^2 + \delta)}}{\sqrt{TK}} \le \min\left\{\frac{\delta}{48 L \sqrt{B_\infty^2 + \delta}},1\right\},\gamma  \leq \frac{(1-\rho)(1-\eta^2)}{100},
	\end{equation}
	where 
    $\theta =O(1)$. 
	Then, the following statements hold.
	\begin{itemize}
		\item[$\mathrm{(i)}$] For AMSGrad in \eqref{eq:amsgrad}, it holds 
		\begin{equation}\label{eq:amsbound}
			\begin{aligned}
				&\frac{1}{TK} \sum_{t=0}^{TK-1} \mathbb{E}\left[\|\nabla f(\overline{\vx}^t)\|^2 + \frac{1}{n}\|\vX_\perp^t\|^2\right] \\
				= & \; O\bigg(
				\frac{1}{\sqrt{nTK}} \Big(f(\vx^0) - f^*  + LB^2B_\infty^2 + LB^2\Big) + \frac{1}{TK} d B^3_\infty \left( B_\infty +L +1\right) \\ &\quad\quad\quad+ \frac{nK\overline C}{T}L^2B_\infty^3\big(1 + L +  B_\infty\big) + \frac{n}{TK} L^2 B^2 \big(1  +  B^4_\infty\big) + \frac{nK\overline C}{T}\big(1 +  B_\infty^2\big) 
				\bigg).
			\end{aligned}
		\end{equation}
		\item[$\mathrm{(ii)}$] For Adam in \eqref{eq:adam} with $\beta_2 \in \left[\frac{\sqrt{TK}}{\sqrt{TK}+1}, 1\right]$, it holds
		\begin{equation}\label{eq:adambound}
			\begin{aligned}
				&\frac{1}{TK} \sum_{t=0}^{TK-1} \mathbb{E}\left[\|\nabla f(\overline{\vx}^t)\|^2 + \frac{1}{n}\|\vX_\perp^t\|^2\right] \\
				= & \; O\bigg(
				\frac{1}{\sqrt{nTK}} \Big(f(\vx^0) - f^*  + LB^2B_\infty^2 + LB^2\Big) + \frac{1}{TK} d B^7_\infty \left( B_\infty +L +1\right) \\ &\quad\quad\quad+ \frac{nK\overline C}{T}L^2B_\infty^3\big(1 + L +  B_\infty\big) + \frac{n}{TK} L^2 B^2 \big(1  +  B^4_\infty\big) + \frac{nK\overline C}{T}\big(1 +  B_\infty^2\big) 
				\bigg).
			\end{aligned}
		\end{equation}
		\item[$\mathrm{(iii)}$] For AdaGrad in \eqref{eq:adagrad}, the relation \eqref{eq:adambound} holds as well.
	\end{itemize}
\end{theorem}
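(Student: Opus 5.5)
The proof combines Theorem~\ref{thm:allgradientbound} (stationarity) with Lemma~\ref{lem:allxprepmain2}(iii) (consensus), using Lemma~\ref{lem:allnonincrea} to supply $B_u$ and to bound the sum of the adaptive-scaling differences for each optimizer.

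\textbf{Step 1: constants.} By Lemma~\ref{lem:allnonincrea}, all three optimizers satisfy $\|\vu_i^t\|_\infty\le B_\infty^2$, so we take $B_u=B_\infty^2$. The step-size condition in \eqref{eq:cond-alpha-gamma-amsgrad-all} then implies \eqref{eq:cond-alpha-gamma-amsgrad2}, and Theorem~\ref{thm:allgradientbound} applies.

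\textbf{Step 2: normalize the stationarity bound.} Multiply \eqref{eq:allgradient-bound} by $\frac{4\sqrt{B_\infty^2+\delta}}{\alpha TK}$ and substitute $\alpha=4\theta\sqrt{n(B_\infty^2+\delta)}/\sqrt{TK}$. Treating $\delta,\theta,\beta_1$ as $O(1)$, the terms behave as follows.
\begin{itemize}
\item $\frac{f(\vx^0)-f^*}{\alpha TK}$ becomes $O\big(\frac{f(\vx^0)-f^*}{\sqrt{nTK}}\big)$.
\item The term $\alpha^2 L\frac{24}{n\delta}TKB^2$, divided by $\alpha TK$, gives $\alpha LB^2/n=O\big(\frac{LB^2(1+B_\infty)}{\sqrt{nTK}}\big)$. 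The pure $\beta_1$ drift terms $\alpha^2L^2B^2(\ldots)$ give $O\big(\frac{n}{TK}L^2B^2(1+B_\infty^4)\big)$.
\item The $\overline C$ terms are $\alpha^2TnK^3\overline C$ multiplied by $L^2(\ldots)$. Dividing by $\alpha TK$ gives $\alpha nK^2\overline C\cdot L^2(\ldots)$. Using $\alpha^2TK=O(n(B_\infty^2+1))$, this becomes $O\big(\frac{nK\overline C}{T}L^2B_\infty^3(1+L+B_\infty)\big)$.
\item The term with the sum of squared differences of $1/\sqrt{\vu+\delta}$ is handled per optimizer (Step 3).
\end{itemize}

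\textbf{Step 3: optimizer-specific sums.}
\begin{itemize}
\item AMSGrad: the sum is at most $d/\delta$. After normalization it contributes $O\big(\frac{dB_\infty^3(B_\infty+L+1)}{TK}\big)$.
\item Adam: the bound is $TKd(1-\beta_2)^2B_\infty^4/\delta^3$. The choice $\beta_2\ge\frac{\sqrt{TK}}{\sqrt{TK}+1}$ gives $(1-\beta_2)^2\le\frac{1}{TK}$, so the sum is $O(dB_\infty^4)$. This yields the extra factor $B_\infty^4$ and hence the $dB_\infty^7$ term in \eqref{eq:adambound}.
\item AdaGrad: the bound $2dB_\infty^4/\delta^3$ gives the same order as Adam.
\end{itemize}

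\textbf{Step 4: add the consensus term.} By Lemma~\ref{lem:allxprepmain2}(iii),
\[
\frac{1}{TK}\sum_t \frac1n\EE\|\vX_\perp^t\|^2\le \alpha^2K^2\overline C = O\Big(\frac{nK\overline C(1+B_\infty^2)}{T}\Big).
\]
This is the final term in \eqref{eq:amsbound}. Adding it to the bound from Steps 2–3 gives \eqref{eq:amsbound} and \eqref{eq:adambound}.

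\textbf{Main obstacle.} The main difficulty is bookkeeping, not a new inequality. Each term of \eqref{eq:allgradient-bound} must be rescaled correctly, in particular by converting $\alpha^2 TK$ into $n(B_\infty^2+\delta)$. One must also check that the Adam $\beta_2$ window makes the scaling-difference sum $O(1)$ rather than $O(TK)$; otherwise that term would not decay.
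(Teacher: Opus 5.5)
Your proposal is correct and follows the paper's proof essentially step for step: take $B_u=B_\infty^2$ from Lemma~\ref{lem:allnonincrea}, divide the bound of Theorem~\ref{thm:allgradientbound} by $\alpha TK/(4\sqrt{B_\infty^2+\delta})=\theta\sqrt{nTK}$, add the consensus bound $\alpha^2K^2\overline{C}$ from Lemma~\ref{lem:allxprepmain2}(iii), and insert the per-optimizer sums, using $(1-\beta_2)^2\le 1/(TK)$ for Adam. Two intermediate computations are off, though neither changes the final orders you state. First, the $\overline{C}$ terms carry extra factors of $\alpha$ and $1/n$ that you dropped, so after normalization they are $O(\alpha^2K^2\overline{C})$ times $L^2(\cdots)$, with $\alpha\le 1$ absorbing the leftover $\alpha L$; it is $\alpha^2K^2=16\theta^2 n(B_\infty^2+\delta)K/T$, not $\alpha nK^2\overline{C}$, that yields $nK\overline{C}/T$. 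Second, the $24TKB^2/(n\delta)$ term gives $O\big(LB^2(1+B_\infty^2)/\sqrt{nTK}\big)$, not $O\big(LB^2(1+B_\infty)/\sqrt{nTK}\big)$, because the normalizer contains $\sqrt{B_\infty^2+\delta}$.
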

 
\begin{proof}
From Lemma \ref{lem:allnonincrea}, it holds that $\|\vu_i^t\|_{\infty} \leq  B_u, \forall\, t, \forall\, i$ with $B_u=B_{\infty}^2$ for all the three optimizers in \eqref{eq:amsgrad}-\eqref{eq:adagrad}. 

	Dividing both sides of~\eqref{eq:allgradient-bound} by $\frac{\alpha TK}{4\sqrt{B_{\infty} ^2+\delta}} = \theta\sqrt{n TK}$ 
    and rearranging terms, we have	 		
	\begin{align*} 
			&\frac{1}{TK} \sum_{t=0}^{TK-1} \mathbb{E}\left[\|\nabla  f\left(\overline{\vx}^{t}\right)\|^2\right] \leq \frac{1}{\theta\sqrt{n TK}}\left(f\big(\vx^0\big)-f^*\right)  +  \frac{\alpha^2 L^2\beta_1^2 B^2}{2\delta (1-\beta_1)^2}   \\  & 
			+\frac{4\beta_1^2B_{\infty}^2\sqrt{B_{\infty}^2+\delta}}{TK(1-\beta_1)^2} \left(4\sqrt{B_{\infty}^2+\delta} + {\alpha L}\right)  \mathbb{E} \left[\sum_{t=0}^{TK-1} \frac{1}{n}\sum_{i=1}^n \left\|\frac{1}{\sqrt{\vu^{t-2}_{ i}+\delta}}-\frac{1}{\sqrt{\vu^{t-1}_{ i}+\delta}} \right\|^2\right]
			\\   & + 4\alpha L \sqrt{B_{\infty} ^2+\delta}  \frac{24 }{n\delta}  B^2     + 4\sqrt{B_{\infty} ^2+\delta}\frac{\alpha^2 L^2 nK^2\overline{C}  }{n }\left(\frac{\sqrt{B_{\infty}^2+\delta}}{ \delta} + \frac{1}{2\sqrt{\delta}}+    \frac{6 \alpha L   }{ \delta} \right)   \\
			&
			+4\sqrt{B_{\infty} ^2+\delta} \Bigg( \frac{\alpha^2 \beta_1^2  L^2 B^2}{\delta(1-\beta_1)^2} \left(\frac{ \sqrt{B_{\infty}^2+\delta} }{ \delta } + \frac{  1}{2\sqrt{\delta}} \right)\Bigg).
	\end{align*}	
	Adding~\eqref{eq:allxprepcom22} to   the above inequality and 
	replacing $\alpha$ by  $\frac{4\theta\sqrt{n} \sqrt{B_{\infty} ^2+\delta}}{\sqrt{TK}}\leq 1$ in the resulting inequality, we obtain  
	\begin{align}\notag
		&\frac{1}{TK} \sum_{t=0}^{TK-1} \mathbb{E}\left[\|\nabla  f\left(\overline{\vx}^{t}\right)\|^2 + \frac{1}{n}\|\vX_\perp^t\|^2\right] \leq \frac{1}{\theta\sqrt{n TK}}\left(f\big(\vx^0\big)-f^*\right)  +  \frac{16\theta^2 n L^2\beta_1^2 B^2 (B_{\infty} ^2+\delta)}{2TK\delta (1-\beta_1)^2} 
		\\   & 		+\frac{4\beta_1^2B_{\infty}^2\sqrt{B_{\infty}^2+\delta}}{TK(1-\beta_1)^2} \left(4\sqrt{B_{\infty}^2+\delta} + { L}\right)\mathbb{E} \left[\sum_{t=0}^{TK-1} \frac{1}{n}\sum_{i=1}^n \left\|\frac{1}{\sqrt{\vu^{t-2}_{ i}+\delta}}-\frac{1}{\sqrt{\vu^{t-1}_{ i}+\delta}} \right\|^2\right]
		\label{eq:allgradient-bound-use1}
		\\   & + \frac{16\theta   }{\sqrt{nTK}} L  {(B_{\infty} ^2+\delta)} \frac{24 }{\delta}  B^2    +
		\frac{64 L^2 nK^2\overline{C} \theta^2   {(B_{\infty} ^2+\delta)^{\frac{3}{2}}}}{ {TK}} \left(\frac{\sqrt{B_{\infty}^2+\delta}}{ \delta} + \frac{1}{2\sqrt{\delta}}+  \frac{6   L   }{ \delta}\right)   
		\notag\\
		&+
		\frac{64  n \theta^2   {(B_{\infty} ^2+\delta)^{\frac{3}{2}}}}{ {TK}}
		\Bigg( \frac{  \beta_1^2  L^2 B^2}{\delta(1-\beta_1)^2} \left(\frac{ \sqrt{B_{\infty}^2+\delta} }{ \delta } + \frac{  1}{2\sqrt{\delta}} \right)\Bigg) + \frac{16  nK^2\overline{C} \theta^2   {(B_{\infty} ^2+\delta) }}{ {TK}}.
		\notag
	\end{align}
	We now substitute the results in Lemma \ref{lem:allnonincrea} to the above inequality.
	
	(i) For AMSGrad, we have 
	$$
	\mathbb{E} \left[\sum_{t=0}^{TK-1} \frac{1}{n}\sum_{i=1}^n \left\|\frac{1}{\sqrt{\vu^{t-2}_{ i}+\delta}}-\frac{1}{\sqrt{\vu^{t-1}_{ i}+\delta}} \right\|^2\right] \leq \frac{d}{{\delta}} = O(d).
	$$
	
	(ii) For Adam, we know from $\beta_2 \in \left[\frac{\sqrt{TK}}{\sqrt{TK}+1}, 1\right]$ that 
	\begin{align*} 
		\mathbb{E} \left[\sum_{t=0}^{TK-1} \frac{1}{n}\sum_{i=1}^n \left\|\frac{1}{\sqrt{\vu^{t-2}_{ i}+\delta}}-\frac{1}{\sqrt{\vu^{t-1}_{ i}+\delta}} \right\|^2\right] \leq \frac{TKd(1-\beta_2)^2B_{\infty}^4}{\delta^{{3}{}}} = O(d B_{\infty}^4).
	\end{align*}
	
	(iii) For Adagrad, we have
	\begin{align*}
		\sum_{t=0}^{TK-1} \frac{1}{n}\sum_{i=1}^n \left\|\frac{1}{\sqrt{\vu^{t-2}_{ i}+\delta}}-\frac{1}{\sqrt{\vu^{t-1}_{ i}+\delta}} \right\|^2
		\leq \frac{ 2d B_{\infty}^4 }{\delta^{{3}{}}}
		 = {O}(d B_{\infty}^4 ).
	\end{align*}
	Therefore, we obtain the desired results.
\end{proof}

\section{The matrix-form adaptive gradient updates}
It should be noted that our theoretical results extend to matrix-form adaptive gradient updates, where the $d$-dimension real-valued functions $\{r_t\}$ are replace by some $d\times d$ dimension real-valued functions $\{\vr_t\}$.
Under this formulation, we update the second-momentum matrices $\vU_i$ as
$\vU_{i}^t=\vr_t(\vg_i^0,\vg_i^1,\ldots,\vg_i^t)$ and the model parameter by
$\vx^{t+\frac{1}{2}}_{i}=\vx^{t}_{i}-\alpha \left({\vU_{i}^{t-1}+\delta}\right)^{-\frac{1}{2}}{\vm_{i}^t}{}$.

Our theoretical results apply to the matrix-form adaptive gradient method, provided that 
$\max_{r,s}\|[\vU_i^t]_{rs}\|$ is uniformly bounded for all $i\in\{1,2,\ldots,n\}$ and $t\ge 0$, and that the summation $\sum_{t=0}^{TK-1} \frac{1}{n}\sum_{i=1}^n \left\| ({{\vU^{t-2}_{ i}+\delta}})^{-\frac{1}{2}}- ({{\vU^{t-1}_{ i}+\delta}})^{-\frac{1}{2}} \right\|^2$  remains bounded.

A notable example of this framework is the matrix-form AdaGrad method, where $\vU^t_i$ is updated as
$$
\vU_i^t = \frac{1}{t+1} \sum_{s=0}^t \vg_i \vg_i\zz, \text{ for each agent }i=1,\ldots,n. 
$$
For this choice of $\{\vU_i^t\}$, it is not difficult to show that $|[\vU_i^t]_{rs}|  \leq  B_{\infty} ^2$ for all $r\in\{1,2,\ldots,d\}$ and $s\in\{1,2,\ldots,d\}$, and
	\begin{align} 
		\sum_{t=0}^{TK-1} \frac{1}{n}\sum_{i=1}^n \left\| ({{\vU^{t-2}_{ i}+\delta}})^{-\frac{1}{2}}- ({{\vU^{t-1}_{ i}+\delta}})^{-\frac{1}{2}} \right\|^2
		\leq \frac{ 2d^2 B_{\infty}^4 }{\delta^{{3}{}}}.
	\end{align}
Therefore, our theoretical results extend naturally to the matrix-form AdaGrad method. 

\section{Examples of $\eta$-compression operators} 

In this section, we provide a few concrete examples of compression operators that are $\eta$-compression operators. More examples can be found in
\citep{chen2022efficient,koloskova2019decentralized2}.

\begin{example}
QSGD \citep{alistarh2017qsgd} 
	compresses $\vx \in \mathbb{R}^d$ by $\cQ_{s g d}(\vx)=$ $\frac{\operatorname{sign}(\vx)\|\vx\|}{s}\left\lfloor s \frac{|\vx|}{\|\vx\|}+\xi\right\rfloor$ where $\xi$ is uniformly distributed on $[0,1]^d, s$ is a parameter about compression level. Then $\cQ(\vx):=\frac{1}{\tau} \cQ_{\textrm{ssgd}}(\vx)$ with $\tau=\left(1+\min \left\{d / s^2, \sqrt{d} / s\right\}\right)$ is an $\eta$-compression operator with $\eta=1-\frac{1}{\tau}$.
\end{example}

\begin{example}
$\cQ_{\textrm{sparse}}(\vx)$ \citep{stich2018sparsified} randomly selects $k$ out of $d$ coordinates from $\vx$, or the $k $ coordinates with the largest  values in magnitude from $\vx$. Then $\cQ_{\textrm{sparse}}(\vx)$ is an $\eta$-compression operator with $\eta=\frac{d-k}{d}$.
\end{example}

\begin{example}
    $\cQ_{\textrm{gossip}}(\vx)$~\citep{koloskova2019decentralized2} sets $\cQ_{\textrm{gossip}}(\vx)= \vx$ with probability $p\in[0,1]$ and $\cQ_{\textrm{gossip}}(\vx)= 0$ with probability $1-p$. Then $\cQ_{\textrm{gossip}}(\vx)$ is an $\eta$-compression operator with $\eta=1-p$.
\end{example}

\section{Additional Numerical Experiments}

We include a suite of additional numerical results in this section. These results were omitted from the main body of the paper for space considerations. We expand on the results presented in the main body in several ways. Figure~\ref{fig:appx_fmnist_fig1} includes all omitted FashionMNIST results using the same experiment setup as shown in Figures~\ref{fig:per_opt},~\ref{fig:per_k},~and~\ref{fig:per_topk} for CIFAR-10 and tiny-shakespeare. However, we note that we omit CDProxSGT optimizer comparisons in this supplement, as its results were not competitive in most experiments, as observed in Figure~\ref{fig:per_opt}. SQuARM-SGD otherwise represents methods with non-adaptive updates.

We also include additional experiments using expanded parameter settings. Figure~\ref{fig:appx_opt_comp} repeats the optimizer comparisons demonstrated in Figure~\ref{fig:per_opt} with a wider variety of local update counts. Figure~\ref{fig:appx_k_comp} repeats the experiments comparing the effect of local update counts on communication rounds shown in Figure~\ref{fig:per_k} with Adam's update, while comparing AdaGrad's and AMSGrad's adaptive updates as well. Figure~\ref{fig:appx_topk_comp} compares varying values of Top-$k$ compression for all three optimizer variants. We do not show training loss or consensus error with these figures for space and clarity, while noting that theses plots would otherwise be consistent with those shown in Figures~\ref{fig:per_opt},~\ref{fig:per_k},~\ref{fig:per_topk}, and~\ref{fig:appx_fmnist_fig1}.

\subsection{FashionNMIST Results}


In Figure~\ref{fig:appx_fmnist_fig1}, we plot the results for the same experiments on FashionNMIST as performed and displayed for CIFAR-10 and tiny-shakespeare in Figures~\ref{fig:per_opt},~\ref{fig:per_k},~\ref{fig:per_topk} in the main body of the paper. For these results, Top-$k$ compression of 30\% is used. The primary observations in this figure are consistent as with the prior results. We generally observe little to no degradation of accuracy and loss performance, even when including local updates and Top-$k$ compression. In particular, FashionMNIST suffers less in terms of quality impacts when including compression and minimizing communication relative the two other test datasets.

\begin{figure}[!htbp]
\begin{center}
  \begin{subfigure}{0.99\textwidth}
    \includegraphics[trim=0 0 0 0cm,clip,width=0.32\textwidth]{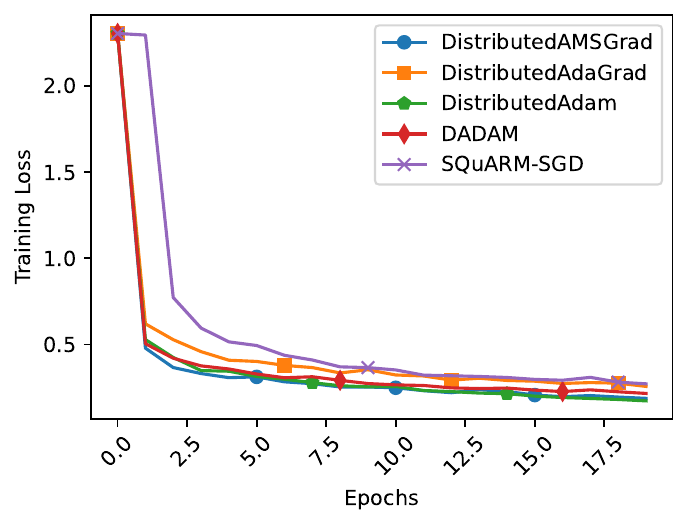}
    \includegraphics[trim=0 0 0 0cm,clip,width=0.32\textwidth]{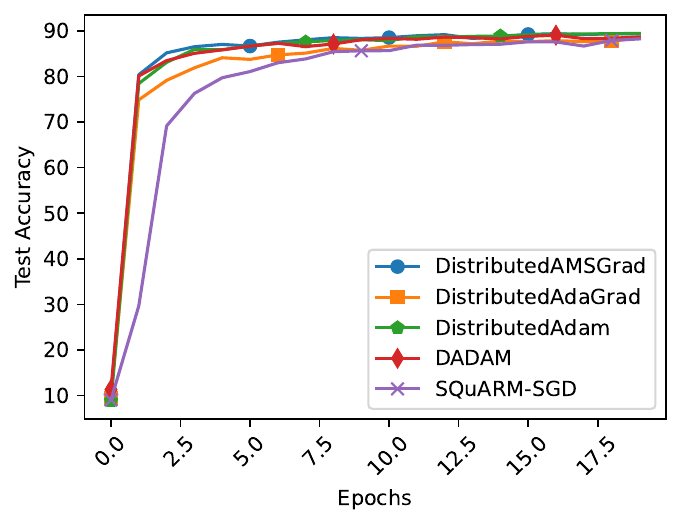}
    \includegraphics[trim=0 0 0 0cm,clip,width=0.33\textwidth]{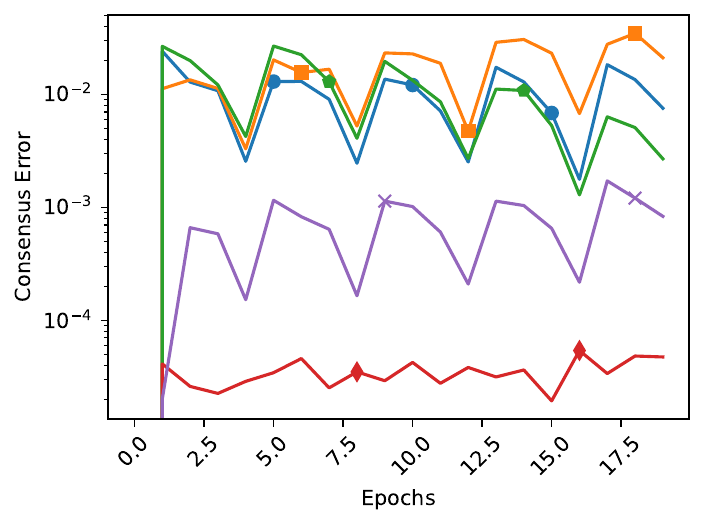}
    \caption{FashionMNIST optimizer comparison.}
    \label{fig:fmnist_opt_comp}
  \end{subfigure}\\
  \begin{subfigure}{0.99\textwidth}
    \includegraphics[trim=0 0 0 0cm,clip,width=0.32\textwidth]{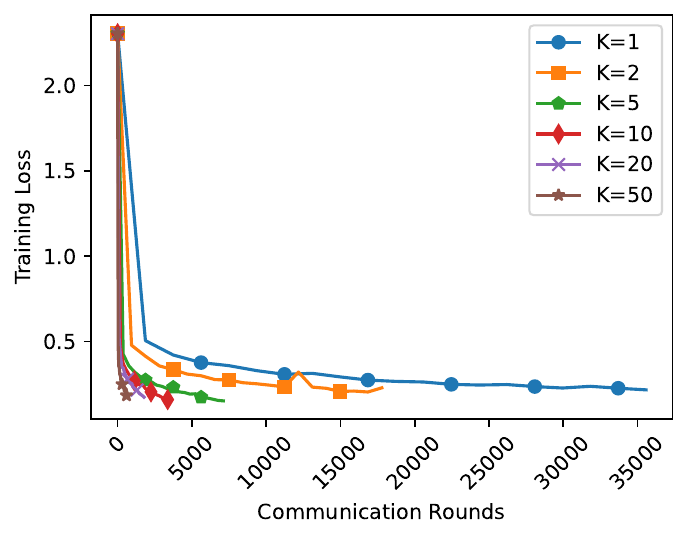}
    \includegraphics[trim=0 0 0 0cm,clip,width=0.32\textwidth]{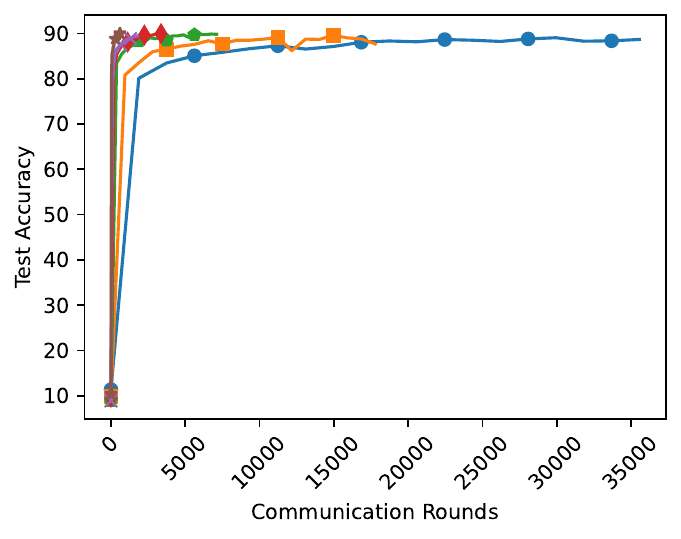}
    \includegraphics[trim=0 0 0 0cm,clip,width=0.33\textwidth]{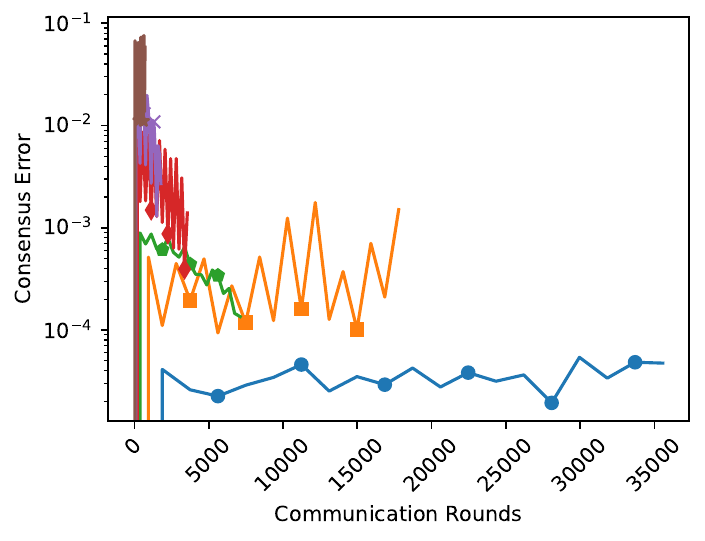}
    \caption{FashionMNIST reduction in communication rounds.}
    \label{fig:fmnist_k_comp}
  \end{subfigure}\\
  \begin{subfigure}{0.99\textwidth}
    \includegraphics[trim=0 0 0 0cm,clip,width=0.32\textwidth]{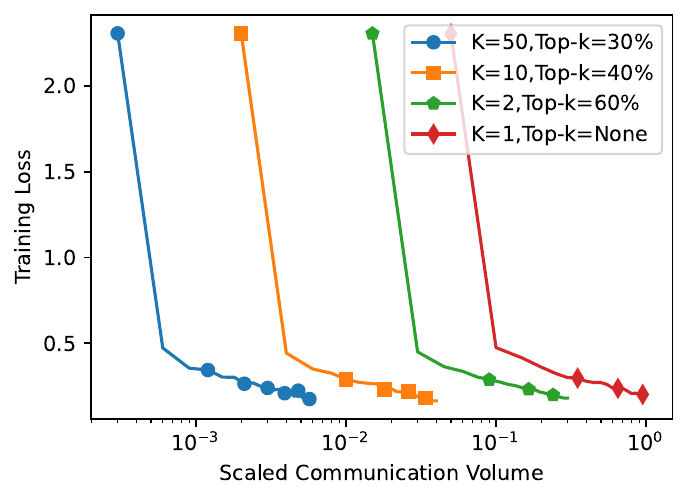}
    \includegraphics[trim=0 0 0 0cm,clip,width=0.32\textwidth]{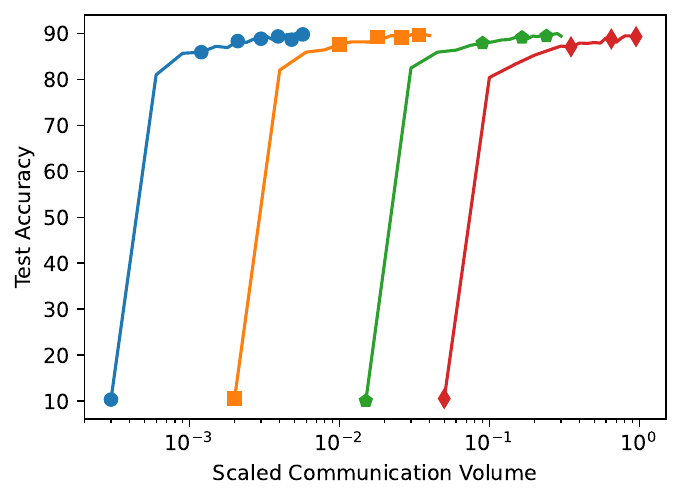}
    \includegraphics[trim=0 0 0 0cm,clip,width=0.33\textwidth]{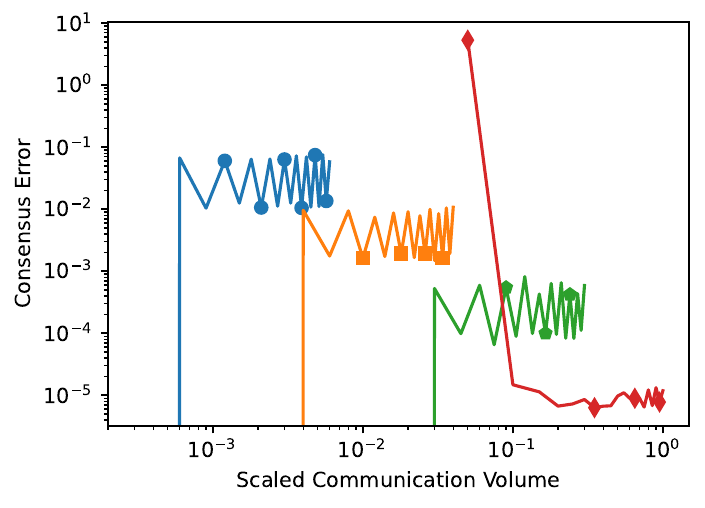}
    \caption{FashionMNIST reduction in communication volume.}
    \label{fig:fmnist_vol_comp}
  \end{subfigure}
\end{center}
\caption{\textbf{Convergence performance for FashionMNIST:} Plotted above are the training loss and test accuracy 
of FashionMNIST. The top row compares optimizer performance with Top-$k$ 30\% compression and a local update count of $K=20$. 
The middle row demonstrates the reduction in communication rounds based on the number of local updates with Top-$k$ compression of 30\%. The bottom row compares the total communication volume scaled relative to the uncompressed baseline with no local updates.}
\label{fig:appx_fmnist_fig1}
\end{figure}

\subsection{Additional Optimizer Comparisons}


In the main body of the paper, Figures~\ref{fig:opt_comp1} and~\ref{fig:opt_comp2} compare optimizer performance for a fixed value of $K=20$. Figure~\ref{fig:appx_opt_comp} repeats these experiments with additional values of $K=2,5,10,50$, still using 4 agents. Shown are test accuracy/validation loss with compression values of 30\%, 40\%, and 50\% for FashionMNIST, CIFAR-10, and tiny-shakespeare, respectively. We overall again observe consistent results as with Figure~\ref{fig:per_opt}, where Adam outperforms other optimizers on FashionMNIST and tiny-shakespeare and AdaGrad outperforms other optimizers on CIFAR-10. Likewise, SQuARM-SGD consistently lacks in generalization performance on the GPT language model with the tiny-shakespeare dataset.

\begin{figure}[!htbp]
\begin{center}
  \begin{subfigure}{0.99\textwidth}
    \includegraphics[trim=0 0 0 0cm,clip,width=0.33\textwidth]{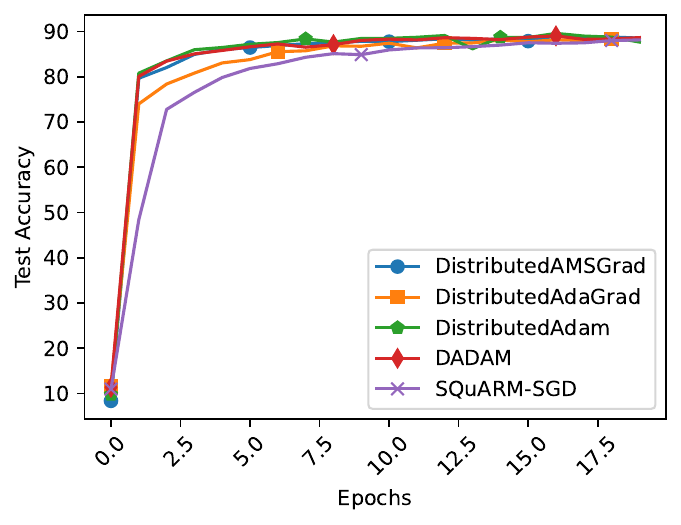}
    \includegraphics[trim=0 0 0 0cm,clip,width=0.33\textwidth]{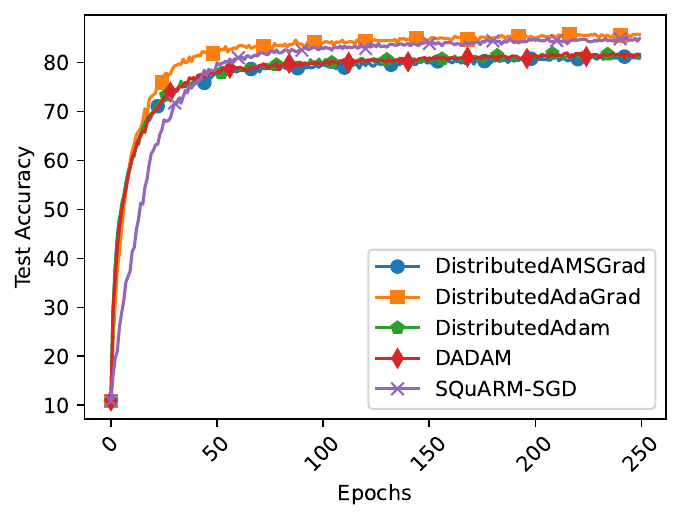}
    \includegraphics[trim=0 0 0 0cm,clip,width=0.33\textwidth]{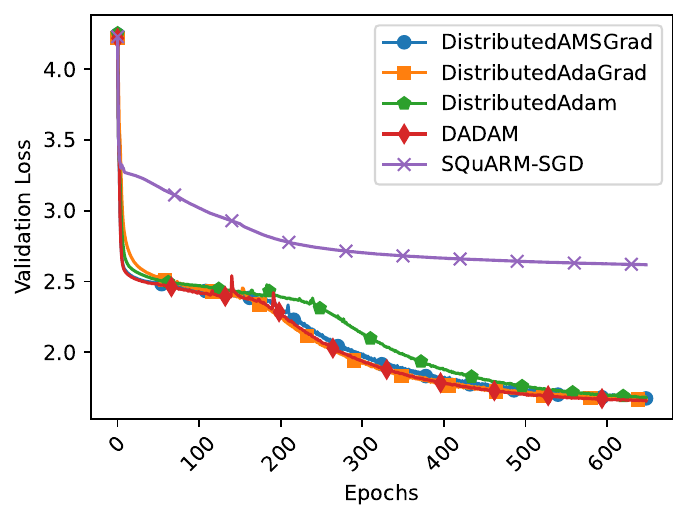}
    \caption{$K=2$: FashionMNIST, CIFAR-10, and Shakespeare optimizer comparison.}
    \label{fig:opt_comp_k2}
  \end{subfigure}\\
  \begin{subfigure}{0.99\textwidth}
    \includegraphics[trim=0 0 0 0cm,clip,width=0.33\textwidth]{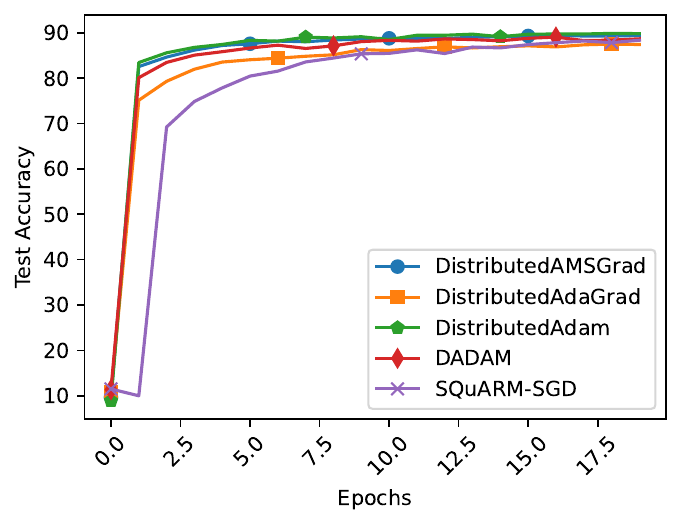}
    \includegraphics[trim=0 0 0 0cm,clip,width=0.33\textwidth]{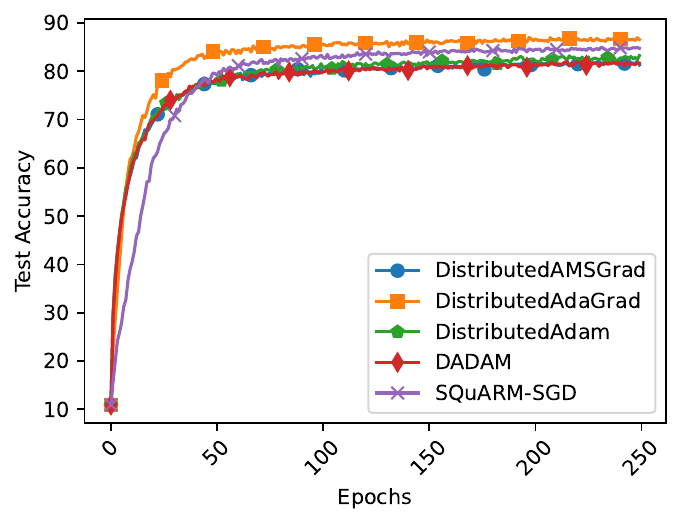}
    \includegraphics[trim=0 0 0 0cm,clip,width=0.33\textwidth]{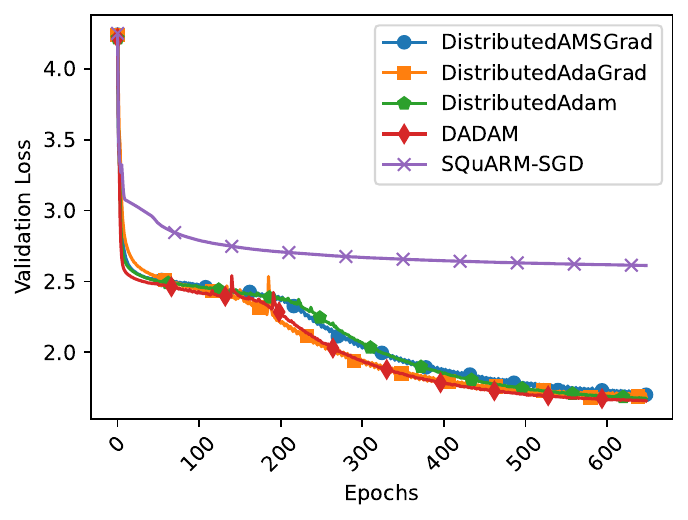}
    \caption{$K=5$: FashionMNIST, CIFAR-10, and Shakespeare optimizer comparison.}
    \label{fig:opt_comp_k5}
  \end{subfigure}\\
  \begin{subfigure}{0.99\textwidth}
    \includegraphics[trim=0 0 0 0cm,clip,width=0.33\textwidth]{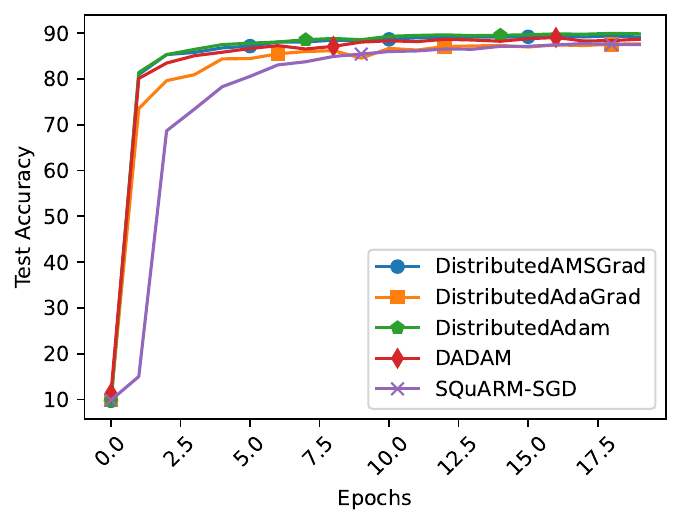}
    \includegraphics[trim=0 0 0 0cm,clip,width=0.33\textwidth]{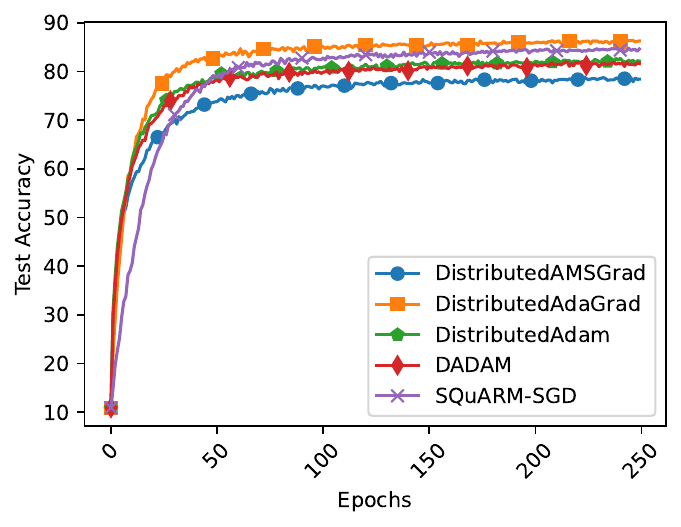}
    \includegraphics[trim=0 0 0 0cm,clip,width=0.33\textwidth]{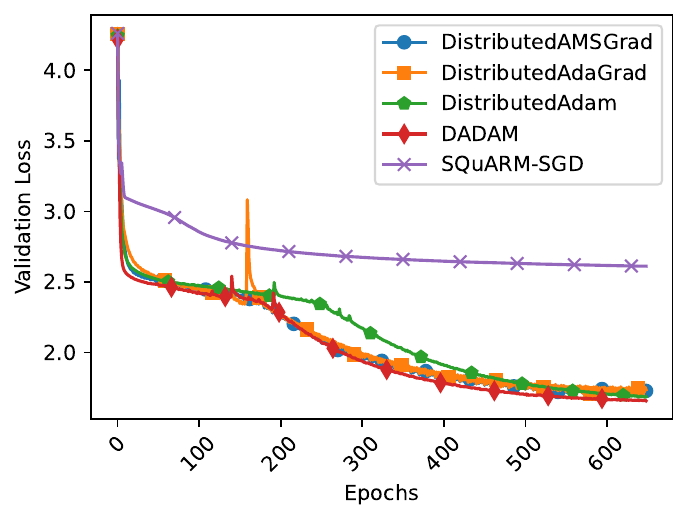}
    \caption{$K=10$: FashionMNIST, CIFAR-10, and Shakespeare optimizer comparison.}
    \label{fig:opt_comp_k10}
  \end{subfigure}\\
  \begin{subfigure}{0.99\textwidth}
    \includegraphics[trim=0 0 0 0cm,clip,width=0.33\textwidth]{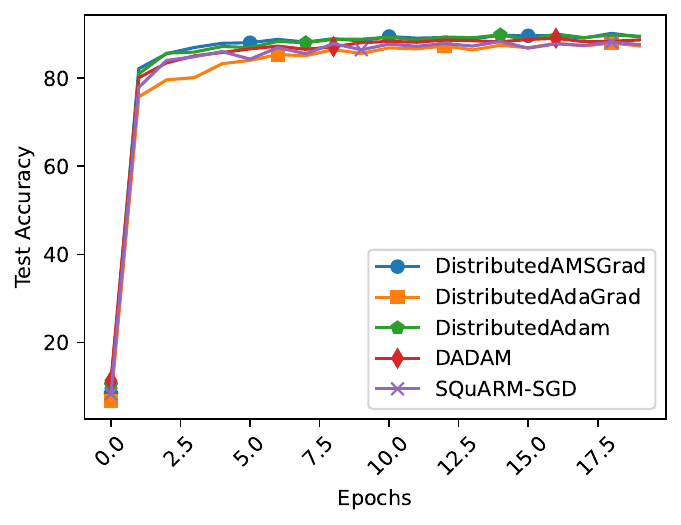}
    \includegraphics[trim=0 0 0 0cm,clip,width=0.33\textwidth]{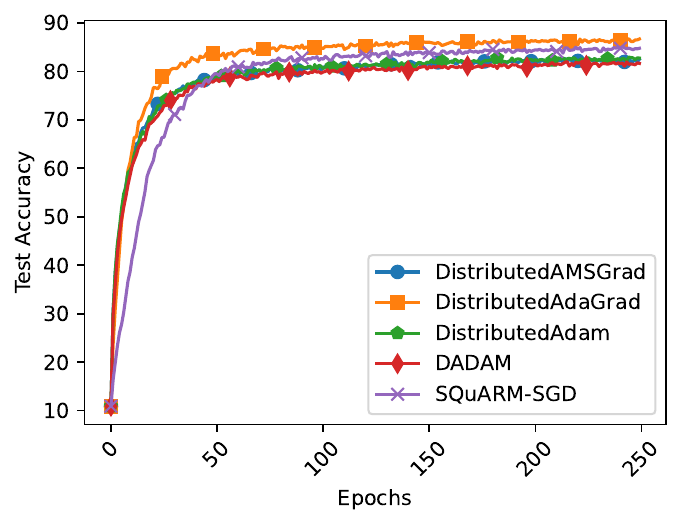}
    \includegraphics[trim=0 0 0 0cm,clip,width=0.33\textwidth]{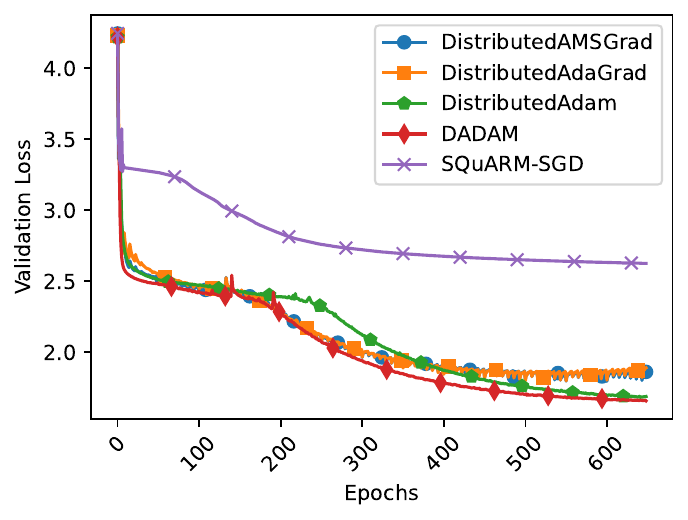}
    \caption{$K=50$: FashionMNIST, CIFAR-10, and Shakespeare optimizer comparison.}
    \label{fig:opt_comp_k50}
  \end{subfigure}\\
\end{center}
\caption{\textbf{Additional optimizer comparisons:} Plotted above are accuracy (for FashionMNIST and CIFAR-10) and validation loss (for tiny-shakespeare) for the tested optimizers across a range of local update counts $K$. For each subplot, FashionMNIST is on the right, CIFAR-10 is in the middle, and tiny-shakespeare is on the right.}
\label{fig:appx_opt_comp}
\end{figure}

\subsection{Additional Number of Local Updates Comparisons}


In Figure~\ref{fig:appx_k_comp}, the experiments using Adam in Figures~\ref{fig:k_comp1},~\ref{fig:k_comp2}, and~\ref{fig:fmnist_k_comp} are repeated with the AdaGrad and AMSGrad adaptive updates. For each optimizer and benchmark dataset, we plot the accuracy or validation loss that results with local updates of $K=1,2,5,10,20,50$. Again, we use Top-$k$ compression values of 30\%, 40\%, and 50\% for FashionMNIST, CIFAR-10, and tiny-shakespeare. Likewise, we note that accuracy performance is minimally affected by the number of local updates used in these tests.

\begin{figure}[!htbp]
\begin{center}
  \begin{subfigure}{0.99\textwidth}
    \includegraphics[trim=0 0 0 0cm,clip,width=0.49\textwidth]{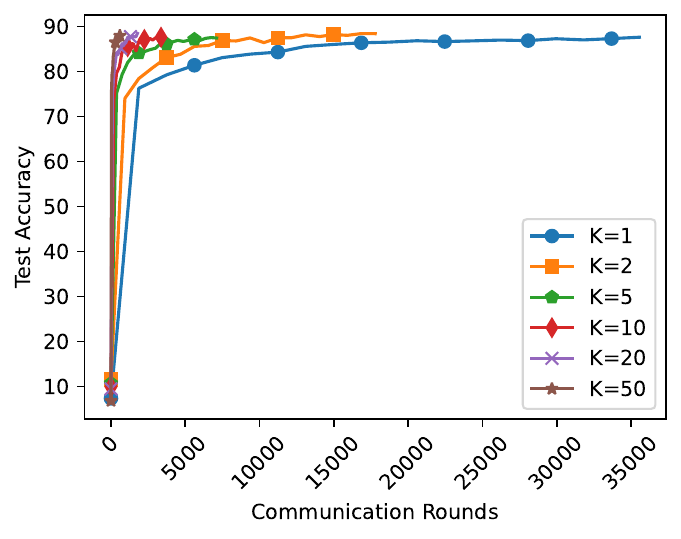}
    \includegraphics[trim=0 0 0 0cm,clip,width=0.49\textwidth]{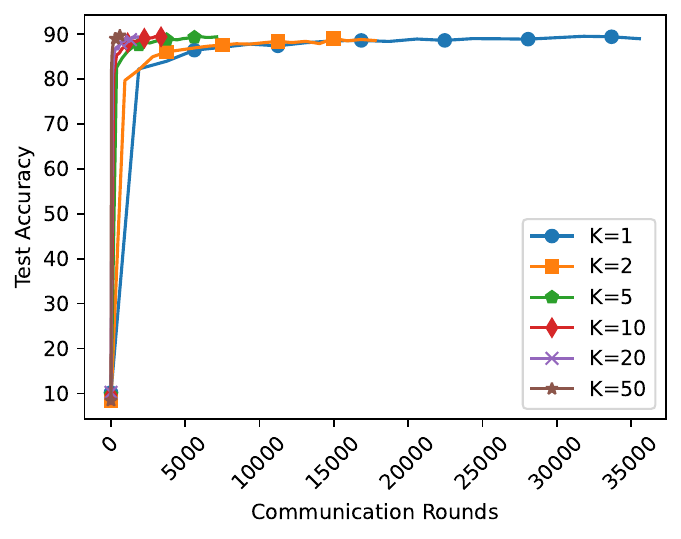}
    \caption{AdaGrad (left) and AMSGrad (right) on FashionMNIST with Top-$k$ compression of 30\%.}
    \label{fig:k_comp_fmnist}
  \end{subfigure}\\
  \begin{subfigure}{0.99\textwidth}
    \includegraphics[trim=0 0 0 0cm,clip,width=0.49\textwidth]{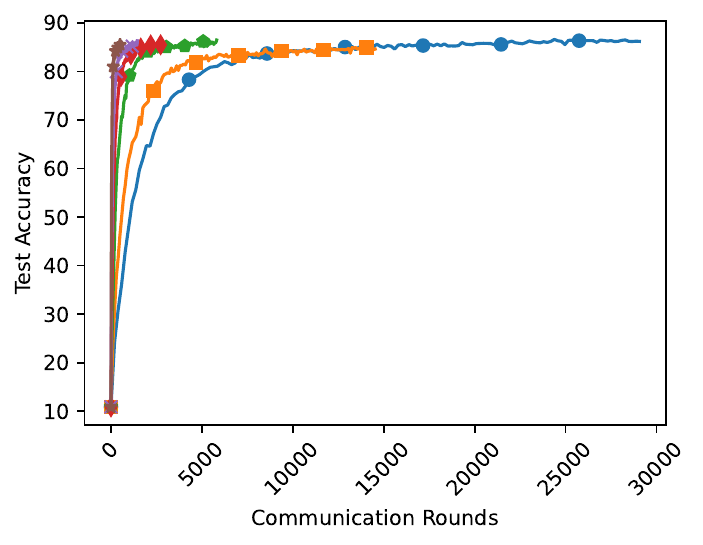}
    \includegraphics[trim=0 0 0 0cm,clip,width=0.49\textwidth]{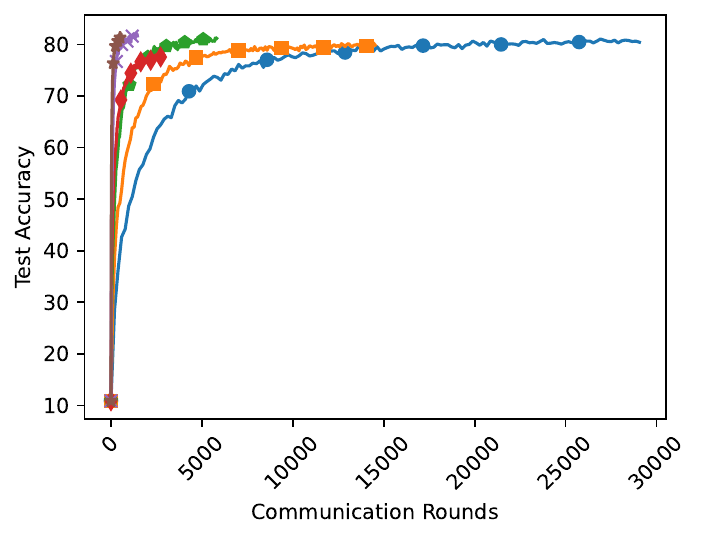}
    \caption{AdaGrad (left) and AMSGrad (right) on CIFAR-10 with Top-$k$ compression of 40\%.}
    \label{fig:k_comp_cifar}
  \end{subfigure}
  \begin{subfigure}{0.99\textwidth}
    \includegraphics[trim=0 0 0 0cm,clip,width=0.49\textwidth]{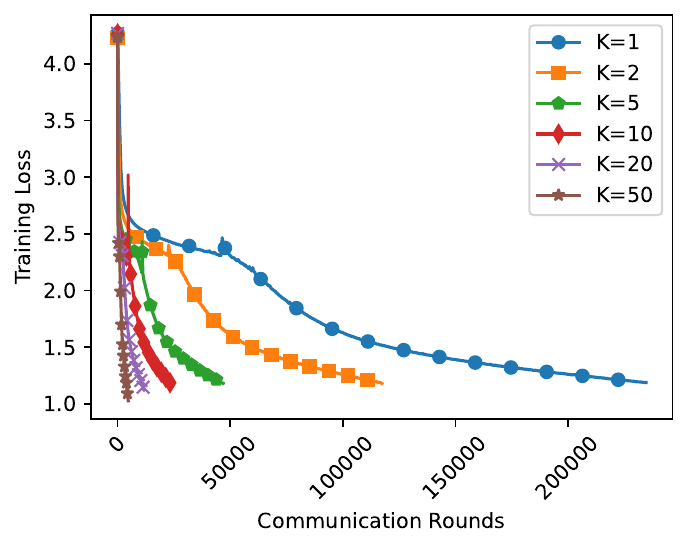}
    \includegraphics[trim=0 0 0 0cm,clip,width=0.49\textwidth]{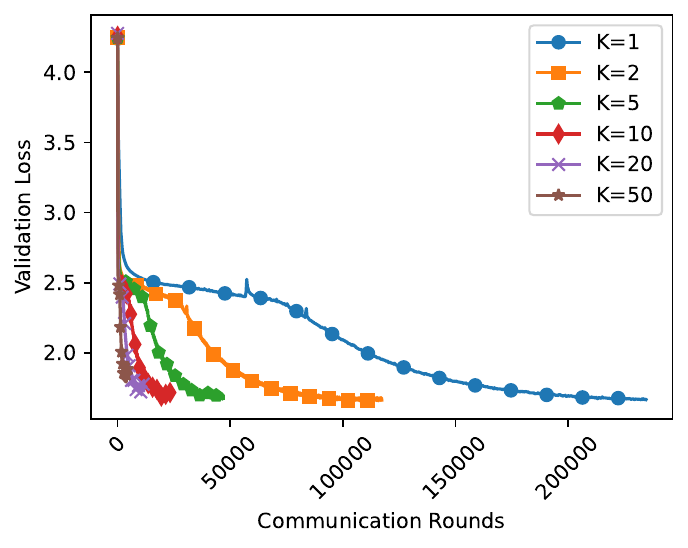}
    \caption{AdaGrad (left) and AMSGrad (right) on tiny-shakespeare with Top-$k$ compression of 50\%.}
    \label{fig:k_comp_shakespeare}
  \end{subfigure}\\
\end{center}
\caption{\textbf{Additional number of local updates comparisons:} Plotted above are accuracy (for FashionMNIST and CIFAR-10) and validation loss (for tiny-shakespeare) using AdaGrad (left) and AMSGrad (right), comparing across a range of local update values $K$.}
\label{fig:appx_k_comp}
\end{figure}

\subsection{Additional Top-$k$ Compression Comparisons}


We fixed Top-$k$ compression values for each dataset for the bulk of the experiments demonstrated thus far, to avoid a parametric explosion in the number of results presented. Figure~\ref{fig:appx_topk_comp} demonstrates that a different choice of Top-$k$ compression in the same order of what was used has minimal impact on the overall optimizer performance and resultant appearance in plots. In Figure~\ref{fig:appx_topk_comp}, we fix $K=20$ and vary Top-$k$ to 30\%, 40\%, 50\%, and 60\% for each of our optimizers (Adam, AdaGrad, AMSGrad) and dataset (FashionMNIST, CIFAR-10, tiny-shakespeare). We observe a reduction in total used communication volume scaling linearly with Top-$k$ percentages, as expected, while accuracy and validation loss are relatively consistent across all tests.

\begin{figure}[!th]
\begin{center}
  \begin{subfigure}{0.99\textwidth}
    \includegraphics[trim=0 0 0 0cm,clip,width=0.33\textwidth]{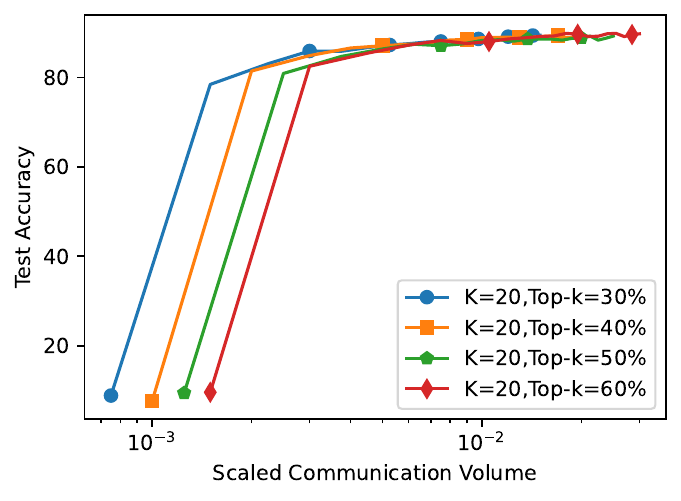}
    \includegraphics[trim=0 0 0 0cm,clip,width=0.33\textwidth]{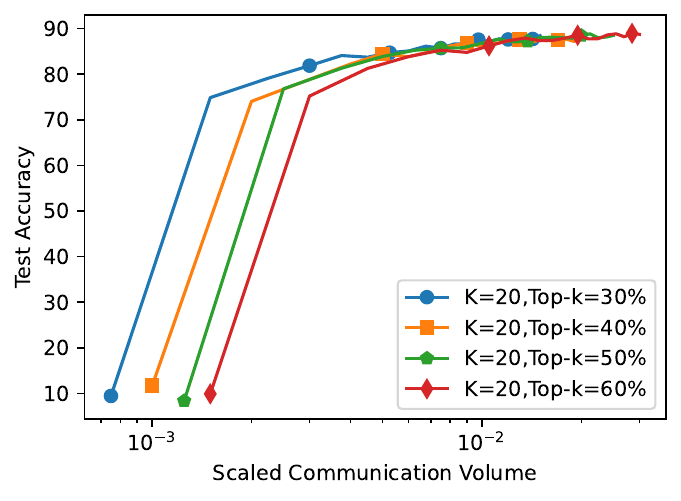}
    \includegraphics[trim=0 0 0 0cm,clip,width=0.33\textwidth]{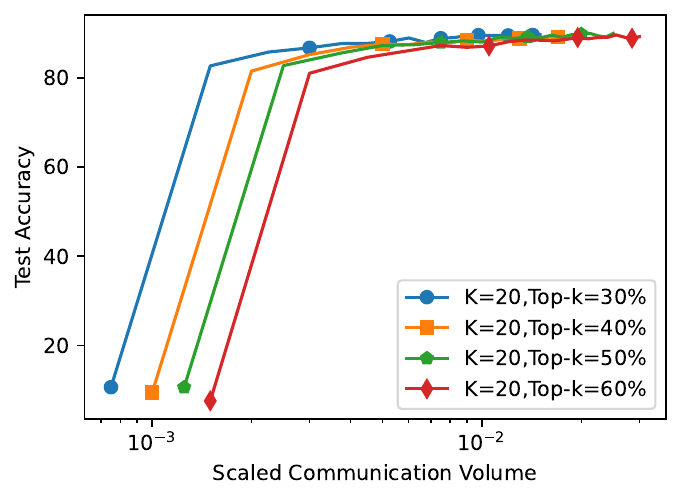}
    \caption{Adam (left), AdaGrad (middle), and AMSGrad (right) on FashionMNIST with varying Top-$k$ compression.}
    \label{fig:topk_comp_fmnist}
  \end{subfigure}\\
  \begin{subfigure}{0.99\textwidth}
    \includegraphics[trim=0 0 0 0cm,clip,width=0.33\textwidth]{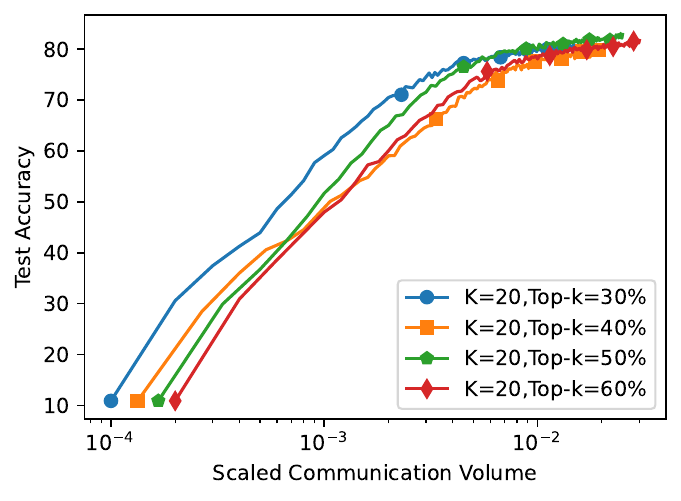}
    \includegraphics[trim=0 0 0 0cm,clip,width=0.33\textwidth]{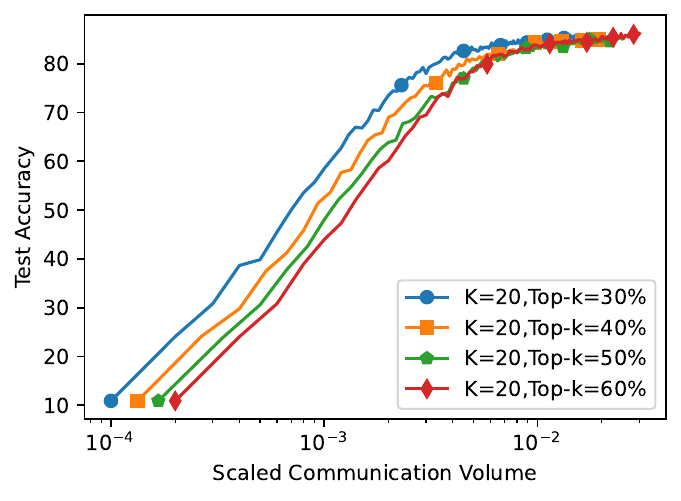}
    \includegraphics[trim=0 0 0 0cm,clip,width=0.33\textwidth]{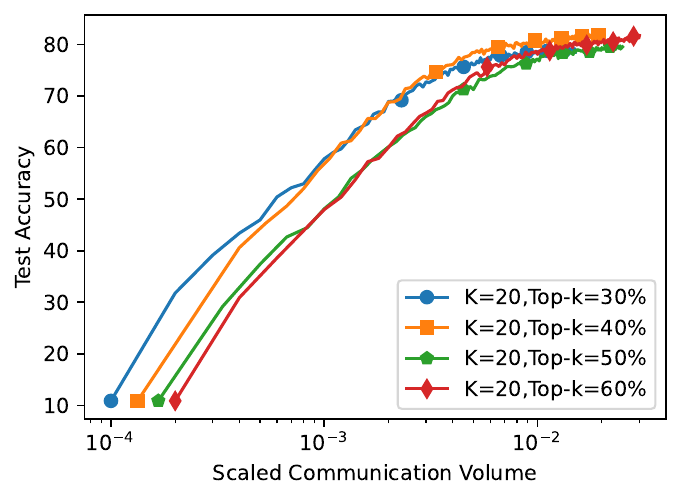}
    \caption{Adam (left), AdaGrad (middle), and AMSGrad (right) on CIFAR-10 with varying Top-$k$ compression.}
    \label{fig:topk_comp_cifar}
  \end{subfigure}\\
  \begin{subfigure}{0.99\textwidth}
    \includegraphics[trim=0 0 0 0cm,clip,width=0.33\textwidth]{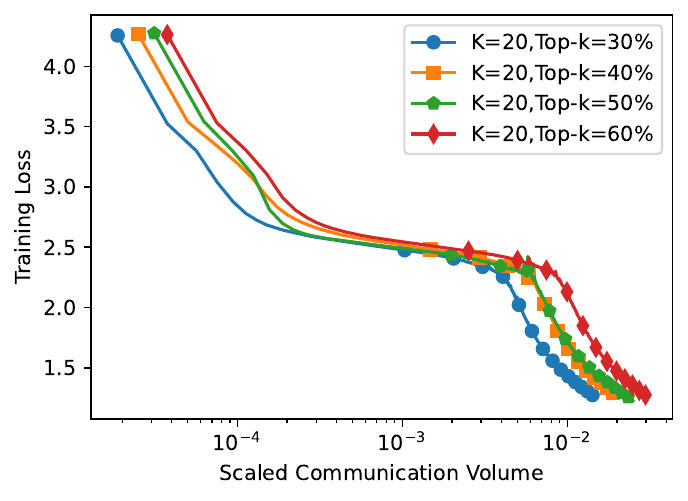}
    \includegraphics[trim=0 0 0 0cm,clip,width=0.33\textwidth]{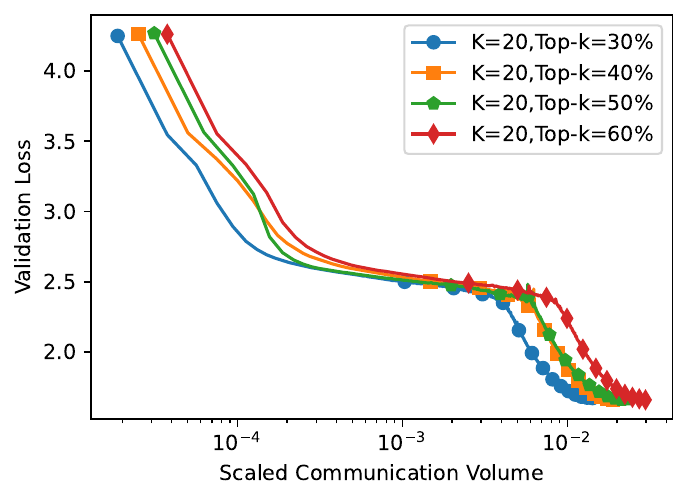}
    \includegraphics[trim=0 0 0 0cm,clip,width=0.33\textwidth]{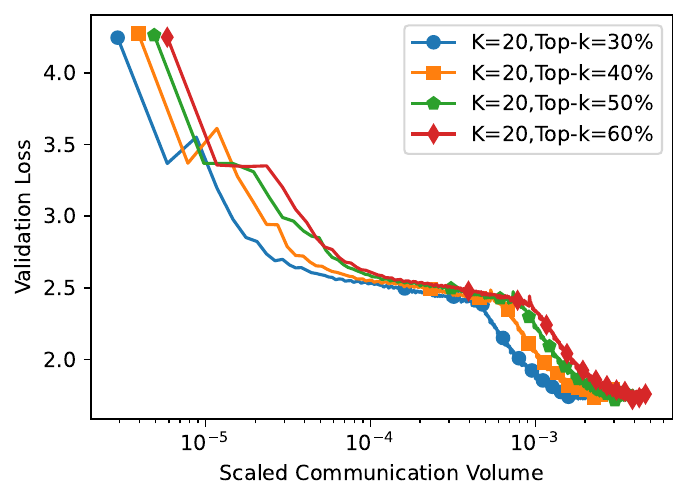}
    \caption{Adam (left), AdaGrad (middle), and AMSGrad (right) on tiny-shakespeare with varying Top-$k$ compression.}
    \label{fig:topk_comp_shakespeare}
  \end{subfigure}\\
\end{center}
\caption{\textbf{Additional Top-$k$ compression comparisons:} Plotted above are accuracy (for FashionMNIST and CIFAR-10) and validation loss (for tiny-shakespeare) using Adam (left), AdaGrad (middle), and AMSGrad (right), comparing across a range of Top-$k$ compression values.}
\label{fig:appx_topk_comp}
\end{figure}

\subsection{Additional Scaling Comparisons}

In Figure~\ref{fig:scaling} we present scaling results for AdaGrad and AMSGrad, as was presented for Adam in Figure~\ref{fig:per_topology} in the main body of this paper. We fix $K=20$ and run with 4, 9, and 16 agents across both ring and 2D grid topologies. As with Adam, we note minimal impact to the maximum achieved test accuracy after 250 epochs, demonstrating near-linear speedup for our approach.

\begin{figure}[!t]
	\begin{center}
        \begin{subfigure}{0.99\textwidth}
			\includegraphics[trim=0 0 0 0cm,clip,width=0.32\textwidth]{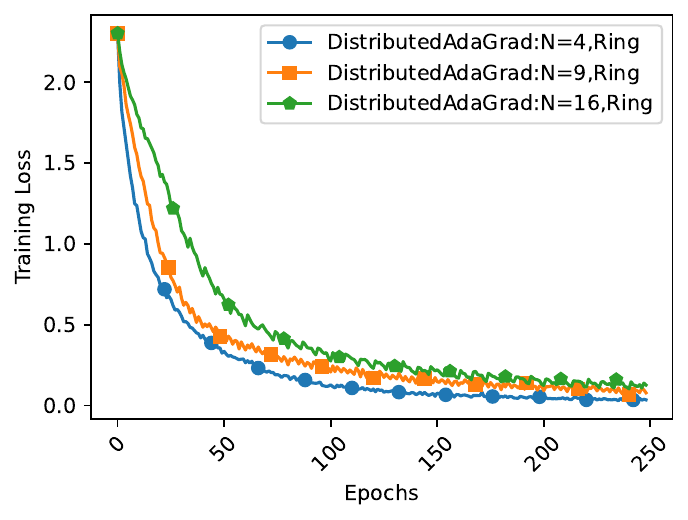}
			\includegraphics[trim=0 0 0 0cm,clip,width=0.32\textwidth]{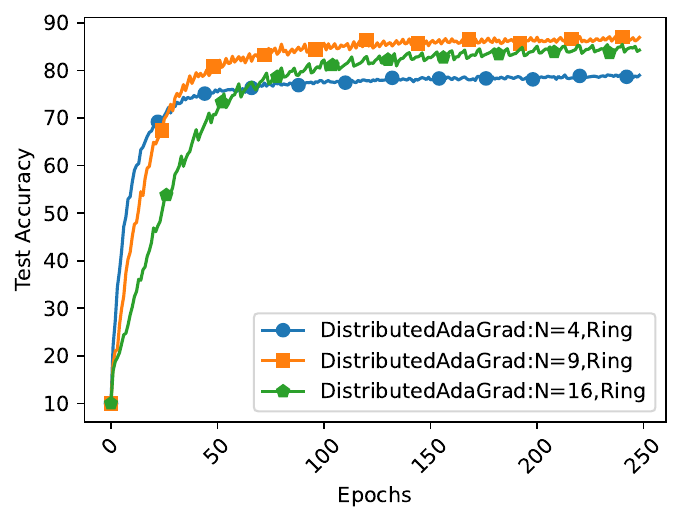}
			\includegraphics[trim=0 0 0 0cm,clip,width=0.33\textwidth]{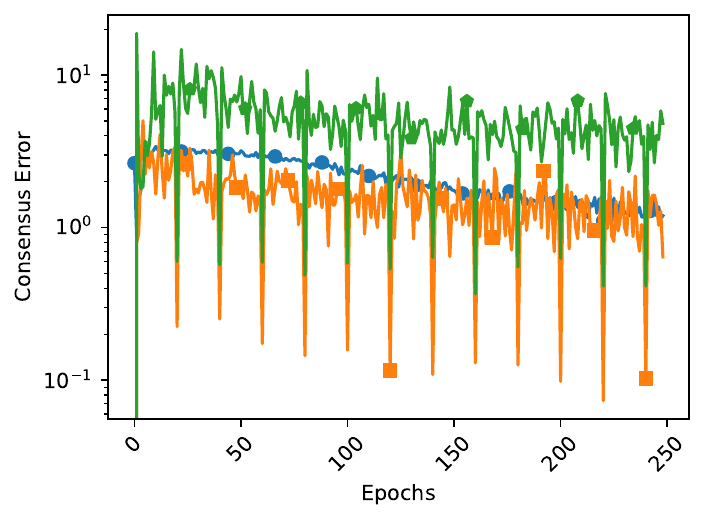}
			\caption{AdaGrad agent scaling with ring topology.}
			\label{fig:rank_comp1}
		\end{subfigure}\\
		\begin{subfigure}{0.99\textwidth}
			\includegraphics[trim=0 0 0 0cm,clip,width=0.32\textwidth]{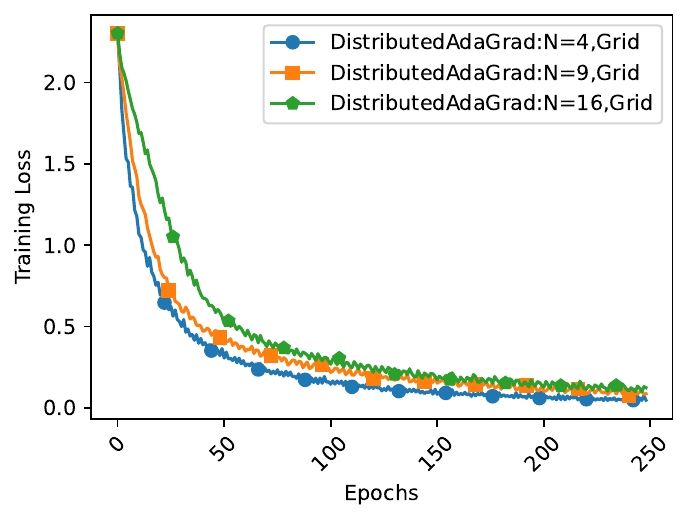}
			\includegraphics[trim=0 0 0 0cm,clip,width=0.32\textwidth]{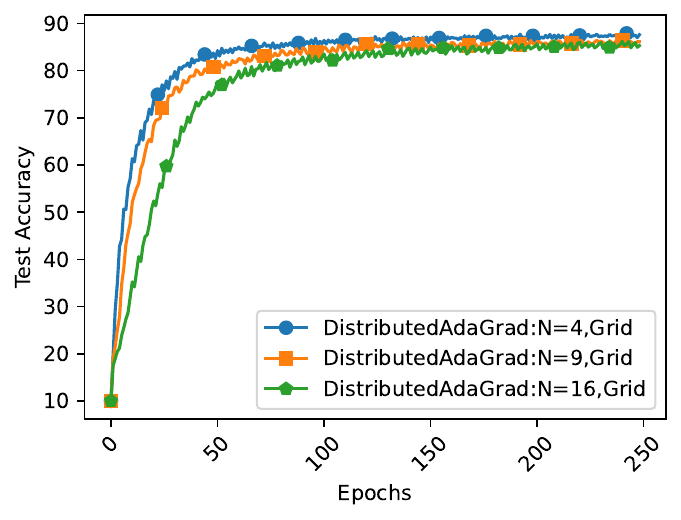}
			\includegraphics[trim=0 0 0 0cm,clip,width=0.33\textwidth]{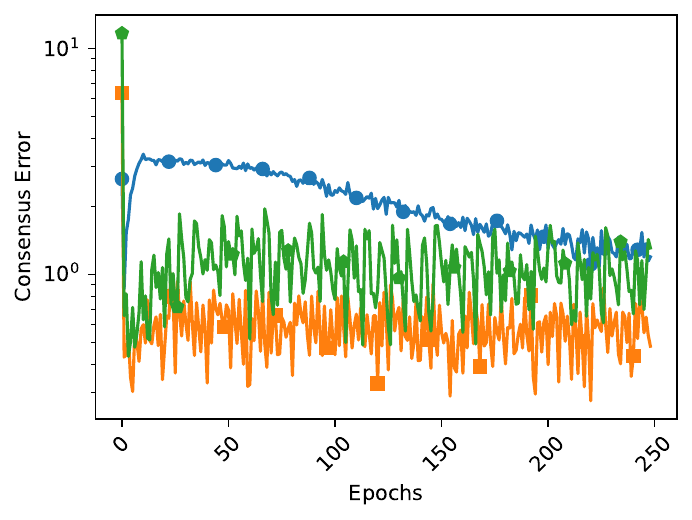}
			\caption{AdaGrad agent scaling with grid topology.}
			\label{fig:grid_comp1}
        \end{subfigure}\\
		\begin{subfigure}{0.99\textwidth}
			\includegraphics[trim=0 0 0 0cm,clip,width=0.32\textwidth]{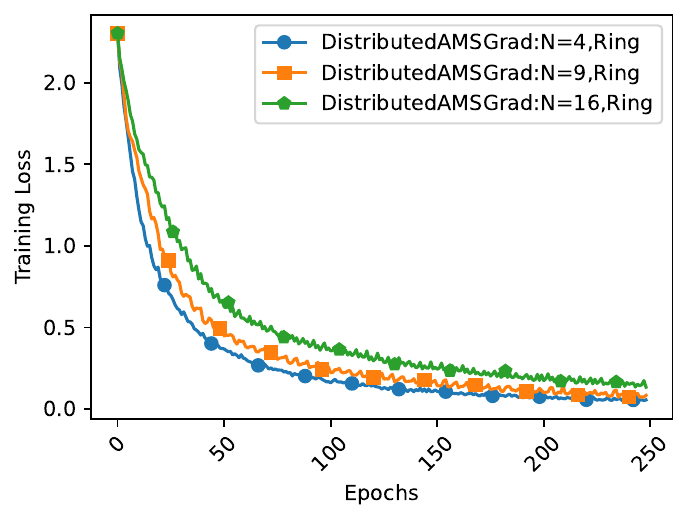}
			\includegraphics[trim=0 0 0 0cm,clip,width=0.32\textwidth]{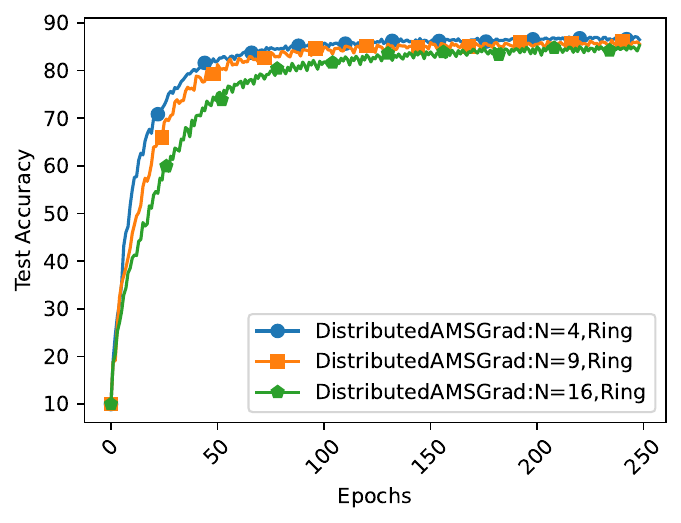}
			\includegraphics[trim=0 0 0 0cm,clip,width=0.33\textwidth]{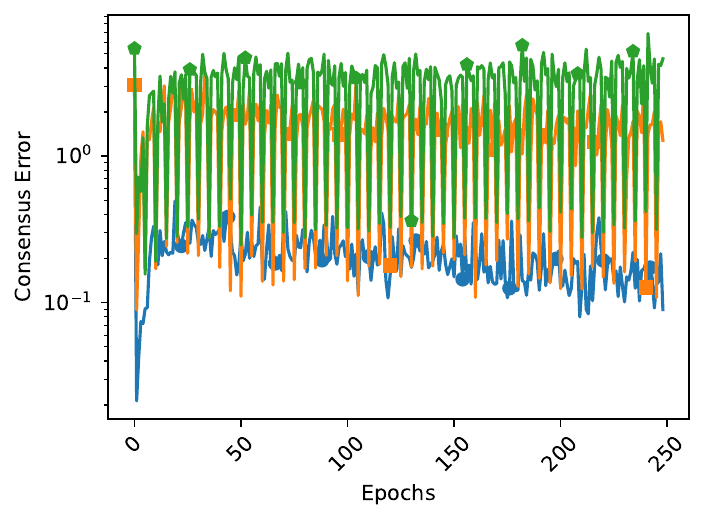}
			\caption{AMSGrad agent scaling with ring topology.}
			\label{fig:rank_comp1}
		\end{subfigure}
		\begin{subfigure}{0.99\textwidth}
			\includegraphics[trim=0 0 0 0cm,clip,width=0.32\textwidth]{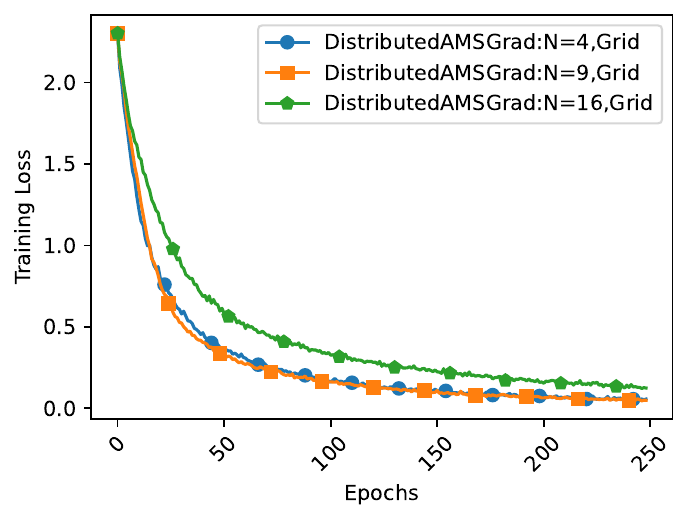}
			\includegraphics[trim=0 0 0 0cm,clip,width=0.32\textwidth]{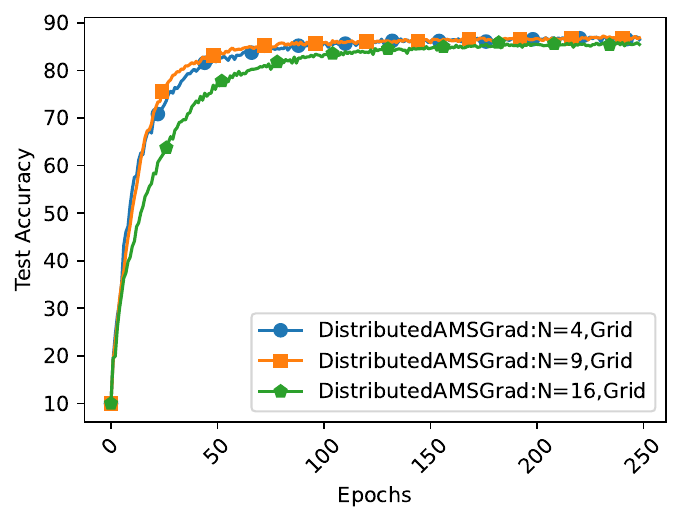}
			\includegraphics[trim=0 0 0 0cm,clip,width=0.33\textwidth]{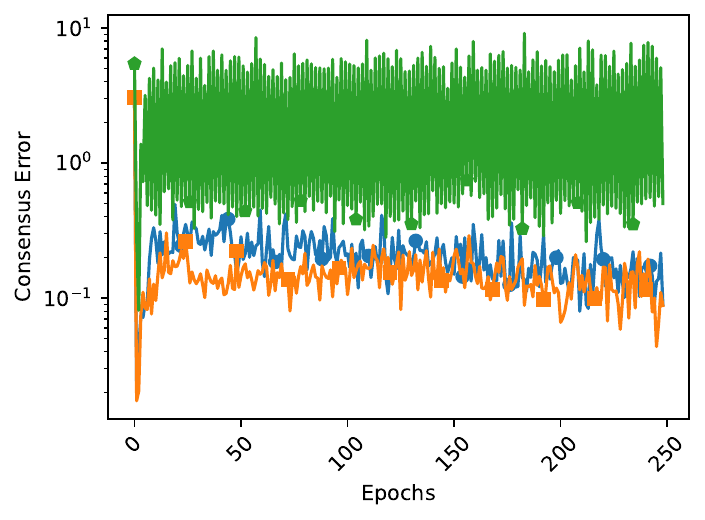}
			\caption{AMSGrad agent scaling with grid topology.}
			\label{fig:grid_comp1}
		\end{subfigure}
	\end{center}
	\caption{\textbf{Additional Scaling Results:} Plotted above are the training loss, test accuracy, and consensus error of CIFAR-10 when scaling to 4, 9, and 16 agents using ring topology and 2D grid topology with the AdaGrad and AMSGrad optimizers. All experiments were run with $K=20$ local updates per communication round.}
	\label{fig:scaling}
\end{figure}




 \end{document}